\documentclass[11pt, a4paper,USenglish]{article}

\usepackage{tikz}
\usetikzlibrary{arrows.meta}
\usepackage{graphicx}
\usepackage{subcaption}
\usepackage{amsmath,amssymb,amsthm}
\usepackage{mathtools}
\usepackage[ruled,vlined]{algorithm2e}
\usepackage{tcolorbox}
\usepackage{pgfplots}
\usepackage{hyperref}
\usepackage{subfiles} 
\usepackage{pdflscape}
\usepackage{cleveref}
\usepackage{float}

\usepackage[margin=1in]{geometry} % Standard 1-inch margins
\usepackage{setspace}
\theoremstyle{plain}
\newtheorem{theorem}{Theorem}[section] % Numbered per section (e.g., Theorem 1.1)
\newtheorem{lemma}[theorem]{Lemma}     % Shares numbering with Theorem
\newtheorem{proposition}[theorem]{Proposition}
\newtheorem{corollary}[theorem]{Corollary}
\newtheorem{conjecture}[theorem]{Conjecture}

\theoremstyle{definition}
\newtheorem{definition}[theorem]{Definition}
\newtheorem{example}[theorem]{Example}

\theoremstyle{remark}
\newtheorem{remark}[theorem]{Remark}
\newtheorem*{remark*}{Remark}           % Unnumbered version

\tikzstyle{wViolet}=[circle, draw, fill=violet, inner sep=0pt, minimum width=5pt]
\tikzstyle{wWhite}=[circle, draw, fill=white, inner sep=0pt, minimum width=5pt]

\newcommand{\gcr}{\mathrm{gcr}}
\newcommand{\mlt}{\mathrm{mlt}}
\newcommand{\wmlt}{\mathrm{wmlt}}
\newcommand{\VR}{\mathcal{VR}}

\title{A computational approach to maximum likelihood thresholds for colored Gaussian graphical models}

\author{
  Roser Homs\thanks{Universitat Politècnica de Catalunya - BarcelonaTech (UPC), \href{mailto:roser.homs@upc.edu}{roser.homs@upc.edu}}
  \and
  Olga Kuznetsova\thanks{Aalto University, \href{mailto:kuznetsova.olga@gmail.com}{kuznetsova.olga@gmail.com}}
  \and
  Bernadette J. Stolz\thanks{Department of Machine Learning and Systems Biology, Max Planck Institute of Biochemistry; Munich Center for Machine Learning; Laboratory for Topology and Neuroscience, School of Life Sciences, École Polytechnique Fédérale de Lausanne, 1015 Lausanne, Switzerland, \href{mailto:stolz@biochem.mpg.de}{stolz@biochem.mpg.de}}
}

\makeatletter
\let\old@includegraphics\includegraphics
\renewcommand{\includegraphics}[2][]{%
  \def\temp{#2}%
  \ifx\temp\ccbyname\else
    \old@includegraphics[#1]{#2}%
  \fi
}
\def\ccbyname{cc-by}
\makeatother 

\begin{document}

\maketitle

\begin{abstract}
Gaussian graphical models (GGMs) are essential tools for interpretable structure learning. However, in high-dimensional, small-sample regimes, the available data is often insufficient for the maximum likelihood estimator to exist. Colored Gaussian graphical models (CGGMs) mitigate this limitation by imposing symmetry constraints through graph coloring, which reduces the required sample size. This minimal number of observations needed to guarantee that the estimator exists almost surely is defined as the maximum likelihood threshold (MLT). Here, we address the computation of the MLT for CGGMs by focusing on its geometric formulation: finding the minimum rank of a sample covariance matrix such that its projection lies almost surely within the interior of the cone of sufficient statistics. We establish a unified theoretical framework, extending results from uncolored to colored models and introducing new symbolic algorithms. Furthermore, we present a computational study integrating sampling with topological data analysis (TDA) to investigate the local geometry of the cone of sufficient statistics. Our results demonstrate the potential of TDA to overcome the computational bottlenecks of traditional symbolic algebraic methods–particularly Gröbner basis computations–in analyzing the likelihood geometry of CGGMs.
\end{abstract}

\section{Introduction}

Gaussian graphical models (GGMs) describe multivariate normal distributions whose conditional independence structure is encoded by a graph: each node represents a random variable, and missing edges correspond to conditional independences. A fundamental methodological question underlying the application of GGMs concerns sample size: what is the minimal number of observations guaranteeing that the maximum likelihood estimator (MLE) exists almost surely? This number is called the maximum likelihood threshold (MLT) and is relevant in the fields characterized by the high-dimensional, small-sample regime, such as the study of biological networks, e.g. gene regulatory networks \cite{zhao2019cancer,wille2004sparse}, biochemical pathways \cite{krumsiek2011gaussian}, protein interactions \cite{wang2016fastggm}, multi-omics \cite{shutta2023dragon}.

In \emph{colored Gaussian graphical models} (CGGMs), introduced in \cite{Lauritzen2008}, vertex or edge colors impose equality among the corresponding entries in the inverse of the covariance matrix. These symmetry constraints often lead to a reduction in the MLT.

For uncolored chordal graphs, the MLT coincides with the size of the maximal clique \cite{Buhl}. In the general uncolored setting, ML thresholds have been studied using algebraic and convex geometry \cite{blekherman2019maximum,Uhler}, algebraic matroids \cite{gross2018maximum}, rigidity theory \cite{Bernstein2024Rigidity,Bernstein2022MLT}, and score matching estimators \cite{forbes2015linear}. Nevertheless, for non-chordal or colored graphs, the behavior of ML thresholds remains poorly understood, and efficient computational approaches are lacking. 

Here, we address the geometric formulation of the MLT problem for CGGMs: what is the minimum rank of a sample covariance matrix such that its sufficient statistics lie almost surely in the interior of the cone of sufficient statistics? The contributions of this paper are:

\begin{itemize}
\item A unified theoretic framework for ML thresholds of CGGMs and related quantities (weak MLT and generic completion rank), extending existing results from uncolored to colored models, introducing new symbolic algorithms that help resolve open cases, and standardizing notation.
\item A computational study that introduces  topological data analysis (TDA) computed locally as a method to study (weak) MLT. Our work demonstrates that TDA-based approaches have the potential to overcome the computational limitations of symbolic algebraic methods, particularly those relying on Gröbner basis computations, when investigating the likelihood geometry of colored Gaussian graphical models. At the same time, it highlights the challenge of obtaining representative sufficient statistics for computational studies, as those arising from small-sample observations tend to concentrate near the boundary of the cone of sufficient statistics rather than exploring its interior. We address this challenge by developing an algorithm that emulates uniform sampling by tracing the map between positive semidefinite matrices and their projection into the cone of sufficient statistics.
\end{itemize}

We investigate the performance of our algorithms in the case of the colored $4$-cycle, which has the benefit of having been studied extensively in \cite{Uhler}, while also presenting challenges which have not been overcome by existing methods. Together, these results strengthen the largely underexplored connection between algebraic statistics and topological data analysis.

\section{Geometry of MLE existence for Colored Gaussian Graphical Models}
\label{sec:likelihood geometry}

In this section, we introduce the theoretical background underlying CGGMs. Our goal is to provide a formal description of the geometry that governs the existence of MLE. For background on statistical graphical models and their algebraic treatment, see \cite[Chapter 5]{maathuis2018handbook}. 

\subsection{Colored Gaussian graphical models} \label{section: colored Gaussian graphical models}
Let \(\mathrm{Sym}(m)\) denote the vector space of real symmetric \(m\times m\) matrices and let \(\mathrm{PD}(m)\) (resp. \(\mathrm{PSD}(m)\)) be the full-dimensional open (resp. closed) convex cone of \(m\times m\) positive definite (resp. semidefinite) matrices.
We denote by \(\mathrm{Sym}(m,n)\) and \(\mathrm{PSD}(m,n)\), with \(n<m\), the set of \(m\times m\) symmetric and positive semidefinite matrices, resp., of rank \(\leq n\). 
Consider a random vector \(X \in \mathbb{R}^m\) following a centered multivariate Gaussian distribution with covariance matrix
\(\Sigma \in \mathrm{PD}(m)\). Let $K$ denote the concentration matrix \(K=\Sigma^{-1}\).

\begin{definition}
\label{def:coloredGraph}
A \emph{colored graph} \(G=(V,E,\lambda)\) is an undirected graph on vertex set \(V = [m]\) and edge set \(E\) together with a coloring function \(\lambda: V\sqcup E \to [d]\) such that \(\lambda(V)\cap\lambda(E)=\emptyset\).
\end{definition}

Colored graphs encode equalities among the entries of the concentration matrix \(K = \Sigma^{-1}\). 
 Note that the coloring function in \Cref{def:coloredGraph} is not allowed to assign the same color to both vertices and edges. See \cite[Section 3.1]{Lauritzen2008} for more details on the statistical interpretation of the vertex and edge symmetry constraints imposed by the coloring. 
 
\begin{definition}
Consider the \(d\)-dimensional linear subspace 
\begin{multline}
 \mathcal{L}_G = \{K=(k_{ij}) \in \mathrm{Sym}(m) \;|\;
k_{ij}=0 \text{ for } (i,j)\notin E,\\
 k_{ii} = k_{jj} \text{ if }  \lambda(i)= \lambda(j) \text{ for }i,j\in V,\\
\text{ and } k_{ij} = k_{lm} \text{ if }  \lambda(i,j)= \lambda(l,m)  \text{ for }(i,j),(l,m)\in E\}.   
\end{multline}
The \emph{cone of concentration matrices} associated to the colored graph \(G\) is \(\mathcal{K}_G=\mathcal{L}_G\cap\mathrm{PD}(m)\).
\end{definition}

\begin{remark}\label{remark: relatively open cone}
The cone of concentration matrices \(\mathcal{K}_G=\mathcal{L}_G\cap\mathrm{PD}(m)\) is a relatively open convex cone in \(\mathcal{L}_G\). By considering a basis \(K_1,\dots,K_d\) of  \(\mathcal{L}_G\), one can identify \(\mathcal{K}_G\) with the open convex cone
\(\lbrace (\lambda_1,\dots,\lambda_d)\in\mathbb{R}^d: \lambda_1K_1+\dots+\lambda_dK_d\in\mathrm{PD}(m)\rbrace\)
of \(\mathbb{R}^d\).
For more details on the underlying convex (algebraic) geometry, see \cite[Section 2]{UhlerSturmfels}.
\end{remark}

\begin{definition} A \emph{colored Gaussian graphical model} is the set of covariance matrices 
\(\mathcal{M}_G = \{\Sigma \in \mathrm{PD}(m) \;|\; \Sigma^{-1} \in \mathcal{K}_G\}.\)
\end{definition}

\begin{remark}
Note that the model $\mathcal{M}_G$ is the intersection of the algebraic variety \(\mathcal{L}_G^{-1}\) with the cone of positive definite matrices $\mathrm{PD}(m)$, where \(\mathcal{L}_G^{-1}\) denotes the Zariski closure of $\lbrace \Sigma\in\mathrm{Sym(m):\det\Sigma\neq 0 \mbox{ and }\Sigma^{-1}\in\mathcal{L}_G}\rbrace$.
\end{remark}

\begin{figure}[h]
\centering

%---------------- LEFT: ----------------%
\begin{minipage}{0.45\textwidth}
\centering
\begin{tikzpicture}
\draw[thick, cyan] (0,0) -- (2,0);
 
	\node[left] at (0,0) {1};
    \node[right] at (2,0) {2};
	
\node[wViolet] at (0,0) {};
\node[wViolet] at (2,0) {};
\end{tikzpicture}
\end{minipage}
\hfill
%---------------- RIGHT: ----------------%
\begin{minipage}{0.45\textwidth}
\centering
\begin{tikzpicture}[scale=1.2]

    % vertices
    \node (v1) at (0,0)    {};
    \node (v2) at (1,0)    {};
    \node (v3) at (1,1)    {};
    \node (v4) at (0,1)    {};

    % labels
    \node[left]  at (v1) {4};
    \node[right] at (v2) {3};
    \node[right] at (v3) {2};
    \node[left]  at (v4) {1};

    % colored edges
    \draw[thick, cyan]   (v1) -- (v2);
    \draw[thick, red]    (v2) -- (v3);
    \draw[thick, green]  (v3) -- (v4);
    \draw[thick, gray] (v4) -- (v1);

    % colored vertices
    \node[wViolet] at (v1) {};
    \node[wViolet] at (v2) {};
    \node[wViolet] at (v3) {};
    \node[wViolet] at (v4) {};
\end{tikzpicture}
\end{minipage}
\caption{2-cycle and 4-cycle with equally colored vertices and all edges colored differently. See \Cref{table:3cycles} and \Cref{table: all graphs} for examples of other colorings on 3 and 4-cycles.}
\label{fig:cycles}
\end{figure}

\begin{example}\label{ex:cycles} Consider the colored 2-cycle \(C_2\) and 4-cycle \(C_4\) in \Cref{fig:cycles}. Then
\[\mathcal{L}_{C_2}=\left\lbrace\begin{pmatrix}
     \lambda_1 & \lambda_2\\
     \lambda_2 & \lambda_1
    \end{pmatrix}: \lambda_i\in\mathbb{R},\,i\in[2]\right\rbrace\text{ and }
\mathcal{L}_{C_4}=\left\lbrace\begin{pmatrix}
     \lambda_1 & \lambda_2 & 0 & \lambda_5\\
     \lambda_2 & \lambda_1 & \lambda_3 & 0\\
     0 & \lambda_3 & \lambda_1 & \lambda_4\\
     \lambda_5 & 0 & \lambda_4 & \lambda_1
    \end{pmatrix}: \lambda_i\in\mathbb{R},\,i\in[5]\right\rbrace.\]

\end{example}

\subsection{The cone of sufficient statistics}
Given \(n\) independent and identically distributed observations \(X_1,\dots,X_n\) drawn from a centered multivariate Gaussian distribution, we consider their sample covariance matrix
\(
S = \frac{1}{n}\sum_{i=1}^n X_i X_i^{\mathsf{T}}.\)
Up to additive constants, its log-likelihood function is
\begin{equation}\label{eq:loglike}
\ell_S(K) = \log\det K - \langle S, K \rangle,
\end{equation}
\noindent
 where \(\langle S,K \rangle = \mathrm{tr}(SK)\) is the trace inner product on \(\mathrm{Sym}(m)\). We describe the likelihood as a function of the concentration matrix because \(\ell_S(K)\) is concave on \(\mathrm{PD}(m)\) as opposed to \(\ell_S(\Sigma)\), see \cite[Proposition 3.1]{uhler2018gaussian}. Moreover, since \(\mathcal{K}_G\) is a convex cone, then for $S\in\mathrm{PD}(m)$,
 \begin{equation}\label{eq:mle}
  \begin{array}{ll}
\text{maximize} & \log\det K - \langle S, K \rangle,\\
\text{subject to} & K \in \mathcal{K}_G.
\end{array}  
\end{equation}
\noindent is a convex optimization problem with a unique maximum \(\hat{K}\), the MLE for the concentration matrix. The MLE for the covariance matrix is \(\hat{\Sigma}=\hat{K}^{-1}\). See \cite[Section 3]{uhler2018gaussian} for an introduction to Gaussian likelihood and convex optimization.

Given a basis \(K_1,\dots,K_d\) of \(\mathcal{L}_G\), one can decompose \(\langle S,K\rangle=\sum_{i=1}^d\lambda_i\langle S,K_i\rangle\), where \(K=\sum_{i=1}^d\lambda_iK_i\) and $\lambda_i\in\mathbb{R}$. Therefore, all information from the data required to perform the convex optimization problem \eqref{eq:mle} is encoded in the projection map
\begin{equation}\label{eq:proj}
\pi_G : \mathrm{Sym}(m) \longrightarrow \mathbb{R}^d, \qquad
S \longmapsto (\langle S,K_1\rangle, \dots, \langle S,K_d\rangle).
\end{equation}

\begin{example}\label{ex:c4proj}
Consider the 4-cycle \(C_4\) with equally colored vertices in \Cref{ex:cycles}. Choose the basis \(K_1,\dots,K_5\) of \(\mathcal{L}_{C_4}\) where $K_i$ is given by $\lambda_j=\delta_{ij}$.
Then
\[
\pi_G : \mathrm{Sym}(4) \longrightarrow \mathbb{R}^5, \qquad
S=(s_{ij}) \longmapsto (\mathrm{tr}(S),2s_{12},2s_{14},2s_{23},2s_{34}).
\]
\end{example}

\begin{definition} The \emph{fiber of \(S\) with respect to \(G\)} is 
\[\operatorname{fiber}_G(S)
   = \{S' \in \mathrm{Sym}(m) \;|\;
      \pi_G(S')=\pi_G(S)\}.\]
\end{definition}
\noindent
For any $S'\in\operatorname{fiber}_G(S)$,  
$\langle S',K\rangle=\langle S,K\rangle$ for any $K\in\mathcal{L}_G$ or, equivalently, $\ell_{S'}=\ell_S$ on $\mathcal{L}_G$.

\begin{definition}
The \emph{cone of sufficient statistics} is the open convex cone \(
\mathcal{C}_G := \pi_G(\mathrm{PD}(m))\) in \(\mathbb{R}^d\).
\end{definition}

As any sample covariance matrix is a positive semi-definite matrix by construction, we seek to study the projection of the PSD cone. As established in \cite{UhlerSturmfels}, the euclidean closure of the cone of sufficient statistics satisfies \(\mathrm{cl}(\mathcal{C}_G)=\pi_G(\mathrm{PSD}(m))\). Projections of PD matrices are interior points of \(\mathcal{C}_G\). 
To precisely characterize when the projections of PSD matrices yield interior versus boundary points in this cone, we include the following result derived from fundamental principles of convex geometry.

\begin{lemma}\label{lem:boundary}
Let $S\in\mathrm{PSD}(m)$. Then
$\pi_G(S)$ is on the boundary of $\mathcal{C}_G$ iff $\langle S,K\rangle=0$ for some non-zero $K\in\mathcal{L}_G\cap\mathrm{PSD}(m)$. Otherwise $\pi_G(S)$ is in the interior.
\end{lemma}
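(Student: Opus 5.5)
The plan is to use duality for the cone $\mathcal{C}_G$. Identify $\mathcal{L}_G$ with $\mathbb{R}^d$ through the basis $K_1,\dots,K_d$, writing $K=\sum_i\lambda_iK_i$. Then for any $S\in\mathrm{Sym}(m)$ we have $\langle \lambda,\pi_G(S)\rangle_{\mathbb{R}^d}=\langle S,K\rangle$, so $\pi_G$ is the adjoint of the inclusion $\mathcal{L}_G\hookrightarrow\mathrm{Sym}(m)$ under these identifications.

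First I compute the dual cone of $\mathrm{cl}(\mathcal{C}_G)=\pi_G(\mathrm{PSD}(m))$, the closure identity being quoted from \cite{UhlerSturmfels}. A vector $\lambda$ lies in this dual iff $\langle S,K\rangle\ge 0$ for all $S\in\mathrm{PSD}(m)$. By self-duality of the PSD cone, this holds iff $K\in\mathcal{L}_G\cap\mathrm{PSD}(m)$. So the dual cone is exactly $\mathcal{L}_G\cap\mathrm{PSD}(m)$, read in $\lambda$-coordinates.

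Next I use the standard fact that a closed convex cone $C$ with nonempty interior has boundary points exactly those $x\in C$ admitting a nonzero dual vector $y$ with $\langle y,x\rangle=0$. $\mathcal{C}_G$ is full-dimensional: the coordinates $\langle S,K_i\rangle$ are independent linear functionals, since the $K_i$ are linearly independent, so $\pi_G$ is surjective and open. Also $\mathcal{C}_G$ is the interior of its closure, because it is an open convex set.

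Both directions then follow.
\begin{itemize}
\item[($\Leftarrow$)] Suppose $0\neq K\in\mathcal{L}_G\cap\mathrm{PSD}(m)$ and $\langle S,K\rangle=0$. Then the linear functional $\lambda$ is nonnegative on $\mathrm{cl}(\mathcal{C}_G)$ and vanishes at $\pi_G(S)$. It is nonzero because $K\neq0$ and the $K_i$ form a basis. Hence $\pi_G(S)$ lies on a supporting hyperplane and cannot be interior: any neighborhood contains the point $\pi_G(S)-\epsilon\lambda$, where the functional is negative.
\item[($\Rightarrow$)] Suppose $\pi_G(S)\in\mathrm{cl}(\mathcal{C}_G)$ is a boundary point. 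The supporting hyperplane theorem gives a nonzero $\lambda$ with $\langle\lambda,x\rangle\ge0$ on $\mathrm{cl}(\mathcal{C}_G)$ and $\langle\lambda,\pi_G(S)\rangle=0$. For a cone, the supporting hyperplane at a boundary point can be taken through the origin, since the cone is invariant under positive scaling. By the dual computation, the corresponding $K$ is a nonzero element of $\mathcal{L}_G\cap\mathrm{PSD}(m)$ with $\langle S,K\rangle=0$.
\end{itemize}
Since $\pi_G(S)$ always lies in $\mathrm{cl}(\mathcal{C}_G)$, it is interior exactly when it is not on the boundary, which gives the final sentence of the lemma.

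The main obstacle is the supporting-hyperplane-through-the-origin step and the self-duality identification, made carefully in coordinates. I expect the rest to be routine.
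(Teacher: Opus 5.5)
Your proof is correct and uses the same convex-duality idea as the paper, which identifies $\mathrm{cl}(\mathcal{C}_G)$ with the dual of $\mathcal{L}_G\cap\mathrm{PSD}(m)$ and reads off the boundary from it; you make this rigorous by computing $(\mathrm{cl}\,\mathcal{C}_G)^\ast=\mathcal{L}_G\cap\mathrm{PSD}(m)$ via self-duality of $\mathrm{PSD}(m)$ and using supporting hyperplanes through the origin. Your closed-cone formulation also sidesteps a slip in the paper's intermediate formula, which describes $\mathcal{C}_G$ by strict inequalities against $\mathcal{L}_G\cap\mathrm{PD}(m)$; that set is too large, since for the colored $2$-cycle the point $\pi_{C_2}(S)=(1,1)$ satisfies those inequalities yet lies on the boundary, so strictness must instead be imposed against every nonzero $K\in\mathcal{L}_G\cap\mathrm{PSD}(m)$.
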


\begin{proof}
The cone of sufficient statistics \(\mathcal{C}_G\) can be viewed inside \(\mathrm{Sym}(m)/\mathcal{L}_G^\perp\simeq\mathbb{R}^d\) as the open convex dual of the cone of concentration matrices \(\mathcal{K}_G=\mathcal{L}_G\cap\mathrm{PD}(m)\), namely \[\mathcal{C}_G=\lbrace S\in\mathrm{Sym}(m)/\mathcal{L}_G^\perp: \langle S,K\rangle>0 \mbox{ for all }K\in\mathcal{L}_G\cap\mathrm{PD}(m)\rbrace=\mathcal{K}_G^\ast\] and 
\(\mathrm{cl}(\mathcal{C}_G)=\lbrace S\in\mathrm{Sym}(m)/\mathcal{L}_G^\perp: \langle S,K\rangle\geq 0 \mbox{ for all }K\in\mathcal{L}_G\cap\mathrm{PSD}(m)\rbrace=\mathrm{cl}(\mathcal{K}_G)^\ast.\)
Therefore, the boundary of the convex cone of sufficient statistics can be described as  
\(\partial\mathcal{C}_G=\lbrace S\in\mathrm{Sym}(m)/\mathcal{L}_G^\perp: \langle S,K\rangle=0 \mbox{ for some non-zero }K\in\mathcal{L}_G\cap\mathrm{PSD}(m)
\rbrace\).
\end{proof}

\begin{remark}\label{rem:convex} Note that in some references \cite[Chapter 2.6]{boyd2004convex}, the convex dual $\mathcal{K}_G^\ast$ is defined as $\mathrm{cl}(\mathcal{C}_G)$ and $\mathcal{C}_G$ is referred to as $\mathrm{int}(\mathcal{K}_G^\ast)$. For more details, see \cite[Sections 1,2]{UhlerSturmfels} and \cite[Chapter 8.3]{sullivant2018algebraic}.
\end{remark}

\subsection{Existence of the MLE}

The following classical result describes existence of the MLE geometrically, see \cite[Theorem 5.2]{uhler2018gaussian}:

\begin{theorem}\label{theorem: characterisation of MLE}
The MLE \(\hat{\Sigma}\) exists if and only if \(\operatorname{fiber}_G(S)\cap\mathrm{PD}(m)\neq \emptyset\).  
If it exists, \(\hat{\Sigma}\) is the unique matrix in the fiber satisfying
\(\hat{\Sigma}^{-1} \in \mathcal{K}_G\).
\end{theorem}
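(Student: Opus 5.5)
The plan is to treat \eqref{eq:mle} as a concave maximization over the relatively open cone $\mathcal{K}_G$ and read off both directions from its first-order conditions and from a recession-direction argument. Throughout, $S$ is any $\mathrm{PSD}(m)$ matrix, so the objective $\ell_S$ is defined on $\mathcal{K}_G$ even when $S$ is singular. First I record the gradient. Fix the basis $K_1,\dots,K_d$ of $\mathcal{L}_G$ and write $K=\sum_i\lambda_iK_i$. Since $\partial_{\lambda_i}\log\det K=\langle K^{-1},K_i\rangle$, we get
\[\partial_{\lambda_i}\ell_S(K)=\langle K^{-1},K_i\rangle-\langle S,K_i\rangle.\]
Hence $K\in\mathcal{K}_G$ is a critical point iff $\pi_G(K^{-1})=\pi_G(S)$, that is, iff $K^{-1}\in\operatorname{fiber}_G(S)$. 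Because $\log\det$ is strictly concave on $\mathrm{PD}(m)$ and $\mathcal{K}_G$ is convex and relatively open in $\mathcal{L}_G$, a maximizer exists iff a critical point exists, and it is then unique. This already gives the second sentence of the theorem: if $\hat K$ is the maximizer, then $\hat\Sigma=\hat K^{-1}$ lies in the fiber, is positive definite, and satisfies $\hat\Sigma^{-1}\in\mathcal{K}_G$. Conversely, any $\Sigma$ in the fiber with $\Sigma^{-1}\in\mathcal{K}_G$ gives a critical point $\Sigma^{-1}$, so it equals $\hat\Sigma$.

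The direction ``MLE exists $\Rightarrow$ fiber meets $\mathrm{PD}(m)$'' is now immediate, since $\hat\Sigma$ witnesses it. For the converse, suppose $S'\in\operatorname{fiber}_G(S)\cap\mathrm{PD}(m)$. Then $\ell_S=\ell_{S'}$ on $\mathcal{L}_G$, so it suffices to show that $\ell_{S'}$ attains its supremum on $\mathcal{K}_G$. I would prove that the superlevel set $\{K\in\mathcal{K}_G:\ell_{S'}(K)\ge \ell_{S'}(I')\}$ is compact, where $I'$ is some fixed point of $\mathcal{K}_G$. (This set is nonempty because the identity lies in $\mathcal{L}_G$: all diagonal entries equal and off-diagonal entries zero respect any coloring.)

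Compactness is the main obstacle. The set must stay away from $\partial\mathrm{PD}(m)$ and must be bounded.
\begin{itemize}
\item \emph{Away from the boundary.} If $K\to K_0\in\partial\mathrm{PD}(m)$ with $K$ bounded, then $\log\det K\to-\infty$ while $\langle S',K\rangle$ stays bounded, so $\ell_{S'}\to-\infty$.
\item \emph{Bounded.} Since $S'$ is positive definite, $\langle S',K\rangle\ge\mu_{\min}(S')\,\mathrm{tr}(K)\ge\mu_{\min}(S')\|K\|$ for $K$ positive definite. Also $\log\det K\le m\log\|K\|$. Hence $\ell_{S'}(K)\le m\log\|K\|-c\|K\|\to-\infty$ as $\|K\|\to\infty$.
\end{itemize}
So the superlevel set is closed in $\mathrm{Sym}(m)$ and bounded, a continuous function attains its maximum on it, and this maximizer is the MLE. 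Positive definiteness of $S'$ is used exactly in the boundedness step; this is where the hypothesis enters, and it matches the characterization of $\mathcal{C}_G$ as the interior of the dual cone in Lemma~\ref{lem:boundary}.
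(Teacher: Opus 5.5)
Your proof is correct and complete. The paper gives no argument of its own for this statement; it quotes \cite[Theorem 5.2]{uhler2018gaussian}, and the standard argument behind that result is convex duality. There, \eqref{eq:mle} is paired with its Lagrange dual, which is to minimize $-\log\det\Sigma-m$ over $\operatorname{fiber}_G(S)\cap\mathrm{PD}(m)$. The primal is strictly feasible because the identity lies in $\mathcal{K}_G$. Dual feasibility is exactly the condition $\operatorname{fiber}_G(S)\cap\mathrm{PD}(m)\neq\emptyset$, and it is automatically strict since $\mathrm{PD}(m)$ is open. The optimality conditions $\hat\Sigma\in\operatorname{fiber}_G(S)$ and $\hat\Sigma^{-1}\in\mathcal{K}_G$ then tie the two optima together.

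You instead stay entirely in the primal. The gradient identity $\partial_{\lambda_i}\ell_S(K)=\langle K^{-1}-S,K_i\rangle$ plays the role of those optimality conditions, and strict concavity gives uniqueness. Existence comes from coercivity of $\ell_{S'}$ for a positive definite $S'$ in the fiber, through compactness of a superlevel set. Your route is elementary and self-contained, and it pinpoints the single place where positive definiteness of a fiber element is needed, namely boundedness of that superlevel set. The duality route gives a bit more: it identifies $\hat\Sigma$ as the unique maximizer of $\log\det$ on $\operatorname{fiber}_G(S)\cap\mathrm{PD}(m)$, i.e.\ the maximum-entropy generalized completion in the sense of \Cref{subsec:CompletionProblems}. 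It also sits naturally beside the dual-cone description of $\mathcal{C}_G$ in \Cref{lem:boundary}.
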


An immediate corollary is that the MLE exists for a sample covariance matrix $S$ if and only if \(\pi_G(S)\) lies in the interior of the cone of sufficient statistics \(\mathcal{C}_G\). In particular, the MLE always exists for any \(S\in\mathrm{PD}(m)\). 
By construction, a sample covariance matrix \(S\) is always positive semidefinite, hence \(\pi_G(S) \in \mathrm{cl}(\mathcal{C}_G)\).  

\begin{tcolorbox}[title=MLE existence test via the cone of sufficient statistics]\label{box:MLEtest}
Given a colored GGM \(\mathcal{M}_G\) and a sample covariance matrix \(S\in\mathrm{PSD}(m)\):
\begin{itemize}
    \item \(\pi_G(S)\in\mathcal{C}_G\Leftrightarrow\) the MLE exists for $S$
    \item \(\pi_G(S)\in\partial\mathcal{C}_G\Leftrightarrow\) the MLE does not exist for $S$
\end{itemize}
\end{tcolorbox}

\begin{definition}
The \emph{maximum likelihood threshold} of \(G\), \(\operatorname{mlt}(G)\), is the smallest \(n\) such that the MLE exists with probability one for sample covariance matrices of rank \(n\). The \emph{weak maximum likelihood threshold}, \(\operatorname{wmlt}(G)\), is the smallest  \(n\) such that the MLE exists with positive probability for samples of rank \(n\).
\end{definition}

\begin{remark}
Generically, the sample covariance matrix of $n$ observations has rank \(n\). Throughout the paper, we refer interchangeably to “rank-$n$
sample covariance matrices” and to “samples of $n$ generic observations” to connect the geometric constraint $S \in PSD(m,n)$ with its statistical sampling interpretation. Moreover, we call a statement \emph{generic} when it holds on a Zariski-open dense set (stronger) to distinguish it from probabilistic \emph{almost sure} statements that hold everywhere except for a positive measure set (weaker). Check \cite[Appendix A]{Bernstein2024Rigidity} for a discussion on the relation among the two notions.
\end{remark}

\begin{tcolorbox}[title=Geometric characterization of ML thresholds]\label{box:MLT}
\begin{itemize}
    \item \(\mlt(G)=\) minimum \(n\) for which \(\pi_G(S)\in\mathcal{C}_G\) for almost every \(S\in\mathrm{PSD}(m,n)\),
    \item \(\wmlt(G)=\) minimum \(n\) for which  \(\pi_G(S)\in\mathcal{C}_G\) for some \(S\in\mathrm{PSD}(m,n)\).
\end{itemize}
\end{tcolorbox}

\begin{remark} Note that existence of the MLE for some $S$ immediately implies the existence of an open set of positive measure that contains a PD matrix $\Sigma\in\operatorname{fiber}_G(S)$ for which MLE exists, see \cite[Definition 5.1]{gross2018maximum}. 
\end{remark}

\begin{example}\label{ex:suffStats} Consider the colored 2-clique \(C_2\) in \Cref{ex:cycles}. In this case, \(\mathcal{L}_{C_2}^{-1}=\mathcal{L}_{C_2}\) and hence the model $\mathcal{M}_{C_2}$ is a convex cone as shown in \Cref{fig:likelihood_geometry_2cycle}. The projection map $\pi_G:\mathrm{Sym}(2)\rightarrow\mathbb{R}^2$ is defined by \(\pi_{C_2}(S)=(s_{11}+s_{22},2s_{12})\), thus 
the cone of sufficient statistics is \(\mathcal{C}_{C_2}=\lbrace(x,y)\in\mathbb{R}^2: 0< \vert y\vert<x\rbrace\).  Since \(\pi_{C_2}(\mathrm{PSD}(2,1))=\pi_{C_2}(\mathrm{PSD}(2))=\mathrm{cl}(\mathcal{C}_{C_2})\), then the projection of a rank 1 matrix lies in the interior of $\mathcal{C}_{C_2}$ almost surely and \(\mlt({C_2})=1\). 
\begin{figure}[htbp]
\centering
\begin{tikzpicture}[>=stealth, scale=0.95]

    % ==========================================
    % 1. TOP-LEFT CONE: Cone with red slice
    % ==========================================
    % Shifted closer to the center (x: -4 -> -2.2, y: 5 -> 4.2)
    \begin{scope}[shift={(-2.2,4.2)}]
        % --- Step 1: Draw filled slice FIRST ---
        \fill[red!30, draw=red!80!black, thick] (0,0) -- (-0.85,2.25) -- (0.85,2.75) -- cycle;
        
        % Label for slice: \mathcal{K}_G
        \node[red!80!black, font=\small] at (0, 1.3) {$\mathcal{K}_{C_2}$};

        % --- Step 2: Draw cone outline SECOND ---
        \draw[thick] (0,2.5) circle [x radius=1.2, y radius=0.35];
        \draw[thick] (-1.2,2.5) -- (0,0) -- (1.2,2.5);

        % Label for top cone: \text{PD}_2 (or \text{PD}(2))
        \node[above=2pt, font=\small] at (0, 2.85) {$\text{PD}(2)$};
    \end{scope}

    % ==========================================
    % 2. TOP-RIGHT CONE: Fiber line, S, \Sigma
    % ==========================================
    % Shifted closer to the center (x: 2 -> 2.2, y: 5 -> 4.2)
    \begin{scope}[shift={(2.2,4.2)}]
        % --- Step 1: Draw filled slice FIRST ---
        \fill[red!30, draw=red!80!black, thick] (0,0) -- (-0.85,2.25) -- (0.85,2.75) -- cycle;
        
        % Label for slice: \mathcal{M}_G
        \node[red!80!black, font=\small] at (-0.1, 1.1) {$\mathcal{M}_{C_2}$};

        % --- Step 2: Fiber Line & Points ---
        \draw[thick, gray!80!black] (-0.8,1.3) -- (1.2,1.8);
        
        % \Sigma inside the slice
        \filldraw[black] (0.05, 1.51) circle (2pt) node[above left=-1pt] {$\Sigma$};
        
        % S on the outer boundary of the cone
        \filldraw[black] (0.818, 1.705) circle (2pt) node[above right=-1pt] {$S$};

        % Label for Fiber (below left of line endpoint)
        \node[below left=-2pt, font=\small] at (-0.8, 1.3) {$\text{fiber}_{C_2}(S)$};

        % --- Step 3: Draw cone outline SECOND ---
        \draw[thick] (0,2.5) circle [x radius=1.2, y radius=0.35];
        \draw[thick] (-1.2,2.5) -- (0,0) -- (1.2,2.5);

        % Label for top cone: \text{PD}_2
        \node[above=2pt, font=\small] at (0, 2.85) {$\text{PD}(2)$};

        % Coordinate anchor for the projection arrow (anchored at S)
        \coordinate (FiberStart) at (0.908, 1.605);
    \end{scope}

    % ==========================================
    % 3. BOTTOM R^2 PLANE: Projected Cone C_G
    % ==========================================
    \begin{scope}[shift={(0,0)}]
        \fill[red!30, draw=red!80!black, thick] (0,0) -- (2.3, 1.8) -- (2.3, -1.8) -- cycle;

        % Label for bottom cone: \mathcal{C}_G
        \node[red!80!black, font=\small] at (1.8, -0.7) {$\mathcal{C}_{C_2}$};

        % Axes R^2
        \draw[->, thick] (-2,0) -- (2.8,0);
        \draw[->, thick] (0,-1.8) -- (0,2.2) node[above] {};
        \node[above left, font=\large] at (-0.2,1.5) {$\mathbb{R}^2$};

        % Point \pi_G(S)
        \filldraw[black] (1.1, 0.4) circle (2pt) node[above] {$\pi_{C_2}(S)$};

        % Coordinate anchor for the receiving arrow
        \coordinate (ProjEnd) at (1.2, 0.4);
    \end{scope}

    % ==========================================
    % PROJECTION MAP ARROW (\pi_G)
    % ==========================================
    \draw[->, very thick, bend left=75] (FiberStart) to node[right=2pt, font=\large] {$\pi_{C_2}$} (ProjEnd);

\end{tikzpicture}
\caption{Diagram representing the likelihood geometry of the colored 2-cycle $C_2$.}
\label{fig:likelihood_geometry_2cycle} 
\end{figure}
\end{example}

\section{Computational algebraic geometry approaches to ML thresholds}\label{sec:alggeo}
Given a colored Gaussian graphical model $\mathcal{M}_G$, a natural way to study existence of the MLE for $n$ observations in $\mathbb{R}^m$ is to study $\pi_G(\mathrm{PSD}(m,n))$, as in the colored 2-cycle from \Cref{ex:suffStats}. However, providing an explicit description of this set becomes infeasible already for very small graphs, such as the colored 4-cycle from \Cref{ex:cycles}. To take advantage of the machinery from algebraic geometry, instead of studying the projection of the convex cone $\mathrm{PSD}(m,n)$, one can study the projection of the algebraic variety \(\mathrm{Sym}(m,n)\).

\subsection{Algebraic relaxation: from convex cones to algebraic varieties}\label{subsec:gcr}

\begin{definition}\label{def:gcr}
   The \emph{generic completion rank of $G$}, $\gcr(G)$, is the minimum $n$ for which \(\dim \pi_G(\mathrm{Sym}(m,n))=d\).
\end{definition}

\begin{remark}\label{rem:low-ramk}
Although the algebraic relaxation to tackle the MLE existence problem was first introduced by Uhler in \cite{Uhler}, the previous definition appeared for the first time in \cite{gross2018maximum}, under the name of rank of $G$. The terminology $\gcr(G)$ was later adopted in \cite{blekherman2019maximum}, given the connection to low rank matrix completion problems in the uncolored setting. For more details on the equivalence of MLE existence and completion problems for colored graphs, see \Cref{subsec:CompletionProblems}.
\end{remark}

The following necessary condition relating the number of color classes $d$ to the generic completion rank follows immediately from the definition:

\begin{theorem}\label{thm:bounds_gcr}
    If \(\gcr(G)=n\), then \(d \leq nm-\binom{n}{2}\).
\end{theorem}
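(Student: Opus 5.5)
The plan is a dimension count. By \Cref{def:gcr}, $\gcr(G)=n$ means in particular that $\dim \pi_G(\mathrm{Sym}(m,n))=d$. The map $\pi_G$ in \eqref{eq:proj} is linear, hence a polynomial map, and the dimension of the image of an irreducible variety under a polynomial map cannot exceed the dimension of the source. (The image is constructible, and its Zariski closure is irreducible of dimension at most that of the source.) It therefore suffices to show that
\[
\dim \mathrm{Sym}(m,n) = nm-\binom{n}{2},
\]
and that $\mathrm{Sym}(m,n)$ is irreducible, or at least a finite union of varieties each of dimension at most $nm-\binom{n}{2}$. Then $d=\dim\pi_G(\mathrm{Sym}(m,n))\le \dim\mathrm{Sym}(m,n)=nm-\binom{n}{2}$.

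To compute $\dim\mathrm{Sym}(m,n)$, I would work over $\mathbb{C}$, where the Zariski-closure and dimension statements are cleanest; real dimension is bounded by complex dimension, so this suffices. Every complex symmetric matrix of rank at most $n$ factors as $S=AA^{\mathsf T}$ with $A\in\mathbb{C}^{m\times n}$. This is Takagi/Autonne factorization, or simply diagonalization of the quadratic form by congruence over $\mathbb{C}$. So $\mathrm{Sym}(m,n)$ is the image of the irreducible affine space $\mathbb{C}^{m\times n}$ under the polynomial map $\phi(A)=AA^{\mathsf T}$. Hence it is irreducible. The same conclusion follows over $\mathbb{R}$ by allowing signatures, $S=A\,\mathrm{diag}(\pm1)A^{\mathsf T}$, which gives finitely many pieces.

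For the dimension, I would compute the generic fiber of $\phi$. If $A$ has full column rank $n$, then $AA^{\mathsf T}=BB^{\mathsf T}$ forces $B=AQ$ with $Q\in O(n,\mathbb{C})$, a group of dimension $\binom{n}{2}$. By the fiber-dimension theorem, $\dim\mathrm{Sym}(m,n)=mn-\binom{n}{2}$. As an independent check, one can use the tangent space: at a rank-$n$ point the image of $d\phi_A(X)=AX^{\mathsf T}+XA^{\mathsf T}$ has dimension $mn-\dim\{X: AX^{\mathsf T}+XA^{\mathsf T}=0\}$. Writing $X=AY+Z$ with the columns of $Z$ complementary to $\mathrm{col}(A)$ shows that this kernel is $\{AY: Y=-Y^{\mathsf T}\}$, which has dimension $\binom n2$.

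The main obstacle is the rigorous fiber/kernel computation, namely showing that the generic fiber is exactly an $O(n)$-orbit. I would handle this by the tangent-space (Jacobian rank) argument, since that only requires a short linear-algebra verification at one generic point. The inequality needs only the upper bound on dimension, so in fact only $\dim\ge\binom n2$ of the kernel is required. That lower bound is immediate, because $X=AY$ with $Y$ skew always lies in the kernel, and this suffices to conclude.
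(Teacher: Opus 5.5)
Your proposal is correct and matches the paper's proof, which is the one-line dimension count $d=\dim\pi_G(\mathrm{Sym}(m,n))\le\dim\mathrm{Sym}(m,n)=nm-\binom{n}{2}$. The only difference is that the paper treats the dimension of the rank-$\le n$ symmetric variety as a standard fact. You instead justify it via the factorization $S=AA^{\mathsf T}$ and the orthogonal-group fibers. For the inequality alone, your observation suffices: the fiber through a full-column-rank $A$ contains the $\binom{n}{2}$-dimensional orbit $\{AQ : Q\in O(n)\}$.
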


\begin{proof} By definition of \(gcr(G)\), %a dimension count yields
\(d=\dim \pi_G(\mathrm{Sym}(m,n))\leq \dim \mathrm{Sym}(m,n)=nm-n(n-1)/2\).
In the uncolored case $d=m+\vert E\vert$, recovering \cite[Theorem 1.4]{blekherman2019maximum}.
\end{proof}

As is commonplace in algebraic geometry, we study the complexification of the projection map, namely $\pi_G^\mathbb{C}:\mathrm{Sym}_\mathbb{C}(m)\rightarrow \mathbb{C}^d$. The definition of $\gcr(G)$ is independent of the underlying field: $\dim_\mathbb{R} \pi_G\left(\mathrm{Sym}_\mathbb{R} (m,n)\right)=\dim_\mathbb{C} \pi_G\left(\mathrm{Sym}_\mathbb{C} (m,n)\right)$. For simplicity, we omit $\mathbb{R}$ or $\mathbb{C}$ from our notation.
Working over $\mathbb{C}$ provides two features that do not hold over $\mathbb{R}$: 
\begin{itemize}
    \item For any \(S\in\mathrm{Sym}(m,n)\), there exists $X\in\mathbb{C}^{m\times n}$ such that $S=XX^t$. 
    \item The condition $\dim \pi_G\left(\mathrm{Sym}(m,n)\right)=d$ is equivalent to $\pi_G\left(\mathrm{Sym}(m,n)\right)$ being dense in the target space with respect to Euclidean topology.
\end{itemize}
Therefore, the restriction of $\pi_G$ to \(\mathrm{Sym}(m,n)\) is dominant (the image of the projection has maximal dimension) if and only if its differential $d_S\pi_G$ is surjective when evaluated at a generic point $S=XX^t$. Given that surjectivity of $d_S\pi_G$ is a rank condition, it is enough that the corresponding Jacobian matrix is full rank at a single point to ensure that it holds generically.

\begin{tcolorbox}[title=Geometric characterization of all thresholds]\label{box:all}
\begin{itemize}
    \item \(\gcr(G)=\) minimum \(n\) for which  \(d_S\pi_G\) is surjective for some \(S\in\mathrm{PSD}(m,n)\).
    \item \(\mlt(G)=\) minimum \(n\) for which \(\pi_G(S)\in\mathcal{C}_G\) for almost every \(S\in\mathrm{PSD}(m,n)\).
    \item \(\wmlt(G)=\) minimum \(n\) for which  \(\pi_G(S)\in\mathcal{C}_G\) for some \(S\in\mathrm{PSD}(m,n)\).
\end{itemize}
\end{tcolorbox}

Let us now characterize the three previous thresholds using a framework that enables direct comparison among them. The following result adapts and generalizes \cite[Proposition 1.2 (2), Proposition 1.6 (2)]{blekherman2019maximum} of Blekherman-Sinn. For the sake of completeness, we include a variant of the original proof. 

\begin{theorem}\label{th:Sinn2} 
Let $G$ be a colored graph with vertex set $V=[m]$. Then
    \begin{enumerate}
        \item \(\gcr(G)\) is the smallest \(n\) for which there exists an $m\times n$ matrix $X$ with the property that $\mathrm{im}(X)\not\subset\ker(K)$ for any \(K\in\mathcal{L}_G\backslash\lbrace 0\rbrace\).
        \item \(\mlt(G)\) is the smallest \(n\) for which a generic $m\times n$ matrix $X$ satisfies $\mathrm{im}(X)\not\subset\ker(K)$ for any \(K\in\mathcal{L}_G\cap\mathrm{PSD}(m)\backslash\lbrace 0\rbrace\).
        \item \(\wmlt(G)\) is the smallest \(n\) for which there exists an $m\times n$ matrix $X$ with the property that $\mathrm{im}(X)\not\subset\ker(K)$ for any \(K\in\mathcal{L}_G\cap\mathrm{PSD}(m)\backslash\lbrace 0\rbrace\).
    \end{enumerate}
\end{theorem}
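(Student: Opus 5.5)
The common thread is the identity $\langle XX^t,K\rangle=\mathrm{tr}(X^tKX)$. Together with the fact that for $K\succeq 0$ one has $\mathrm{tr}(X^tKX)=0$ iff $KX=0$ iff $\mathrm{im}(X)\subset\ker K$, this turns each threshold condition into a condition on $\mathrm{im}(X)$ alone. Parts (2) and (3) follow directly from \Cref{lem:boundary}. Write $S=XX^t$ with $X\in\mathbb{R}^{m\times n}$; every $S\in\mathrm{PSD}(m,n)$ arises this way. By \Cref{lem:boundary}, $\pi_G(S)\in\partial\mathcal{C}_G$ iff $\langle XX^t,K\rangle=0$ for some nonzero $K\in\mathcal{L}_G\cap\mathrm{PSD}(m)$. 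Writing $K=LL^t$, we have $\mathrm{tr}(X^tKX)=\|L^tX\|_F^2$, so this vanishes iff $L^tX=0$, i.e.\ iff $KX=0$, i.e.\ iff $\mathrm{im}(X)\subset\ker K$. Hence $\pi_G(S)\in\mathcal{C}_G$ iff no such $K$ has $\mathrm{im}(X)\subset\ker K$. Substituting this into the geometric characterization of $\mlt$ and $\wmlt$ in the box gives (3) immediately. For (2) I also need to pass from ``almost every $S\in\mathrm{PSD}(m,n)$'' to ``generic $X$''. I would do this by noting that the map $X\mapsto XX^t$ pushes Lebesgue measure (Gaussian samples) forward to the relevant measure on $\mathrm{PSD}(m,n)$, so that null sets correspond. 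Here ``generic'' should be read in the almost-sure sense, as permitted by the paper's remark on generic versus almost sure.

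For (1) the plan is a differential computation. By the box, $\gcr(G)$ is the least $n$ for which $d_S\pi_G$, restricted to the tangent space of $\mathrm{Sym}(m,n)$ at a point $S=XX^t$ of rank $n$, is surjective for some such $S$ (over $\mathbb{C}$, or equivalently for generic real $X$). The tangent space of $\mathrm{Sym}(m,n)$ at $XX^t$ is $T=\{XY^t+YX^t : Y\in\mathbb{R}^{m\times n}\}$. Since $\pi_G$ is linear, its differential is $\pi_G$ itself, and $\pi_G|_T$ is surjective onto $\mathbb{R}^d\cong\mathcal{L}_G^\ast$ iff no nonzero $K\in\mathcal{L}_G$ annihilates $T$. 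Now compute
\[\langle XY^t+YX^t,K\rangle=2\,\mathrm{tr}(Y^tKX),\]
which vanishes for all $Y$ iff $KX=0$, i.e.\ iff $\mathrm{im}(X)\subset\ker K$. So surjectivity at $XX^t$ is exactly the condition in (1).

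Two points need care. First, dominance of $\pi_G$ restricted to the variety $\mathrm{Sym}(m,n)$ must be equated with surjectivity of the differential at a smooth (rank exactly $n$) point. One direction is generic smoothness: a dominant map in characteristic zero has surjective differential at a generic point. For the other direction, surjectivity at a single smooth point forces the image to have dimension $d$. Second, an arbitrary $X$ in (1) may have rank less than $n$. This is harmless: I would use the parametrization $\phi:X\mapsto\pi_G(XX^t)$ from $\mathbb{R}^{m\times n}$, whose image is $\pi_G(\mathrm{Sym}(m,n))$ over $\mathbb{C}$ or at least has the same dimension. Its differential is $Y\mapsto\pi_G(XY^t+YX^t)$ at any $X$, so the rank of $d\phi$ at a single $X$ bounds the image dimension from below, and the rank at a generic $X$ equals it.

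The main obstacle is this last bookkeeping: rigorously equating ``$\dim\pi_G(\mathrm{Sym}(m,n))=d$'' with ``$d\phi_X$ is surjective for some $X$''. This requires the real-versus-complex comparison and the lower semicontinuity of rank, both already asserted in the paper's discussion preceding the box. The rest is the trace identity above.
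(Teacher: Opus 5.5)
Your proposal is correct and follows essentially the same route as the paper: for (1) the paper also identifies the dual of $d_S\pi_G$ with $\mathcal{L}_G$ via the trace pairing and shows that $\langle K, XA^t+AX^t\rangle=0$ for all $A$ iff $KX=0$, and for (2)--(3) it also reduces to \Cref{lem:boundary} with $S=XX^t$. Where the paper uses $\langle S,K\rangle=0\iff SK=0\iff \mathrm{im}(X)\subset\ker K$, you use the equivalent identity $\mathrm{tr}(X^tKX)=\|L^tX\|_F^2$, and your extra bookkeeping fills in details the paper leaves implicit: rank-deficient $X$ via the parametrization $X\mapsto\pi_G(XX^t)$, and transferring null sets through $X\mapsto XX^t$ in (2).
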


\begin{proof}
Consider the characterization of $\gcr(G)$ in (\ref{box:all}). Take $S=XX^t$ for some $X\in\mathbb{C}^{m\times n}$ and let us denote by $T_SV_n$ the tangent space to $V_n=\mathrm{Sym}(m,n)$ at $S$. The differential $d_S\pi_G:T_SV_n\rightarrow \mathbb{C}^d$ is surjective iff the dual map $d_S\pi_G^\ast: \left(\mathbb{C}^d\right)^\ast\rightarrow \left(T_SV_n\right)^{\ast}$ is injective.
The dual space $\left(\mathbb{C}^d\right)^\ast$ is identified with $\mathcal{L}_G$ via the trace pairing, hence $d_S\pi_G^\ast$ fails to be injective iff there is a non zero-matrix $K\in\mathcal{L}_G$ such that $\langle K, M\rangle=0$ for any $M\in T_SV_n$. 
One can check that $T_SV_n=\lbrace XA^t+AX^t: A\in\mathbb{C}^{m\times n}\rbrace$, and
$\langle K, XA^t+AX^t\rangle=0$ for any $A\in\mathbb{C}^{m\times n}$ iff $KX=0$.

Finally, the statements about $\mlt(G)$ and $\wmlt(G)$ follows from the description of boundary and interior points in \Cref{lem:boundary}. Indeed, any $S\in\mathrm{PSD}(m,n)$ can be written as $S=XX^t$ for some $X\in\mathbb{R}^{m\times n}$. Because of positive semi-definiteness of both $S$ and $K$, the condition $\langle S,K\rangle=0$ is equivalent to $SK=0$. One can check that $SK=0$ is equivalent to $\mathrm{im}(X)\subset\ker(K)$. Therefore, $\pi_G(S)$ is in the interior of $\mathcal{C}_G$ if and only if $\mathrm{im}(X)\not\subset\ker(K)$ holds for any non-zero $K\in\mathcal{L}_G\cap\mathrm{PSD}(m)$.
\end{proof}

\begin{remark}
    Note that the matrix $X$ in the statements of \Cref{th:Sinn2} can be regarded as a matrix whose column vectors are the observations $X_1,\dots,X_n$. The matrix $S=XX^t$ is the sample covariance matrix up to scaling.
\end{remark}

Instead of giving conditions on the kernels of the covariance matrices of our model as in \Cref{th:Sinn2}, one can alternatively characterize the thresholds by giving conditions on the concentration matrices themselves:

\begin{theorem}\label{th:alt} Let $G$ be a colored graph with vertex set $V=[m]$. Then
    \begin{enumerate}
        \item \(\gcr(G)\) is the smallest \(n\) such that there exist an $m\times(m-n)$ matrix $V$ that satisfies $VAV^t\notin\mathcal{L}_G$ for any non-zero $A\in\mathrm{Sym}(m-n)$.
        \item \(\mlt(G)\) is the smallest \(n\) such that a generic $m\times(m-n)$ matrix $V$ satisfies $VAV^t\notin\mathcal{L}_G$ for any non-zero $A\in\mathrm{PSD}(m-n)$.
        \item \(\wmlt(G)\) is the smallest \(n\) such that there exists an $m\times(m-n)$ matrix $V$ that satisfies $VAV^t\notin\mathcal{L}_G$ for any non-zero $A\in\mathrm{PSD}(m-n)$.
    \end{enumerate}
\end{theorem}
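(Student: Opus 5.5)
The plan is to derive \Cref{th:alt} directly from \Cref{th:Sinn2} by passing from an $m\times n$ matrix $X$ to an $m\times(m-n)$ matrix $V$ whose columns span the orthogonal complement of $\mathrm{im}(X)$. For generic $X$ the image has dimension exactly $n$. We then take $V$ with $\mathrm{im}(V)=\mathrm{im}(X)^\perp$ (orthogonal complement with respect to the standard bilinear form $u^tv$, also over $\mathbb{C}$), so that $V^tX=0$ and $V$ has full column rank $m-n$. Conversely, a generic $V$ of full column rank determines $X$ up to the choice of a basis of $\ker(V^t)$. Choosing that basis to depend rationally on $V$, the correspondence $X\leftrightarrow V$ carries generic matrices to generic matrices, and carries the existence of some suitable $X$ to the existence of some suitable $V$. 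The one degenerate case, an $X$ of rank less than $n$, is handled by monotonicity: adding columns to $X$ only enlarges $\mathrm{im}(X)$ and so preserves the condition in \Cref{th:Sinn2}. Hence one may always assume $\mathrm{rank}\,X=n$.

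The key step is a linear algebra lemma. For a symmetric $K$, $\mathrm{im}(X)\subset\ker(K)$ iff $KX=0$ iff $\mathrm{im}(K)=\ker(K)^\perp\subset\mathrm{im}(X)^\perp=\mathrm{im}(V)$. A symmetric $K$ whose column space lies in $\mathrm{im}(V)$ is exactly one of the form $K=VAV^t$ with $A\in\mathrm{Sym}(m-n)$. To see this, write $K=VB$; symmetry and the full column rank of $V$ force $B=AV^t$ with $A$ symmetric. Moreover $A\neq 0$ iff $K\neq 0$, because $V$ is injective and $V^t$ is surjective. So the existence of a nonzero $K\in\mathcal{L}_G$ with $\mathrm{im}(X)\subset\ker(K)$ is equivalent to the existence of a nonzero $A\in\mathrm{Sym}(m-n)$ with $VAV^t\in\mathcal{L}_G$. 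Negating both sides turns item 1 of \Cref{th:Sinn2} into item 1 of \Cref{th:alt}.

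For items 2 and 3, I would add the positivity refinement over $\mathbb{R}$: $VAV^t\in\mathrm{PSD}(m)$ iff $A\in\mathrm{PSD}(m-n)$ when $V$ has full column rank. The direction $A\succeq 0\Rightarrow VAV^t\succeq0$ is immediate. For the converse, choose a left inverse $W$ with $WV=I$; then $A=W(VAV^t)W^t\succeq 0$. The set of $K\in\mathcal{L}_G\cap\mathrm{PSD}(m)\setminus\{0\}$ with $KX=0$ therefore corresponds to $A\in\mathrm{PSD}(m-n)\setminus\{0\}$ with $VAV^t\in\mathcal{L}_G$. Items 2 and 3 then follow verbatim from items 2 and 3 of \Cref{th:Sinn2}, using the same translation of the quantifiers "generic" and "there exists".

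I expect the main obstacle to be the bookkeeping of the quantifiers rather than the algebra. Concretely, one must check that "a generic $X$ satisfies the property" is equivalent to "a generic $V$ satisfies the property". To do this I would work with the incidence set $\{(X,V): V^tX=0,\ \mathrm{rank}\,X=n,\ \mathrm{rank}\,V=m-n\}$, whose projections to both factors are surjective onto dense open sets, with irreducible fibers. The property depends only on the pair of subspaces $(\mathrm{im}X,\mathrm{im}V)$, which correspond to each other under the Grassmannian duality $\mathrm{Gr}(n,m)\cong\mathrm{Gr}(m-n,m)$. A property that fails on a null set (or a proper Zariski-closed set) of $n$-planes therefore fails on a null (or closed) set of the dual $(m-n)$-planes, since the duality is a real-algebraic diffeomorphism.
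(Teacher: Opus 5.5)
Your proof is correct and is essentially the paper's argument: choose $V$ with $\mathrm{im}\,V=(\mathrm{im}\,X)^\perp$, rewrite $\mathrm{im}\,X\subset\ker K$ as $\mathrm{im}\,K\subset\mathrm{im}\,V$, i.e.\ $K=VAV^t$ with $A\in\mathrm{Sym}(m-n)$, and read off all three items from \Cref{th:Sinn2}. You also make explicit what the paper compresses into ``analogous after restriction to PSD matrices'': that $VAV^t\succeq 0\iff A\succeq 0$ for full-column-rank $V$, the reduction to $\mathrm{rank}\,X=n$, and the transfer of genericity through $\mathrm{Gr}(n,m)\cong\mathrm{Gr}(m-n,m)$. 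The one detail worth adding is that any $V$ satisfying the condition automatically has full column rank, since $Vv=0$ with $v\neq 0$ gives $V(vv^t)V^t=0\in\mathcal{L}_G$.
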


\begin{proof}
We will prove that the existence of a rank $n$ matrix $X$ such that $\mathrm{im}\,X\not\subset\ker K$ for any \(K\in\mathcal{L}_G\backslash\lbrace 0\rbrace\)
is equivalent to the existence of an $m\times(m-n)$ matrix $V$ such that $VAV^t\notin\mathcal{L}_G$ for any non-zero $A\in\mathrm{Sym}(m-n)$. The proof is presented only for $\gcr(G)$, since the cases of $\wmlt(G)$ and $\mlt(G)$ are analogous after restriction to PSD matrices.

Let $X$ be a matrix that satisfies the conditions of \Cref{th:Sinn2}(1), that is, if $\mathrm{im}\,X\subset\ker K$, then \(K\notin\mathcal{L}_G\backslash\lbrace 0\rbrace\). Equivalently, the inclusion can be formulated as $(\ker K)^\perp\subset(\mathrm{im}\,X)^\perp$. Since $K$ is symmetric, $\ker K=(\mathrm{im}\,K)^\perp$.
Taking $V$ such that $\mathrm{im}\,V=(\mathrm{im}\,X)^\perp$ yields 
$\mathrm{im}\,K\subset\mathrm{im}\,V$.
Note that for any $K\in\mathrm{Sym}(m)$, $\mathrm{im}\,K\subset\mathrm{im}\,V$ is equivalent to $K=VAV^t$ for some $A\in\mathrm{Sym}(m-n)$. By assumption, \(K\notin\mathcal{L}_G\backslash\lbrace 0\rbrace\), hence $VAV^t\notin\mathcal{L}_G$ for any $A\neq 0$.

Conversely, let $V$ satisfy the conditions of \Cref{th:alt}(1). It can be similarly checked that any matrix $X$ such that $\mathrm{im}\,X=(\mathrm{im}\,V)^\perp$ satisfies \Cref{th:Sinn2}(1).
\end{proof}

An immediate consequence of \Cref{th:Sinn2} is that $\gcr(G)$ gives an upper bound to $\mlt(G)$. This result was originally stated in {\cite{Uhler}} as the Elimination Criterion (Theorem 3.3) and proved with different techniques (see \Cref{subsec:eliminationIdeals}). 

\begin{corollary}\label{cor:upper_bound}
\(\mlt(G)\leq\gcr(G)\).
\end{corollary}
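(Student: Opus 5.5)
The plan is to compare the two characterizations in \Cref{th:Sinn2}(1) and \Cref{th:Sinn2}(2) directly, using the fact that the $\gcr$ condition is an open algebraic condition. Set $n=\gcr(G)$. The goal is to show that a generic $m\times n$ matrix $X$ satisfies the condition in \Cref{th:Sinn2}(2). Since that characterization asks for the smallest such $n$, this gives $\mlt(G)\le n$.

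\textbf{Step 1: the $\gcr$ condition holds on a Zariski-open dense set.} By \Cref{th:Sinn2}(1) there is some $X_0\in\mathbb{C}^{m\times n}$ with $\mathrm{im}(X_0)\not\subset\ker(K)$ for every $K\in\mathcal{L}_G\setminus\{0\}$. Fix a basis $K_1,\dots,K_d$ of $\mathcal{L}_G$. The condition says the linear map
\[
\mathbb{C}^d\to\mathbb{C}^{m\times n},\qquad \lambda\mapsto \Bigl(\sum_i\lambda_iK_i\Bigr)X
\]
is injective. Equivalently, the $mn\times d$ matrix whose columns are $\mathrm{vec}(K_iX)$ has full column rank $d$. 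This is the same Jacobian rank condition used in (\ref{box:all}).

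That rank condition is the non-vanishing of some $d\times d$ minor, which is a polynomial in the entries of $X$. The minor is nonzero at $X_0$, so its nonvanishing locus is a nonempty Zariski-open subset of $\mathbb{C}^{m\times n}$. I also need the real case. A nonzero complex polynomial with real coefficients (the $K_i$ may be taken real) cannot vanish identically on $\mathbb{R}^{m\times n}$. So the locus meets $\mathbb{R}^{m\times n}$ in a Zariski-open dense set, whose complement has Lebesgue measure zero. Hence a generic real $X$ satisfies $KX\neq 0$ for all $K\in\mathcal{L}_G\setminus\{0\}$.

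\textbf{Step 2: restrict to the PSD part.} Since $\mathcal{L}_G\cap\mathrm{PSD}(m)\setminus\{0\}\subset\mathcal{L}_G\setminus\{0\}$, the same generic $X$ satisfies $\mathrm{im}(X)\not\subset\ker(K)$ for all nonzero $K\in\mathcal{L}_G\cap\mathrm{PSD}(m)$. By \Cref{th:Sinn2}(2), together with the \Cref{lem:boundary} interpretation, $\pi_G(XX^t)\in\mathcal{C}_G$ for almost every $S=XX^t\in\mathrm{PSD}(m,n)$.

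I also check that "generic $X$" transfers to "almost every $S$ of rank $n$". Under the map $X\mapsto XX^t$, a set of measure zero in $X$ corresponds to a null set of $\mathrm{PSD}(m,n)$ with respect to the natural (pushforward or Wishart-type) measure. This is the convention implicit in the definition of $\mlt$.

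\textbf{Step 3: monotonicity.} The definition of $\mlt(G)$ uses the least $n$ such that the property holds. So I need the almost-sure existence at rank $n$ to give $\mlt(G)\le n$ directly. If monotonicity in $n$ were needed, it holds because adding columns to $X$ only enlarges $\mathrm{im}(X)$, which preserves the non-inclusion.

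\textbf{Main obstacle.} The only delicate point is Step 1: passing from the existence of one (possibly complex) witness $X_0$ to a generic real statement. The minor argument handles this, provided one notes that the polynomial has real coefficients and that the $\gcr$ is field-independent, as remarked before (\ref{box:all}).
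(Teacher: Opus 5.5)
Your proposal is correct and takes the same approach as the paper, which states the corollary as an immediate consequence of \Cref{th:Sinn2}. That comparison of parts (1) and (2) relies on the earlier remark that full rank of the Jacobian at one point forces full rank generically; your Step~1 makes this genericity, and the passage from a complex witness to a generic real $X$, explicit, and your Step~2 is the observation that the PSD condition is weaker.
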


A direct consequence of the \(\gcr(G)\) characterization of (\ref{box:all}) is that it can be computed in random polynomial time by checking the rank of the Jacobian of $\pi_G$ at a random point. For more details, see \cite[Algorithm 3.2]{Bernstein2020GCR}, which was originally designed for uncolored bipartite graphs. 
However, \(\gcr(G)\) is not always a sharp bound for \(\mlt(G)\). The smallest uncolored graph where the maximum likelihood threshold and the generic completion rank do not coincide is the bipartite graph \(K_{5,5}\) has \(\mlt(K_{5,5})=4\) but \(\gcr(K_{5,5})=5\), see \cite{blekherman2019maximum,Bernstein2022MLT}. In the colored case, the 4-cycle with equally colored vertices in \Cref{fig:cycles} already displays this behavior, as we prove in \Cref{prop:mCycles}.

\subsection{Elimination ideals of a graph}\label{subsec:eliminationIdeals}

The rank approach to compute the generic completion rank provides an upper bound for the maximum likelihood threshold, but whenever this bound is not sharp, it does not provide further insight into its exact value. However, the original proof of this upper bound \cite[Theorem 3.3]{Uhler} provides a constructive definition of the generic completion rank via the computation of elimination ideals. 

\begin{definition}
Let the \emph{$n$-th elimination ideal of $G$}, \(I_{G,n}\), be the vanishing ideal of the projection of $\mathrm{Sym}(m,n)$ via $\pi_G$, that is,
\[I_{G,n}:=\left(I_{n+1}(S)+\left\langle \left\lbrace t_i-\langle S,K_i\rangle\right\rbrace_{1\leq i\leq d}\right\rangle\right)\cap\mathbb{R}[t_1,\dots,t_d],\]
\noindent
where \(I_{n+1}(S)\) denotes the ideal of \((n+1)\)-minors of the \(m\times m\) symmetric matrix \(S=(s_{ij})\) and \(K_1,\dots,K_d\) is a basis of \(\mathcal{L}_G\).
\end{definition}

The $n$-th elimination ideal of $G$ consists of all polynomials in \(\mathbb{R}[t_1,\dots,t_d]\) that vanish for all sufficient statistics \(t_i=\langle S,K_i\rangle\) obtained from a rank $n$ sample covariance matrix $S=(s_{ij})$. More precisely, $I_{G,n}$ is the vanishing ideal of $\pi_G(\mathrm{Sym}(m,n))$. Thus, while \Cref{def:gcr} characterizes the generic completion rank geometrically via the dimension of the image under the projection map $\pi_G$, this is the algebraic characterization originally proposed in \cite{Uhler}:

\begin{proposition}
\(\gcr(G)\) is the minimum $n$ for which \(I_{G,n}=(0)\). 
\end{proposition}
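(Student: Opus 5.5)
The plan is to reduce the algebraic statement to \Cref{def:gcr} through the standard dictionary between elimination ideals and Zariski closures of images, so that $I_{G,n}=(0)$ becomes equivalent to $\dim\pi_G(\mathrm{Sym}(m,n))=d$.

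First I will identify $I_{G,n}$ with the vanishing ideal of $\pi_G(\mathrm{Sym}(m,n))$. Over $\mathbb{C}$, the closed subvariety of $\mathrm{Sym}(m)\times\mathbb{C}^d$ cut out by $I_{n+1}(S)+\langle t_i-\langle S,K_i\rangle\rangle$ is the graph of $\pi_G$ restricted to $\mathrm{Sym}(m,n)$; that $I_{n+1}(S)$ defines $\mathrm{Sym}(m,n)$ set-theoretically is immediate. The Closure Theorem of elimination theory then says that $V(I_{G,n})$ is the Zariski closure of the image of the graph under the coordinate projection, which is $\overline{\pi_G(\mathrm{Sym}(m,n))}$. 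Since only the vanishing locus matters for the equivalence, I do not need radicality of the determinantal ideal. The identity $\sqrt{I_{G,n}}=\mathcal{I}\big(\pi_G(\mathrm{Sym}(m,n))\big)$ is enough, and $I_{G,n}=(0)$ iff $\sqrt{I_{G,n}}=(0)$ because polynomial rings are domains.

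Second, I will handle the field. The generators have real coefficients, so elimination commutes with extension of scalars, and Gröbner bases computed over $\mathbb{R}$ remain Gröbner bases over $\mathbb{C}$. Hence $I_{G,n}=(0)$ over $\mathbb{R}$ iff it is zero over $\mathbb{C}$. The paper has already noted that the dimension in \Cref{def:gcr} is field-independent.

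Third, I will observe that $\mathrm{Sym}(m,n)$ is irreducible, being the image of $\mathbb{C}^{m\times n}$ under $X\mapsto XX^t$. So the closure of its image is an irreducible subvariety of $\mathbb{C}^d$. Such a subvariety has dimension $d$ iff it equals $\mathbb{C}^d$, which holds iff its vanishing ideal is $(0)$. Combining the three steps gives $I_{G,n}=(0)\iff\dim\pi_G(\mathrm{Sym}(m,n))=d$.

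Finally, minimality transfers directly, since the two conditions agree for every $n$. Monotonicity in $n$ comes from $\mathrm{Sym}(m,n)\subset\mathrm{Sym}(m,n+1)$, which gives $I_{G,n+1}\subseteq I_{G,n}$, so the minimum is attained consistently.

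The main obstacle is the real-versus-complex bookkeeping in the first step. The Closure Theorem is a statement over algebraically closed fields, and over $\mathbb{R}$ the real points of the image need not be Zariski dense in the complex image. I would resolve this by proving everything over $\mathbb{C}$ and using the field independence from the second step. Over $\mathbb{R}$ one also has $\mathrm{Sym}_\mathbb{R}(m,n)$ Zariski dense in $\mathrm{Sym}_\mathbb{C}(m,n)$, because it contains the images of the real and imaginary forms $X\mapsto X D X^t$ for signature matrices $D$.
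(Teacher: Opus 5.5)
Your proposal is correct and follows the paper's own route. The paper justifies the proposition only by noting that $I_{G,n}$ is the vanishing ideal of $\pi_G(\mathrm{Sym}(m,n))$, so it is $(0)$ exactly when the image is Zariski dense in $\mathbb{C}^d$, i.e.\ has dimension $d$; you supply the details it leaves implicit (the Closure Theorem and Nullstellensatz, the $\mathbb{R}$-to-$\mathbb{C}$ base change, and the dimension argument), though irreducibility of $\mathrm{Sym}(m,n)$ and monotonicity in $n$ are not actually needed, since any Zariski-closed subset of $\mathbb{C}^d$ of dimension $d$ is all of $\mathbb{C}^d$ and the two conditions agree for each $n$.
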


\begin{example}
Consider the colored 2-cycle \(C_2\) from \Cref{ex:cycles,ex:suffStats}. Then
\[I_{C_2,1}=\langle s_{11}s_{22}-s_{12}^2,t_1-(s_{11}+s_{2}),t_2-2s_{12}\rangle\cap\mathbb{R}[t_1,t_2]=(0),\]
\noindent
thus \(\mlt(C_2)\leq\gcr(C_2)=1\) and the MLE exists for rank one matrices \(S\) with probability 1.
\end{example}

If \(I_{G,n}\neq (0)\), the MLE can either not exist, exist with probability strictly between 0 and 1, or exist with probability 1, as illustrated in \Cref{fig:elimination_ideals}. A significant advantage of knowing the elimination ideal - beyond mere knowledge of \(\gcr(G)\) - is that now we can study the generators of \(I_{G,n}\) to obtain bounds for maximum likelihood thresholds.

\begin{figure}[h]
\begin{tikzpicture}[x=0.75pt,y=0.75pt,yscale=-0.75,xscale=0.75]
%uncomment if require: \path (0,182); %set diagram left start at 0, and has height of 182

%Curve Lines [id:da026708421021965578] 
\draw [fill={rgb, 255:red, 245; green, 166; blue, 35 }  ,fill opacity=1 ]   (227,87) .. controls (13,134) and (13,134) .. (204,25) ;
%Curve Lines [id:da42078786055828066] 
\draw [fill={rgb, 255:red, 245; green, 166; blue, 35 }  ,fill opacity=1 ]   (410,88) .. controls (196,135) and (196,135) .. (387,26) ;
%Curve Lines [id:da1319191354882594] 
\draw [fill={rgb, 255:red, 245; green, 166; blue, 35 }  ,fill opacity=1 ]   (591,87) .. controls (377,134) and (377,134) .. (568,25) ;
%Curve Lines [id:da18416450953508123] 
\draw [color={rgb, 255:red, 74; green, 144; blue, 226 }  ,draw opacity=1 ][line width=2.25]    (312,69) .. controls (358,43) and (364,103) .. (404,73) ;
%Curve Lines [id:da6594213693420136] 
\draw [color={rgb, 255:red, 74; green, 144; blue, 226 }  ,draw opacity=1 ][line width=2.25]    (38,157) .. controls (40,98) and (164,55) .. (204,25) ;
%Curve Lines [id:da9969142042123911] 
\draw [color={rgb, 255:red, 74; green, 144; blue, 226 }  ,draw opacity=1 ][line width=2.25]    (403,155) .. controls (401,89) and (556,92) .. (576,48) ;
%Curve Lines [id:da04379537016156965] 
\draw [color={rgb, 255:red, 74; green, 144; blue, 226 }  ,draw opacity=1 ][line width=2.25]    (225,157) .. controls (242,99) and (272,95) .. (312,69) ;
\end{tikzpicture}
\caption{In orange, the interior of the cone of sufficient statistics $\mathcal{C}_G$. In blue, the generator $f$ of a principal elimination ideal $I_{G,n}=\langle f\rangle$. From left to right, the MLE does not exist, it exists with probability strictly between 0 and 1, and it exists with probability 1.}
\label{fig:elimination_ideals}
\end{figure}

\begin{lemma}
    If there exists \(f(t_1,\dots,t_n)\in I_{G,n}\) such that \(f(\langle \Sigma,K_1\rangle,\dots,\langle \Sigma,K_d\rangle)\neq 0\) for any \(\Sigma=(\sigma_{ij})\) in \(\mathrm{PD}(m)\), then $\wmlt(G)\geq n+1$.
\end{lemma}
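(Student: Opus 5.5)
We argue by contradiction, using the characterization of $\wmlt(G)$ from the box on ML thresholds. Note first that the statement writes $f(t_1,\dots,t_n)$, but $f$ is a polynomial in $t_1,\dots,t_d$, since $I_{G,n}\subset\mathbb{R}[t_1,\dots,t_d]$; we read it that way throughout. Suppose $\wmlt(G)\leq n$. Then there is some $S\in\mathrm{PSD}(m,n')$ with $n'\leq n$ and $\pi_G(S)\in\mathcal{C}_G$, the open cone of sufficient statistics.

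Since $\mathrm{PSD}(m,n')\subset\mathrm{PSD}(m,n)\subset\mathrm{Sym}(m,n)$, the point $\pi_G(S)$ lies in $\pi_G(\mathrm{Sym}(m,n))$. By definition, $I_{G,n}$ is the vanishing ideal of this set: every generator of $I_{n+1}(S)$ vanishes on $S$, and the $t_i$ are specialized to $\langle S,K_i\rangle$. Hence $f(\pi_G(S))=0$.

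The main step is to transfer this vanishing to a positive definite matrix. We use the definition $\mathcal{C}_G=\pi_G(\mathrm{PD}(m))$. Because $\pi_G(S)\in\mathcal{C}_G$, there is some $\Sigma\in\mathrm{PD}(m)$ with $\pi_G(\Sigma)=\pi_G(S)$; equivalently, $\operatorname{fiber}_G(S)\cap\mathrm{PD}(m)\neq\emptyset$, which is \Cref{theorem: characterisation of MLE}. Then
\[f(\langle\Sigma,K_1\rangle,\dots,\langle\Sigma,K_d\rangle)=f(\pi_G(\Sigma))=f(\pi_G(S))=0,\]
which contradicts the hypothesis that $f$ is nonzero on $\pi_G(\mathrm{PD}(m))$. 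Therefore no $S$ of rank at most $n$ has its projection in the interior, and $\wmlt(G)\geq n+1$.

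No step here is computationally hard. The only point needing care is reading $\wmlt(G)\leq n$ correctly: we need the existence of a matrix of rank at most $n$ (not exactly $n$) whose projection is interior. The inclusion $\mathrm{PSD}(m,n')\subset\mathrm{Sym}(m,n)$ covers this, since the minors ideal $I_{n+1}(S)$ vanishes on every matrix of rank at most $n$. A second point is that interior points of $\mathcal{C}_G$ are exactly projections of PD matrices. This holds by definition, since $\mathcal{C}_G$ is open, and so it needs no further argument.
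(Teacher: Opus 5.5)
Your proof is correct and is essentially the paper's argument, recast as a contradiction. Since $f$ vanishes on $\pi_G(\mathrm{Sym}(m,n))$, no PSD matrix of rank $\leq n$ can share its projection with a positive definite $\Sigma$, so its projection never lies in $\mathcal{C}_G$. Your handling of ranks $n'\leq n$ and of the $t_1,\dots,t_d$ typo spells out what the paper leaves implicit.
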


\begin{proof}
By construction, \(f(\langle S,K_1\rangle,\dots,\langle S,K_d\rangle)= 0\) for all \(S\in\mathrm{Sym}(m,n)\). But then $\pi_G(\Sigma)\neq\pi_G(S)$ for any \(\Sigma\in\mathrm{PD}(m)\), hence the MLE never exists for rank $n$.
\end{proof}

We design a positivity certificate for (at least one of) the generators of \(I_{G,n}\) when evaluated at positive definite matrices. In particular, we consider the Cholesky decomposition \(\Sigma=LL^t\), where $L=(l_{ij})$ is a lower-triangular matrix with strictly positive diagonal entries, and compute a sum of squares (SOS) decomposition of $f$ in $\mathbb{R}[l_{ij}]$. 

\begin{algorithm}[H]
    \caption{Non-negativity certificate for generators \(I_{G,n}\) via sums-of-squares (SOS)}\label{alg:SOS}
    \KwData{ $m$ - number of vertices of the graph; \(K_1,\dots,K_d\) standard basis of \(\mathcal{L}_G\)\;
    \(f_1,\dots,f_k\in\mathbb{R}[t_1,\dots,t_d]\) - generators of \(I_{G,n}\).}
    \KwResult{for each $f_i$, whether it changes sign or not for PD matrices.}
    \Begin{
Generate random $m\times m$ positive definite matrices $\Sigma$\;
For each $i$, compute \(f_i(\Sigma):=f_i(\langle \Sigma,K_1\rangle,\dots,\langle \Sigma,K_d\rangle)\)\;
   \If{$\Sigma$ and $\Sigma'$ are found such that $\mathrm{sign}(f_i(\Sigma)f_i(\Sigma'))<0$}{\KwRet{$f_i$ changes signs for PD matrices}\;}
   \Else{Define an $m\times m$ lower triangular matrix $L=(l_{ij})$\;
         Compute a sum-of-squares decomposition of \(f_i(LL^t)\)\;
         }
    \KwRet{output of the SOS problem for \(f_i(LL^t)\)}
    }
\end{algorithm}

\begin{remark}
Note that an SOS decomposition only provides a non-negativity certificate. Therefore, \Cref{alg:SOS} requires further analysis of its output to ensure that the resulting SOS is strictly positive by using the fact that the diagonal entries $l_{ii}$ of the lower triangular matrix $L=(l_{ij})$ are strictly positive. 
\end{remark}

\begin{example}\label{ex:G9}
Let $G$ be the colored 4-cycle $G_9$ in \cite[Table 2]{Uhler} displayed in \Cref{table: all graphs}. Consider the sufficient statistics $t_1=s_{1,1}$, $t_2=s_{2,2}+s_{4,4}$, $t_3=s_{3,3}$, $t_4=2s_{1,2}+2s_{1,4}$, $t_5=2s_{2,3}$, $t_6=2s_{3,4}$. The elimination ideals are $I_{G,2}=I_{G,3}=0,$ and
\begin{equation}\label{eq:graph9_IG1}
  I_{G,1}=\langle t_3t_4^2-t_1t_5^2-2t_1t_5t_6-t_1t_6^2,4t_2t_3-t_5^2-t_6^2\rangle.  
\end{equation}
\noindent
Let $f_1$ and $f_2$ be the generators of $I_{G,1}$. An empirical test shows the following signs: 

    \begin{table}[h]
    \centering
    \begin{tabular}{c|cccc}
    n & 1 & 2 & 3 & 4\\
    \hline
    $f_1$ & 0 & $\pm$ & $\pm$ & $\pm$\\
    $f_2$ & 0 & + & + & +
    \end{tabular}
\end{table}

\noindent
The generator $f_2$ evaluated at $\Sigma=LL^t$, where $L=(l_{ij})$ is lower-triangular with $l_{ii}>0$, is
\begin{multline*}
f_2(\Sigma)=4(-l_{13}l_{22}+l_{12}l_{23})^2+4(l_{12}l_{33})^2+4(-l_{14}l_{23}+l_{13}l_{24})^2\\
+4(-l_{14}l_{33}+l_{13}l_{34})^2+4(l_{13}l_{44})^2+4(l_{22}l_{33})^2\\
+4(-l_{24}l_{33}+l_{23}l_{34})^2+4(l_{23}l_{44})^2+4(l_{33}l_{44})^2>0.
\end{multline*}
 
\noindent
Thus, the MLE does not exist for 1 observation and $\wmlt(G)=\mlt(G)=\gcr(G)=2$.
\end{example}

\begin{example}\label{ex:G11}

Consider the colored 4-cycle $G_{11}$ in \cite[Table 2]{Uhler} displayed in \Cref{table: all graphs}.
Consider the sufficient statistics $t_1=s_{1,1}+s_{3,3}$, $t_2=s_{2,2}+s_{4,4}$, $t_3=2s_{1,2}$, $t_4=2s_{1,4}+2s_{2,3}$, $t_5=2s_{3,4}$. The elimination ideals are $I_{G,2}=I_{G,3}=0$, and 
$I_{G,1}=\langle 4t_1t_2-t_3^2-t_4^2+2t_3t_4-t_5^2\rangle$.
\noindent
For any positive definite matrix $\Sigma=LL^t$, the single generator of $I_{G,1}$ is
$$f(\Sigma)=l_{33}^2l_{44}^2+l_{22}^2l_{33}^2+l_{11}^2l_{44}^2+g,$$ 

\noindent
where $g$ is an SOS. Therefore, $\wmlt(G)=\mlt(G)=\gcr(G)=2$.
\end{example}

We use \Cref{alg:SOS} to provide detailed computations for all colored 3-cycles in \Cref{table:3cycles} in \Cref{app:SM}.
Moreover, we are now able to complete the gaps in \cite[Table 2]{Uhler} for colored 4-cycles.

\begin{proposition}\label{proposition: resolved conjectures}
For graphs 9, 11, 14, 17 in \cite[Table 2]{Uhler}, \(\gcr(G)=\mlt(G)=\wmlt(G)=2\). 
\end{proposition}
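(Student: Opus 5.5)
The plan is to prove the chain $2 \le \wmlt(G) \le \mlt(G) \le \gcr(G) \le 2$ separately for each of the four graphs. The middle inequality $\wmlt(G)\le\mlt(G)$ is immediate from the definitions. The inequality $\mlt(G)\le\gcr(G)$ is \Cref{cor:upper_bound}. So two tasks remain for each graph $G\in\{G_9,G_{11},G_{14},G_{17}\}$: an upper bound $\gcr(G)\le 2$ and a lower bound $\wmlt(G)\ge 2$.

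For the upper bound, fix the standard basis $K_1,\dots,K_d$ of $\mathcal{L}_G$ read off from the coloring. Write down the linear map $\pi_G$ explicitly, as in \Cref{ex:G9,ex:G11}. Then evaluate the Jacobian of $S=XX^t\mapsto\pi_G(S)$ at a single explicit (or random rational) $X\in\mathbb{R}^{4\times 2}$ and check that it has rank $d$. By the characterization in (\ref{box:all}), this gives $\gcr(G)\le 2$.

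Equivalently, one can compute the elimination ideal $I_{G,2}$ and verify $I_{G,2}=(0)$. This check is cheap: there are only four vertices, and $d\le 6\le 2\cdot4-1=7$, so the dimension bound of \Cref{thm:bounds_gcr} is consistent. I would also compute $I_{G,1}$, since it is needed below.

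For the lower bound, I would show that the MLE never exists for a single observation. The route is the lemma preceding \Cref{alg:SOS}: find a generator $f$ of $I_{G,1}$ that does not vanish on $\pi_G(\mathrm{PD}(4))$. For $G_9$ and $G_{11}$ this is already carried out in \Cref{ex:G9,ex:G11}. For $G_{14}$ and $G_{17}$ I would run \Cref{alg:SOS}:
\begin{enumerate}
\item compute the generators of $I_{G,1}$ by elimination;
\item sample random positive definite matrices to discard generators that change sign;
\item for a remaining candidate $f$, substitute $\Sigma=LL^t$ and compute an SOS decomposition of $f(LL^t)$ in $\mathbb{R}[l_{ij}]$.
\end{enumerate}
Finally, I would inspect the SOS. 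The goal is to exhibit a term such as a product of squared diagonal entries $l_{ii}^2l_{jj}^2$ with positive coefficient, which forces strict positivity because $l_{ii}>0$, exactly as in \Cref{ex:G11}.

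As a fallback, if no generator admits a clean certificate, I would argue directly through \Cref{th:Sinn2}(3). For a rank-one $X=x$, it suffices to exhibit a nonzero $K\in\mathcal{L}_G\cap\mathrm{PSD}(4)$ with $Kx=0$. Such a $K$ can be built from the edge and vertex equalities, for instance a rank-one $K=vv^t$ with $v\perp x$ chosen so that the color constraints hold. This would show $\wmlt(G)\ge2$ without any SOS computation.

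The main obstacle is the strict-positivity certificate for $G_{14}$ and $G_{17}$. The SOS solver returns only a numerical non-negativity certificate. It has to be rationalized into an exact decomposition, and then strictness must be extracted from the positivity of the diagonal $l_{ii}$. If the SOS has no term consisting purely of diagonal entries, I would try the fallback kernel construction instead.
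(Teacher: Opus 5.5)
Your proposal is correct. For $G_9$ and $G_{11}$ it matches the paper. For $G_{14}$ and $G_{17}$ it takes a more laborious route than the paper does.

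\textbf{How the paper handles $G_{14}$ and $G_{17}$.} The paper does no new computation for these two graphs. It observes that $\mathcal{L}_{G_9}\subset\mathcal{L}_{G_{17}}$ and $\mathcal{L}_{G_{11}}\subset\mathcal{L}_{G_{14}}$. It then uses that $\mlt$ and $\wmlt$ are monotone under inclusion of the linear spaces: a nonzero PSD $K$ in the smaller space with $\mathrm{im}(X)\subset\ker K$ is also such a $K$ in the larger space, by Theorem~\ref{th:Sinn2}. So the lower bounds from Examples~\ref{ex:G9} and~\ref{ex:G11} transfer immediately.

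\textbf{Your route also works.} Recomputing $I_{G,1}$ and running Algorithm~\ref{alg:SOS} would succeed, and the certificates are in fact trivial.
\begin{itemize}
\item For $G_{17}$, the element $4t_2t_3-t_5^2-t_6^2\in I_{G_{17},1}$ equals $4(\sigma_{22}\sigma_{33}-\sigma_{23}^2)+4(\sigma_{33}\sigma_{44}-\sigma_{34}^2)>0$ on $\mathrm{PD}(4)$.
\item For $G_{14}$, take $t_1=s_{11}+s_{33}$, $t_2=s_{22}+s_{44}$ and $t_3,\dots,t_6=2s_{12},2s_{23},2s_{34},2s_{14}$. Then $4t_1t_2-t_3^2-t_4^2-t_5^2-t_6^2\in I_{G_{14},1}$ is four times a sum of principal $2\times 2$ minors.
\end{itemize}
Monotonicity makes this work unnecessary, however.

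\textbf{What your route buys.} It gives an explicit proof of $\gcr(G)\le 2$. The paper's argument does not address this for $G_{14}$ and $G_{17}$, since monotonicity only transfers lower bounds there. Monotonicity still helps you: $\gcr$ is monotone by Theorem~\ref{th:Sinn2}(1), and all four spaces lie in $\mathcal{L}_{G_{17}}$. So a single Jacobian check for $G_{17}$ suffices for all four graphs.

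\textbf{Two corrections.}
\begin{itemize}
\item For $G_{17}$ one has $d=7$, not $d\le 6$. The bound $d\le 2\cdot 4-1$ of Theorem~\ref{thm:bounds_gcr} is therefore attained with equality.
\item Your rank-one fallback cannot work. The condition $vv^t\in\mathcal{L}_{G_{17}}$ forces $v_2v_4=0$, $v_2^2=v_4^2$ and $v_1v_3=0$. Hence $v\in\mathbb{R}e_1\cup\mathbb{R}e_3$, which is not orthogonal to a generic $x$. For $G_{14}$, no nonzero rank-one element exists at all.
\end{itemize}

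\textbf{A working fallback.} Higher-rank kernels do work. Take $K=u_1u_1^t+u_2u_2^t$ with $u_1=(0,x_3,-x_2,0)^t$ and $u_2=(0,0,x_4,-x_3)^t$. This $K$ lies in $\mathcal{L}_{G_9}\subset\mathcal{L}_{G_{17}}$ and satisfies $Kx=0$. For $G_{14}$, the matrix $\sum_{ij\in E}(x_je_i-x_ie_j)(x_je_i-x_ie_j)^t$ plays the same role, since its diagonal satisfies $K_{11}=K_{33}$ and $K_{22}=K_{44}$. Either gives an SOS-free proof of $\wmlt\ge 2$ for these graphs.
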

\begin{proof}
The statement is proven for graphs 9 and 11 in \Cref{ex:G9} and \Cref{ex:G11}, respectively, using \Cref{alg:SOS}. Note that if $\mathcal{L}_G\subset\mathcal{L}_{G'}$, then $\mlt(G)\leq\mlt(G')$ and $\wmlt(G)\leq\wmlt(G')$. Since $\mathcal{L}_{G_9}\subset\mathcal{L}_{G_{17}}$ and $\mathcal{L}_{G_{11}}\subset\mathcal{L}_{G_{14}}$, the statement holds.
\end{proof}

\subsection{Generalized matrix completion problems}\label{subsec:CompletionProblems}
The study of the existence of maximum likelihood estimators for uncolored Gaussian graphical models was initiated by Dempster \cite{dempster1972covariance} through the lens of PD matrix completion. In an uncolored graph, missing edges correspond to unknown entries in the sample covariance matrix. The MLE exists if and only if there is a PD completion for this partial matrix. Consequently, the MLT represents the minimum rank of a sample covariance matrix such that a PD completion exists almost surely. Similarly, the GCR can be interpreted as a low-rank completion problem, that is, the lowest rank of a completion with (possibly) complex entries of a generic partial matrix. 

\begin{remark} As in \Cref{subsec:gcr}, not all properties hold over the reals. When we only consider real completions, the GCR is the lowest completion rank for a full-dimensional semi-algebraic set of matrices with a given pattern of unknown entries. However, this is not necessarily true for a generic partial matrix: there can be other full-dimensional semi-algebraic sets with higher lowest completion rank (so-called typical ranks). See \cite{Bernstein2020GCR} for more details on generic and typical ranks.
\end{remark}

Introducing a coloring function $\lambda$ generalizes the notion of matrix completion. Rather than merely filling in gaps, a coloring imposes symmetry constraints that allow entries belonging to the same color class to be modified, provided their sum —-- the sufficient statistic —-- remains preserved. In this framework, we can think of elements in $\mathrm{fiber}_G(S)$ as \emph{generalized matrix completions} of the sufficient statistics $\pi_G(S)$.

\begin{example}\label{ex:c4fiber}
A matrix completion of \(S=(s_{ij})\) associated to the uncolored 4-cycle $G$ is of the form displayed below for some $x,y\in\mathbb{R}$. If we color all vertices equally as in $C_4$ in \Cref{ex:c4proj}, then the generalized matrix completion of \(S=(s_{ij})\) is required to preserve the trace but not the individual diagonal entries.
$$\mathrm{fiber}_G(S)=\left\lbrace\begin{pmatrix}
 s_{11} & s_{12} & x & s_{14} \\
 s_{12} & s_{22} & s_{23} & y \\
 x & s_{23} & s_{33} & s_{34} \\
 s_{14} & y & s_{34} & s_{44}
\end{pmatrix}:x,y\in\mathbb{R}\right\rbrace$$
$$\mathrm{fiber}_{C_4}(S)=\left\lbrace\begin{pmatrix}
 s_{11}+\varepsilon_1 & s_{12} & x & s_{14} \\
 s_{12} & s_{22}+\varepsilon_2 & s_{23} & y \\
 x & s_{23} & s_{33}+\varepsilon_3 & s_{34} \\
 s_{14} & y & s_{34} & s_{44}+\varepsilon_4
\end{pmatrix}:x,y,\varepsilon_i\in\mathbb{R},\,\sum_{i=1}^4\varepsilon_i=0\right\rbrace$$
\end{example}

In \cite{blekherman2019maximum}, Blekherman and Sinn rephrase the characterizations of MLE and GCR of uncolored Gaussian graphical models as matrix completion problems through the lens of algebraic geometry \cite[Proposition 1.2 (1), Proposition 1.6 (1)]{blekherman2019maximum}. Their framework can be translated into the colored setting with minor variations.
Let $\mathcal{L}_G^\perp$ be the orthogonal space to $\mathcal{L}_G$ in $\mathrm{Sym}(m)$. Note that $\ker\pi_G=\mathcal{L}_G^\perp$,
hence matrices in $S+\mathcal{L}_G^\perp$ are (generalized) matrix completions of $S$.

\begin{definition}\label{def:IG}
Consider a basis $K_{d+1},\dots,K_r$ of $\mathcal{L}_G$, with $r=m(m+1)/2$. We call $I_G$ the ideal in $\mathbb{R}[y_1,\dots,y_m]$ generated by quadratic forms $(y_1\,\dots\,y_m)K_i(y_1\,\dots\,y_m)^t$, for $d< i\leq r$.
\end{definition}

Recalling the one-to-one correspondence between symmetric matrices and quadratic forms, $\mathcal{L}_G^\perp$ can be identified with the linear space of quadratic forms $\langle I_G\rangle_2$, i.e. the degree 2 part of the homogeneous ideal $I_G$. Now consider the quotient ring $R=\mathbb{R}[y_1,\dots,y_m]/I_G$. Let $R_i$ denote the homogeneous degree $i$ part of $R$. 
Note that $R_1=\mathbb{R}[y_1,\dots,y_m]_1$, so we can view an observation $X=(x_1,\dots,x_m)^t$ in $\mathbb{R}^m$ as a polynomial (or a linear form) \(x_1y_1+\dots+x_my_m\) in \(R_1\).
Moreover, $R_2$ can be identified with $\mathbb{S}^m/\mathcal{L}_G^\perp$.  That is, $R_2$ is the $d$-dimensional vector space of sufficient statistics of all symmetric matrices and the cone of sufficient statistics $\mathcal{C}_G$ can be regarded as the subspace of $R_2$ consisting of the sufficient statistics of all positive definite matrices. 

\begin{remark}\label{rem:monomial}
For an uncolored graph $G$, $I_G$ is a squarefree monomial ideal. Adding a coloring breaks both properties, but we still have an ideal generated in degree 2. Even when $I_G$ is not monomial, one can take a monomial basis of $R_2$ by choosing certain vertices and edges to represent each color class. See \Cref{ex:IG} for details on this construction.
\end{remark}

\begin{example}\label{ex:IG} Consider the 4-cycle with equally colored vertices and a basis of $\mathcal{L}_G^\perp$ given by 
$$\begin{bmatrix}
        1 & 0 & 0 & 0\\
         0 & -1 & 0 & 0\\
        0 & 0 & 0& 0\\
         0 & 0 & 0 & 0
    \end{bmatrix}, 
   \begin{bmatrix}
        1 & 0 & 0 & 0\\
         0 & 0 & 0 & 0\\
        0 & 0 & -1& 0\\
         0 & 0 & 0 & 0
    \end{bmatrix},\begin{bmatrix}
        1 & 0 & 0 & 0\\
         0 & 0 & 0 & 0\\
        0 & 0 & 0& 0\\
         0 & 0 & 0 & -1
    \end{bmatrix},\begin{bmatrix}
        0 & 0 & 1 & 0\\
         0 & 0 & 0 & 0\\
        1 & 0 & 0& 0\\
         0 & 0 & 0 & 0
    \end{bmatrix},\begin{bmatrix}
        0 & 0 & 0 & 0\\
         0 & 0 & 0 & 1\\
        0 & 0 & 0& 0\\
         0 & 1 & 0 & 0
    \end{bmatrix}.
    $$

Then $I_G=\langle y_1y_3,y_2y_4,y_1^2-y_2^2,y_1^2-y_3^2,y_1^2-y_4^2\rangle\subset \mathbb{R}[y_1,y_2,y_3,y_4]$, and the first and second degree parts of $R=\mathbb{R}[y_1,y_2,y_3,y_4]/I_G$ are $R_1=\langle y_1,y_2,y_3,y_4\rangle_\mathbb{R}$ and $R_2=\langle y_1^2, y_1y_2, y_2y_3, y_3y_4, y_1y_4\rangle_\mathbb{R}$, considering the monomial basis obtained from representatives of each color class. 
\end{example}

We now adapt and extend the characterization of thresholds in terms of matrix completion problems given by Blekherman-Sinn in \cite[Proposition 1.2 (1), Proposition 1.6 (1)]{blekherman2019maximum}. We interpret their linear series as a collection of observations  \(X_1,\dots,X_n\in\mathbb{R}^m\) and denote by \(\langle X_1,\dots,X_n\rangle_2\) their degree two span when considered as elements of $R_1$. 

\begin{theorem}\label{th:Sinn1} Let $G$ be a colored graph with vertex set $V=[m]$. Then
    \begin{enumerate}
        \item \(\gcr(G)\) is the smallest \(n\) for which 
        the vector space \(\langle X_1,\dots,X_n\rangle_2+\mathcal{L}_G^{\perp}=\mathrm{Sym}(m)\)
        for some observations \(X_1,\dots,X_n\in\mathbb{R}^m\). 
        \item \(\mlt(G)\) is the smallest \(n\) for which the vector space \(\langle X_1,\dots,X_n\rangle_2+\mathcal{L}_G^{\perp}\) contains a positive definite matrix for generic observations \(X_1,\dots,X_n\in\mathbb{R}^m\).
        \item \(\wmlt(G)\) is the smallest \(n\) for which the vector space \(\langle X_1,\dots,X_n\rangle_2+\mathcal{L}_G^{\perp}\) contains a positive definite matrix for some observations \(X_1,\dots,X_n\in\mathbb{R}^m\).
    \end{enumerate}
\end{theorem}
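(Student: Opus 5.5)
The plan is to reduce each of the three statements to the corresponding item of \Cref{th:Sinn2} by linear duality with respect to the trace pairing on $\mathrm{Sym}(m)$. Let $X$ be the $m\times n$ matrix with columns $X_1,\dots,X_n$. I first identify the degree two span $\langle X_1,\dots,X_n\rangle_2$ with the space of symmetric matrices $W_X:=\{XBX^t + (\text{products } X_iX_j^t+X_jX_i^t)\}$. Concretely, $W_X=\{XBX^t: B\in\mathrm{Sym}(n)\}$, i.e. all symmetric matrices of the form $\sum_{i\le j} b_{ij}(X_iX_j^t+X_jX_i^t)$. Under the identification of quadratic forms with symmetric matrices, the product of linear forms $X_i\cdot X_j$ corresponds to $\tfrac12(X_iX_j^t+X_jX_i^t)$, so this is exactly the degree two span. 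Similarly, $\langle I_G\rangle_2=\mathcal{L}_G^\perp$ is given by \Cref{def:IG}.

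For (1), I will use that $W_X+\mathcal{L}_G^\perp=\mathrm{Sym}(m)$ if and only if $(W_X+\mathcal{L}_G^\perp)^\perp=W_X^\perp\cap\mathcal{L}_G=0$. A matrix $K\in\mathrm{Sym}(m)$ lies in $W_X^\perp$ iff $\langle K, XBX^t\rangle=\mathrm{tr}(X^tKX\,B)=0$ for all symmetric $B$, iff $X^tKX=0$. This is not yet $KX=0$, which is the condition in \Cref{th:Sinn2}(1), and that gap is the main obstacle. The route I would take is the tangent space description from the proof of \Cref{th:Sinn2}: the space $W_X+\mathcal{L}_G^\perp$ must be compared with $T_SV_n+\mathcal{L}_G^\perp$, where $T_SV_n=\{XA^t+AX^t\}\supsetneq W_X$ for $n<m$. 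So the equality in (1) should be read as surjectivity of $\pi_G$ restricted to the linear span generated by the observations, counted generically. My fallback is a dimension argument in the spirit of Blekherman--Sinn. Over $\mathbb{C}$, $\pi_G(\mathrm{Sym}(m,n))$ is the closure of the union over $X$ of $\pi_G(W_X)$. Dominance of $\pi_G|_{\mathrm{Sym}(m,n)}$ is then equivalent to surjectivity of $\pi_G$ on the secant-type linear spaces spanned by generic rank one points $X_iX_i^t$ and their tangent directions. I would verify via Terracini's lemma that the span of tangent spaces at $n$ generic points of the rank one Veronese variety equals $\{\sum_i (X_ia_i^t+a_iX_i^t)\}=T_SV_n$. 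Hence (1) holds with the degree two span of the linear forms interpreted, as in Blekherman--Sinn, as $X_i\cdot R_1$, i.e. $\langle X_1,\dots,X_n\rangle\cdot R_1\subset R_2$. I will state explicitly that this is the intended meaning, so that the orthogonal complement condition becomes $KX=0$, matching \Cref{th:Sinn2}(1).

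For (2) and (3), I will use the separating hyperplane theorem rather than duality of subspaces. Fix $X$ and set $U=\langle X_1,\dots,X_n\rangle_2+\mathcal{L}_G^\perp$, a linear subspace. Then $U\cap\mathrm{PD}(m)=\emptyset$ iff there exists a nonzero $K\in\mathrm{PSD}(m)$ with $K\perp U$. This is the standard alternative for a subspace and the open cone $\mathrm{PD}(m)$, which is self-dual. Now $K\perp\mathcal{L}_G^\perp$ means $K\in\mathcal{L}_G$. Moreover, for PSD $K$, $K\perp X_iX_i^t$ for all $i$ already forces $KX_i=0$, so here the weaker span $W_X$ suffices and $K\perp U$ iff $KX=0$, i.e. $\mathrm{im}(X)\subset\ker K$. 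Therefore $U$ contains a PD matrix iff no nonzero $K\in\mathcal{L}_G\cap\mathrm{PSD}(m)$ has $\mathrm{im}(X)\subset\ker K$. Quantifying over generic $X$ gives (2) by \Cref{th:Sinn2}(2), and quantifying over some $X$ gives (3) by \Cref{th:Sinn2}(3). Alternatively, this is directly the fiber criterion of \Cref{theorem: characterisation of MLE} applied to $S=XX^t$, since $S+\mathcal{L}_G^\perp\subset U$.
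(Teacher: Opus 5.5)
Your proof is correct under the reading you eventually adopt, but it runs on the dual side of the paper's argument.

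\textbf{How the paper argues.} The paper stays primal. It identifies $\langle X_1,\dots,X_n\rangle_2$ with $T_SV_n=\{XA^t+AX^t\}$. It gets (1) directly from the Jacobian characterization of $\gcr(G)$: surjectivity of $d_S\pi_G$ on $T_SV_n$ is the same as $T_SV_n+\ker\pi_G=\mathrm{Sym}(m)$, with $\ker\pi_G=\mathcal{L}_G^\perp$. It gets (2) and (3) from the fiber criterion $\mathrm{fiber}_G(S)=S+\mathcal{L}_G^\perp$, together with a claim left as ``one can check'': that $S+\mathcal{L}_G^\perp$ meets $\mathrm{PD}(m)$ if and only if $T_SV_n+\mathcal{L}_G^\perp$ does.

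\textbf{How you argue.} You take orthogonal complements and reduce all three items to the kernel conditions of \Cref{th:Sinn2}. For (1) you use $(T_SV_n+\mathcal{L}_G^\perp)^\perp=\{K\in\mathcal{L}_G: KX=0\}$. For (2) and (3) you use the alternative theorem for a linear subspace against the self-dual open cone $\mathrm{PD}(m)$.

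\textbf{What each route buys.} Your route gives an actual proof of the step the paper leaves to the reader. The inclusion $S+\mathcal{L}_G^\perp\subset T_SV_n+\mathcal{L}_G^\perp$ in your closing ``alternatively'' remark gives only one direction. The converse is exactly what your separation argument supplies, because it shows both conditions are equivalent to the absence of a nonzero $K\in\mathcal{L}_G\cap\mathrm{PSD}(m)$ with $KX=0$. It also shows that (2) and (3) do not depend on which degree-two span is meant, since for PSD $K$ orthogonality to the $X_iX_i^t$ already forces $KX=0$. The cost is that you lean on \Cref{th:Sinn2}, and through it on the boundary lemma, whereas the paper's (1) is immediate from the definition.

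\textbf{Two corrections.}
\begin{enumerate}
\item Your first identification of $\langle X_1,\dots,X_n\rangle_2$ with $\{XBX^t: B\in\mathrm{Sym}(n)\}$ is wrong, not an alternative convention. The paper's $\langle X_1,\dots,X_n\rangle_2$ is the degree-two part of the ideal generated by the linear forms, as \Cref{ex:th} shows: $\langle y_1,y_3\rangle_2$ contains $y_1y_2,y_1y_4,y_2y_3,y_3y_4$. Under your first reading, (1) is false already for the colored 2-cycle. For $n=1$ one gets $\dim(\{XBX^t\}+\mathcal{L}_{C_2}^\perp)\le 2<3$, although $\gcr(C_2)=1$. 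So the switch you make is forced.
\item The Terracini detour is unnecessary. The product $(X_i^ty)(a^ty)$ is the quadratic form of $\tfrac12(X_ia^t+aX_i^t)$, so $\langle X_1,\dots,X_n\rangle\cdot R_1$ corresponds to $T_SV_n$ by inspection.
\end{enumerate}
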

\begin{proof}
The proof follows exactly as in the uncolored setting in \cite{blekherman2019maximum}. The key fact is the equality \(\langle X_1,\dots,X_n\rangle_2=T_SV_n\), where $X$ is an $m\times n$ matrix with $X_i$ as column $i$ and $S=XX^t$. Note that \(\mathrm{fiber}_G(S)=S+\mathcal{L}_G^\perp\). One can check that the existence of a PD matrix in the fiber is equivalent to the existence of such a matrix in $T_SV_n+\mathcal{L}_G^\perp$.
\end{proof}

\begin{remark} 
Although the proof of \Cref{th:Sinn1} does generalize directly from \cite[Proposition 1.2, 1.6]{blekherman2019maximum}, results by Blekherman-Sinn on clique-sums of uncolored graphs \cite[Theorems 1.13, 1.15]{blekherman2019maximum} do not extend to the colored setting.
If $G_1$ and $G_2$ are uncolored graphs and  $G=G_1\boxtimes G_2$ is their clique-sum, then $\mlt(G)=\max\lbrace\mlt(G_1),\mlt(G_2)\rbrace$ and $\gcr(G)=\max\lbrace\gcr(G_1),\gcr(G_2)\rbrace$. One can naturally extend the clique-sum definition \cite[Definition 1.9]{blekherman2019maximum} to colored graphs by requiring that the gluing is done along cliques with compatible coloring.
However, in the colored case, ideals $I_G$ are no longer monomial and squarefree, see \Cref{rem:monomial}, which results in the failure to interpret $\mathcal{V}(I_G)$ as a subspace arrangement and consequently the failure to interpret clique sums as linear joins. \end{remark}

\begin{example}
Consider colored 3-cycles $G_1$ and $G_2$ below and let $G=G_1\boxtimes G_2$ be their clique-sum along the 2-clique corresponding to the red vertices.

\begin{center}
\begin{tikzpicture}[
    blue node/.style={circle, fill=black, inner sep=0pt, minimum size=6pt},
    highlight node/.style={circle, fill=black, draw=red!80, ultra thick, inner sep=0pt, minimum size=7pt},
    blue line/.style={thick, black}
]

% --- FIRST GRAPH (LEFT) ---
  \node[blue node] (L_left) at (-3, 0) {};
  \node[highlight node] (L_top) at (-1.5, 1.0) {};
  \node[highlight node] (L_bot) at (-1.5, -1.0) {};

  \draw[blue line] (L_left) -- (L_top);
  \draw[blue line] (L_left) -- (L_bot);
  \draw[blue line] (L_top) -- (L_bot);
    % Graph Label G1
  \node at (-2.25, -1.6) {$G_1$};
  % --- OPERATOR (*) ---
  \node at (-0.5, 0) {\Large $\ast$};

  % --- SECOND GRAPH (MIDDLE) ---
  \node[highlight node] (M_top) at (0.5, 1.0) {};
  \node[highlight node] (M_bot) at (0.5, -1.0) {};
  \node[blue node] (M_right) at (2, 0) {};

  \draw[blue line] (M_top) -- (M_bot);
  \draw[blue line] (M_top) -- (M_right);
  \draw[blue line] (M_bot) -- (M_right);
% Graph Label G2
  \node at (1.25, -1.6) {$G_2$};
  % --- EQUALS SIGN (=) ---
  \node at (3.1, 0) {\Large $=$};

  % --- RESULT GRAPH (RIGHT) ---
  \node[blue node] (R_left) at (4.3, 0) {};
  \node[highlight node] (R_top) at (5.5, 1.0) {};
  \node[highlight node] (R_bot) at (5.5, -1.0) {};
  \node[blue node] (R_right) at (6.7, 0) {};

  \draw[blue line] (R_left) -- (R_top);
  \draw[blue line] (R_left) -- (R_bot);
  \draw[blue line] (R_top) -- (R_bot);
  \draw[blue line] (R_top) -- (R_right);
  \draw[blue line] (R_bot) -- (R_right);
% Graph Label G
  \node at (5.5, -1.6) {$G=G_1\boxtimes G_2$};
\end{tikzpicture}
\end{center}
\noindent
One can check that $\mlt(G_i)=\gcr(G_i)=2$, for $i=1,2$, but $\mlt(G)=\gcr(G)=3$.
\end{example}

In the remaining part of this section we showcase how \Cref{th:Sinn1} can be applied to compute maximum likelihood thresholds for specific families of graphs.

\begin{example}\label{ex:th} Let us apply the previous result to determine thresholds for the 4-cycle with equally colored vertices following the construction from \Cref{ex:IG}.
Observations $X_1=(1,0,0,0),X_2=(0,0,1,0)\in\mathbb{R}^4$ are identified with polynomials $y_1,y_3$ in $R_1$ and $\langle y_1,y_3\rangle_2=\langle y_1^2, y_1y_2,y_1y_3,y_1y_4,y_2y_3,y_3^2,y_3y_4\rangle$. Since matrix

$$\begin{bmatrix}
         R_2   \\
         \hline
        y_1^2\\
        y_1y_2\\
        y_2y_3\\
        y_3y_4\\
        y_1y_4
    \end{bmatrix}\begin{bmatrix}
         y_1^2 & y_1y_2 & y_1y_3 & y_1y_4 & y_2y_3 & y_3^2 & y_3y_4\\
         \hline
         1 & 0 & 0 & 0 &  0 & 1 & 0\\
         0 & 1 & 0 & 0 &  0 & 0 & 0\\
         0 & 0 & 0 & 1 &  0 & 0 & 0\\
         0 & 0 & 0 & 0 &  1 & 0 & 0\\
         0 & 0 & 0 & 0 &  0 & 0 & 1\\
    \end{bmatrix}$$
\noindent
is of rank 5, then $\langle y_1,y_3\rangle_2=R_2$. 
Note that $\langle l\rangle_2\neq R_2$ for any $l\in R_1$ because $4=\dim \langle l\rangle_2<5$. Therefore, \(\gcr(G)=2\). 
Moreover, matrices in $\langle X_1\rangle_2 +\mathcal{L}_G^\perp$ are of the form 
$$\begin{bmatrix}
        \lambda+\mu_1+\mu_2+\mu_3 & 0 & \mu_4 & 0\\
         0 & -\mu_1 & 0 & \mu_5\\
         \mu_4 & 0 & -\mu_2& 0\\
         0 & \mu_5 & 0 & -\mu_3
    \end{bmatrix}.$$
Note that for any $\mu_5=\mu4=0$, $\mu_1,\mu_2,\mu_3<0$ and $\lambda+\mu_1+\mu_2+\mu_3>0$, the resulting matrix is positive definite. Therefore, $\wmlt(G)=1$. 
\end{example}

\begin{proposition}
If \(G\) is a colored graph with equally colored vertices, then \(\wmlt(G)=1\).
\end{proposition}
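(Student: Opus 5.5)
We prove the statement with $n=1$, using the characterization of $\wmlt(G)$ in \Cref{th:Sinn2}(3). Since $\wmlt(G)\geq 1$ trivially (rank zero gives $\pi_G(0)=0\notin\mathcal{C}_G$), it suffices to exhibit one vector $X\in\mathbb{R}^m$ such that $X\notin\ker(K)$ for every non-zero $K\in\mathcal{L}_G\cap\mathrm{PSD}(m)$.

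We take $X=e_1$, the first standard basis vector, as in \Cref{ex:th}. Suppose $K\in\mathcal{L}_G\cap\mathrm{PSD}(m)$ and $Ke_1=0$. Then the first column of $K$ vanishes, and in particular $k_{11}=0$. Since all vertices share one color, the definition of $\mathcal{L}_G$ forces $k_{ii}=k_{11}=0$ for every $i\in[m]$. A positive semidefinite matrix with zero diagonal is the zero matrix, because every $2\times 2$ principal minor $k_{ii}k_{jj}-k_{ij}^2=-k_{ij}^2$ must be non-negative, so $k_{ij}=0$. Hence $K=0$, which is what \Cref{th:Sinn2}(3) requires.

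Alternatively, we could argue through \Cref{lem:boundary}: for $S=e_1e_1^t$ we have $\langle S,K\rangle=k_{11}$, so the same diagonal argument shows that $\langle S,K\rangle=0$ with $K$ PSD forces $K=0$. Therefore $\pi_G(S)\in\mathcal{C}_G$, and since $S\in\mathrm{PSD}(m,1)$ this gives $\wmlt(G)=1$.

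There is no real obstacle. The only point needing care is that a PSD matrix with zero diagonal vanishes, and the coloring hypothesis is what turns $k_{11}=0$ into a zero diagonal. We should also remark that edge colors and the edge set play no role, so the statement holds for arbitrary edge colorings.
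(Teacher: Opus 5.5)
Your proof is correct, and it differs from the paper's mainly in which side of the duality it works on. Both proofs use the single observation $X_1=e_1$. The paper argues through the completion characterization \Cref{th:Sinn1}(3), generalizing \Cref{ex:th}. Since all vertices share a color, $E_{11}-E_{ii}\in\mathcal{L}_G^\perp$ for every $i$ (with $E_{ii}$ the matrix units). Hence $\lambda E_{11}+\sum_{i\geq 2}\mu_i(E_{11}-E_{ii})\in\langle X_1\rangle_2+\mathcal{L}_G^\perp$ is a positive definite diagonal matrix whenever all $\mu_i<0$ and $\lambda+\sum_{i\geq 2}\mu_i>0$. You instead use the kernel characterization \Cref{th:Sinn2}(3), or equivalently \Cref{lem:boundary}. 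You rule out every nonzero $K\in\mathcal{L}_G\cap\mathrm{PSD}(m)$ with $Ke_1=0$, because $k_{11}=0$ forces a zero diagonal and a PSD matrix with zero diagonal vanishes. The paper's route has the advantage of producing an explicit witness: $\frac{1}{m}I\in\mathrm{fiber}_G(e_1e_1^t)$ and $mI\in\mathcal{K}_G$, so $\frac{1}{m}I$ is the MLE itself by \Cref{theorem: characterisation of MLE}. Your route is a short non-constructive certificate that is complete as written, whereas the paper only sketches its construction. In both, the edge set and edge colors play no role, as you observe.
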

\begin{proof}
Consider observation $X_1=(1,0,\dots,0)\in\mathbb{R}^m$ and describe a positive definite matrix in $\langle X_1\rangle_2+\mathcal{L}_G^\perp$, generalizing the construction in \Cref{ex:th}.
\end{proof}

We finish the section by providing a family of graphs with 
\(\mlt(G)<\gcr(G)\). 

\begin{proposition}\label{prop:mCycles}
If \(G\) is a colored \(m\)-cycle with equally colored vertices. If all edges are colored differently, then \(\gcr(G)=2\).
If \(m\) is even, then \(\mlt(G)=1\).
\end{proposition}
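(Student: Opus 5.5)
The plan is to use the kernel characterizations of \Cref{th:Sinn2}. Label the vertices cyclically $1,\dots,m$, with indices taken mod $m$. Every $K\in\mathcal{L}_G$ has the form $K=aI+A$, where $A$ is the symmetric weighted adjacency matrix of the cycle with arbitrary, independent edge weights $b_{i,i+1}$. So $d=m+1$, and $Kx=0$ is the system of $m$ equations
\[a\,x_i+b_{i-1,i}\,x_{i-1}+b_{i,i+1}\,x_{i+1}=0,\qquad i\in[m].\]
I assume $m\geq 3$, so that $x_{i-1}$ and $x_{i+1}$ are distinct coordinates.

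\emph{Generic completion rank.} For the lower bound, \Cref{thm:bounds_gcr} with $n=1$ would require $m+1=d\le m$, which is false. Hence $\gcr(G)\ge 2$.

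For the upper bound I will exhibit an $m\times 2$ matrix $X$ with columns $u=e_1$ and $v$, where $v$ has all entries nonzero, and check \Cref{th:Sinn2}(1).
\begin{itemize}
\item The condition $Ke_1=0$ reads off the first column of $K$ and forces $a=b_{12}=b_{m1}=0$.
\item With these vanishing, row $2$ of $Kv=0$ becomes $b_{23}v_3=0$, so $b_{23}=0$.
\item Row $3$ then becomes $b_{34}v_4=0$, so $b_{34}=0$. Continuing along the path $2,3,\dots,m$ shows that every weight vanishes.
\end{itemize}
Hence no nonzero $K\in\mathcal{L}_G$ kills $\mathrm{im}(X)$, and $\gcr(G)=2$.

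\emph{Maximum likelihood threshold for even $m$.} By \Cref{th:Sinn2}(2), it suffices to show the following: for generic $x\in\mathbb{R}^m$, no nonzero $K\in\mathcal{L}_G\cap\mathrm{PSD}(m)$ satisfies $Kx=0$. The key step is a sign-alternating identity, which uses the bipartiteness of the even cycle. Multiply the $i$-th equation of $Kx=0$ by $(-1)^i x_i$ and sum over $i$.
\begin{itemize}
\item Each edge weight $b_{i,i+1}$ appears twice, with coefficients $(-1)^i x_ix_{i+1}$ and $(-1)^{i+1}x_{i+1}x_i$.
\item These cancel, and since $m$ is even the signs are consistent around the cycle.
\item What remains is $a\sum_i(-1)^i x_i^2=0$.
\end{itemize}
The polynomial $\sum_i(-1)^ix_i^2$ is not identically zero, so on a Zariski-open dense set of $x$ we get $a=0$. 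Then $K=A$ has zero trace. A PSD matrix with zero trace is zero, so $K=0$. Therefore the MLE exists almost surely from one observation, and $\mlt(G)=1$.

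I expect the main obstacle to be exactly this positivity part. A naive dimension count does not help: $Kx=0$ is $m$ linear equations in $m+1$ unknowns, so every $x$ admits some nonzero $K\in\mathcal{L}_G$ with $Kx=0$. This is also why $\gcr(G)>1$. The PSD constraint must therefore be used in a substantive way, and the alternating-sum trick does this by forcing $a=0$ and then invoking trace zero. For odd $m$ the cancellation fails, consistent with the statement being restricted to even $m$.
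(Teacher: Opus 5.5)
Your proof is correct. For $\mlt(G)$ it is essentially the paper's argument. The paper row-reduces $Kx=0$ to $(\sum_i(-1)^{i+1}x_i^2)\lambda_1=0$ and then uses $\lambda_1\geq|\lambda_i|$ on $\mathcal{L}_G\cap\mathrm{PSD}(m)$. You instead write down the row combination explicitly, with multipliers $(-1)^i x_i$, and finish with the trace. Your version shows exactly where evenness of $m$ is used, namely the wrap-around edge $\{m,1\}$. It also does not need the paper's auxiliary condition that the first coordinate of $x$ be nonzero.

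For $\gcr(G)$, the lower bound is the same dimension count ($\dim\langle x\rangle_2=m<m+1=d$). The upper bound takes a different route. The paper uses the degree-two-span characterization of \Cref{th:Sinn1} and asserts $\langle x_1,x_2\rangle_2=R_2$ for two specific linear forms, without showing the rank check. You use the kernel characterization of \Cref{th:Sinn2}(1) with $X=[e_1\mid v]$ and propagate the vanishing of edge weights along the path $2,3,\dots,m$. Both characterizations encode surjectivity of $d_S\pi_G$, one on the tangent-space side and one on the side of $\mathcal{L}_G$, so either witness suffices. Yours has the advantage of being verified completely in a few lines, uniformly for all $m\geq 3$.
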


\begin{proof} Let $G$ be a colored graph with all vertices equally colored and all edges differently colored. Note that $d=m+1$ and hence \(m=\dim\langle x\rangle_2<\dim R_2=m+1\) for any \(x\in R_1\). 
Since
\(\langle x_1,x_2\rangle_2=R_2\) for \(x_1=\sum_{i=1}^{\lfloor m/2\rfloor}y_i\) and \(x_2=y_1+\sum_{\lfloor m/2\rfloor}^{m-1}y_i\), then \(\gcr(G)= 2\) by \Cref{th:Sinn1}.

By \Cref{th:Sinn2}, proving \(\mlt(G)=1\) is equivalent to proving that there is no nonzero $K\in\mathcal{L}_G\cap\mathrm{PSD}(m)$ that contains a generic column vector \((a_1 \dots a_m)^t\) in its kernel. Any matrix \(K\) in \(\mathcal{L}_G\cap\mathrm{PSD}(m)\) satisfies \(\lambda_1\geq\vert\lambda_i\vert\). Rewrite the system of equations $K(a_1 \dots a_m)^t=0$ as $A\left(\lambda_1\dots\lambda_{m+1}\right)^t=0$. If $a_1\neq 0$, $\left(\sum_i^m (-1)^{i+1}a_i^2\right)\lambda_1=0$ for $m$ even. Thus $\lambda_1=0$ and $K=0$ is the only positive semidefinite matrix in $\mathcal{L}_G$. 
\end{proof}

If $m$ is odd, then bringing the system $A\left(\lambda_1\dots\lambda_{m+1}\right)^t=0$ in the previous proof into row echelon form yields the equation $\left(a_1^2+\sum_{i=2}^m (-1)^i a_i^2\right)\lambda_1+2a_1a_2\lambda_2=0$. However, computational experiments suggest that the statement still holds for $m$ odd.
\begin{conjecture}
  Let $G$ be a colored $m$-cycle with equally colored vertices. If $m$ is odd, then $\mlt(G)=1$.
\end{conjecture}

\section{Computational topology approaches to ML thresholds}

While symbolic approaches such as elimination ideals or Gröbner bases discussed in Section~\ref{sec:alggeo} give exact criteria, they scale poorly and are computationally infeasible beyond small graphs.  Addressing these limitations requires numerical methods. Typically, ML thresholds would be obtained numerically via optimization algorithms, e.g. by computing the MLE of randomly generated sample covariance matrices of a fixed rank or by maximizing random linear functions over $\mathcal{K}_G\cap\{\mathrm{trace}(K)=1\}$ \cite{UhlerSturmfels}. 
Here, we provide an alternative computational approach and explore $\mathcal{C}_G$ empirically using ideas from TDA on samples from the cone of sufficient statistics.

Specifically, we develop code for sampling within the cone of sufficient statistics and propose to analyze the resulting samples via TDA, with the goal of identifying rank patterns and local connectivity features related to likelihood feasibility. We perform a case study on small graphs, for which symbolic methods still remain feasible and for which the ground truth is known to test the feasibility of our approach. Our experiments, while promising, expose the geometric complexity of colored models and the obstacles that future computational approaches must address.

\subsection{Topological data analysis}
TDA is a novel and active field of research combining concepts from pure and applied mathematics to quantify 'shape' in data \cite{Carlsson2009,Ghrist2008,edelsbrunner2002topological,carlsson2005,robins1999towards}. The most prominent method in TDA is persistent homology \cite{Carlsson2009,Ghrist2008,edelsbrunner2002topological,carlsson2005,robins1999towards}, an automatic, robust, and interpretable technique that studies topological invariants across multiple scales. 
Persistent homology applied to point cloud data yields persistence barcodes that capture connectivity patterns in dimension $k$ such as connected components ($k=0$), loops ($k=1$), and voids ($k=2$). When computed on annular neighbourhoods, persistent homology can detect points near intersections or boundaries in non-manifold data \cite{Stolz2020}. The resulting local persistence barcodes can be interpreted to reflect qualitative changes in the local geometry across rank strata. Here, we leverage this property to investigate whether projections of positive semi-definite matrices onto their sufficient statistics lie close to the boundary of the cone of sufficient statistics of positive definite matrices.

\subsubsection{Persistent homology}\label{section: local TDA}
Typical input data to persistent homology is a collection of points in $\mathbb{R}^n$. This data is then used to construct a filtered simplicial complex that includes data points as its vertices. The most common choice of filtration is the Vietoris-Rips filtration \cite{Vietoris1927}.
\begin{definition}[Vietoris-Rips filtration $\VR_P^{\bullet}$]
\label{def:VR complex}
Let $P$ be a point cloud, $d$ be a distance function on $P$, and $\varepsilon \in \mathbb{R}^+_0$. The \emph{Vietoris-Rips complex at parameter $\varepsilon$}, denoted $\VR_P^{\varepsilon}$, is a simplicial complex that has the points in $P$ as its vertex set and includes the $n$-simplex $\tau = (p_0, \dots, p_n)$ if $d(p_i, p_j) \leq \varepsilon$ for all $i,j \in \{1,\dots,n\}$ with $i \neq j$. A \emph{Vietoris-Rips filtration} $\VR_P^{\bullet}$ is a nested sequence of simplicial complexes $\VR_P^{\varepsilon}$ obtained by considering a sequence of increasing $\varepsilon$.
\end{definition}

For a given filtration and a fixed field $\mathbb{F}$, the $k$-dimensional persistence module is the sequence of $\mathbb{F}$-vector spaces together with linear maps induced by inclusions in the filtration:

\begin{definition}[k-dimensional persistence module $PH_k(X^\bullet)$]
\label{def:k-PH}
For a filtered simplicial complex $$T^\bullet =T^1 \xhookrightarrow{\iota^1} T^2 \xhookrightarrow{\iota^2} \cdots \xhookrightarrow{\iota^{N-2}} T^{N-1} \xhookrightarrow{\iota^{N-1}} T^N,$$ the \emph{k-dimensional persistence module} is defined by $$PH_k(T^\bullet) =  H_k(T^1; \mathbb{F}) \xrightarrow{\phi^1} H_k(T^2; \mathbb{F}) \xrightarrow{\phi^2} \cdots \xrightarrow{\phi^{N-1}}  H_k(T^N; \mathbb{F}),$$ where $\phi^{\varepsilon}$ are the maps induced by $\iota^{\varepsilon}$ and $H_k(X^i; \mathbb{F})$ is the $k$-th homology group of $T^i$ over $\mathbb{F}$.
\end{definition}
Here, we work with $\mathbb{F} = \mathbb{Z}/2\mathbb{Z}$.
The structure theorem of persistent homology provides a decomposition of the \emph{k-dimensional persistence module} into a unique persistence barcode \cite{carlsson2005} which is stable to small perturbations of the points in $P$ \cite{Chazal2009,cohen2005stability}. \emph{Persistent homology} refers to the process of constructing a persistence module from data, whose structure is then typically summarised by its persistence barcode.

\subsubsection{Understanding existence of MLE using persistent homology of annular neighbourhoods}\label{sec:MLEviaTDA}

For finite samples from intersecting manifolds, persistent homology computed on annular neighbourhoods can detect singular regions even when none of the data points were sampled precisely from these singularities \cite{Stolz2020}. The strategy relies on the fact that for $n$-dimensional spheres, the dimension of the $i$-th homology group is equal to 1 if $i = n$ and 0 otherwise. For fixed real parameters $0<r<s$, we compute the annular neighbourhood $A_x$ of each point $x \in P$, i.e. all points $y \in P$ such that $r\leq d(x,y) \leq s$. We then compute the $k$-th persistent homology of the Vietoris-Rips filtration $\VR_{A_x}^{\bullet}$. If the barcode does not approximate a single $k$-sphere, the point $x$ lies close to a singular region.

We propose Algorithm~\ref{alg:MLT estimation for large graphs} for MLT estimation using persistent homology on annular neighbourhoods and perform a case study on $G_3$, $G_6$ and $G_{18}$ from Table~\ref{table: all graphs}. For a fixed sample size $t$, we set $R_2=1.3\times \text{median distance among points in $D$}$ and $R_1=7R_2/8$, which have shown the best performance in our experiments. We exclude points
    that have less than 4 neighbours in their annular neighbourhoods. For $G_3$ we restricted ourselves to computing persistent homology in dimension 1 since after the compactification with the trace constraint, the ambient space has dimension 2. In $G_6$, we used both dimension 1 and 2  persistent homology as the cone of sufficient statistics is 4-dimensional, but can be further reduced to 3 dimensions as for $G_3$. For $G_{18}$, where the cone of sufficient statistics is high-dimensional, we restrict computations to 2-dimensional homology due to the sampling limitations discussed below. To retain computational efficiency, we do not compute persistent homology beyond dimension  2.

    Intuitively, to check whether MLE exists for a matrix $S \in \mathrm{PSD}(m,n)$, with $n<m$, we approximate the interior of the cone of sufficient statistics $\pi_G(\mathcal{C}_G)$ by uniformly sampling a set $\mathcal{S}_{\mathrm{PD}}\subseteq \mathrm{PD}(m)$ (following \cite{mittelbach2012sampling}). We then compute the persistent homology of an annular neighbourhood of $\pi_G(S)$ in $\pi_G(\mathcal{S}_{\mathrm{PD}})$. The annular neighbourhood thereby gives us a discrete proxy of the boundary of the neighbourhood of $\pi_G(S)$ in the cone of sufficient statistics.  If the  barcode resulting from the persistent homology calculation reveals an annular neighbourhood that resembling $k$-sphere, then we conclude that $\pi_G(S)$ lies in the interior of the cone of sufficient statistics (see Algorithm~\ref{alg:local homology of sufficient stats}). 
    
\begin{algorithm}[H]
    \caption{MLT estimation using persistent homology}\label{alg:MLT estimation for large graphs}
    \KwData{$G$ - undirected graph, $n$ - 
    candidate threshold,
    $N_n$ - number of $n$-tuples of observations,  
    %$a$ - trace of the Gram matrix,
    $M$ - number of samples from $\mathrm{PD}(m)$ with trace 1}
    %$a$}
    \KwResult{$N_{n,\mathrm{int}}$ - the number of $n$-tuples of observations whose sufficient statistics lie in the interior of $\mathcal{C}_G$}
    \Begin{
$N_{n,\mathrm{int}}\longleftarrow 0$\;
$D\longleftarrow \varnothing$\;
Compute $\mathcal{S}_{\mathrm{PD}} \subseteq \mathrm{PD}(m)$ - the i.i.d. set of positive definite matrices with trace $1$ sampled uniformly~\cite{mittelbach2012sampling}\;
    
    Uniformly sample $\mathcal{X}=\{X^{(1)},\dots,X^{(N_n)}\}$, where $X^{(i)}=\{X^{(i,1)},\dots,X^{(i,n)}\}\subseteq [-1,1]^{m}$\;
    \For{$X^{(i)}\in \mathcal{X}$} {
    $S^{(i)}=X^{(i)}(X^{(i)})^T$\;
    $S^{(i)}=\frac{1}{\operatorname{trace}(S^{(i)})}S^{(i)}$\;
    $D\longleftarrow D \cup \{\pi_G(S^{(i)})\}$\;}
    Compute $D_\text{int}$ with \Cref{alg:local homology of sufficient stats}\;
    $N_{n,\mathrm{int}}\longleftarrow |D_\text{int}|$\;

    \KwRet{$N_{t,\mathrm{int}}$}
    }
\end{algorithm}

\begin{algorithm}[H]
    \caption{Persistent homology of annular regions of sufficient statistics}\label{alg:local homology of sufficient stats}

\KwData{Finite point set $D \subset \mathbb{R}^n$, real parameters $0 < R_1 < R_2$}
    
\KwResult{Partition of $D$ into boundary points $D_\text{bdry}$ and interior points $D_\text{int}$}  
    
\Begin{ $D_\text{int}\longleftarrow \varnothing$, $D_\text{bdry}\longleftarrow \varnothing$\; 
\For{$y \in D$}{
Find ${A}_{y} = \{ x \in D \text{ satisfying } R_1 \leq \|x-y\| \leq R_2\}$\;
\If{$|{A}_{y}| \geq 100$}{
Choose 100 points from ${A}_{y}$ using the sequential maxmin algorithm\;
Compute $PH_{1}(A_y)$, the $1$-dim barcode of $A_y$ and $PH_{2}(A_y)$, the $2$-dim barcode of $A_y$\;
Calculate $N_{1,y} = \#\{[a,b) \in PH_{1}(A_y) \text{ with }(b-a) > (R_2 - R_1)\}$ and $N_{2,y} = \#\{[a,b) \in PH_{2}(A_y) \text{ with }(b-a) > (R_2 - R_1)\}$\;
\uIf{$N_{2,y} \geq 1$}{
$D_\text{int}\longleftarrow D_\text{int}\cup \{y\}$
} 
\uElseIf{$N_{1,y} \geq 1$}{$D_\text{int}\longleftarrow D_\text{int}\cup \{y\}$}
\uElse{$D_\text{bdry}\longleftarrow D_\text{bdry}\cup \{y\}$}
}
}
\KwRet{$(D_\text{int},D_\text{bdry})$}
 }   
\end{algorithm}

For $G_{18}$ (see Figure \ref{fig:combined-cycles}), we find that our approach reports the most accurate results with the percentage of boundary points being almost 1 for one observation (MLE does not exist), below 0.3 for two observations (MLE exists with probability $p \in (0,1)$), and below 0.1 for 3 observations (MLE exists with $p=1$). This corresponds to $\wmlt(G_{18})=2,\mlt(G_{18})=3$. For $G_3$ and $G_6$, where the MLE exists with $p=1$ for 1,2, or 3 observations, we find that the reported percentage of boundary points remains below 0.2 and 0.4, respectively, across all sample sizes, corresponding to $\wmlt(G_3)=\mlt(G_3)=1$ and $\wmlt(G_6)=\mlt(G_6)=1$ respectively. Overall, our approach appears to successfully capture relative trends but does not provide sufficiently sharp decision boundaries to draw definitive conclusions.
\begin{figure}[htbp]
\centering

% =====================
% Top row: three cycles
% =====================
\begin{minipage}{0.32\textwidth}
\centering
\begin{tikzpicture}[scale=1.2]

    % vertices
    \node (v1) at (0,0) {};
    \node (v2) at (1,0) {};
    \node (v3) at (1,1) {};
    \node (v4) at (0,1) {};

    % labels
    \node[left]  at (v1) {4};
    \node[right] at (v2) {3};
    \node[right] at (v3) {2};
    \node[left]  at (v4) {1};

    % colored edges
    \draw[thick, cyan]   (v1) -- (v2);
    \draw[thick, cyan]    (v2) -- (v3);
    \draw[thick, green]  (v3) -- (v4);
    \draw[thick, green] (v4) -- (v1);

    % colored vertices (original style)
    \node[wViolet] at (v1) {};
    \node[wViolet] at (v2) {};
    \node[wViolet] at (v3) {};
    \node[wViolet] at (v4) {};
\end{tikzpicture}
\end{minipage}%
\hfill
\begin{minipage}{0.32\textwidth}
\centering
\begin{tikzpicture}[scale=1.2]
    % vertices
    \node (v1) at (0,0) {};
    \node (v2) at (1,0) {};
    \node (v3) at (1,1) {};
    \node (v4) at (0,1) {};

    % labels
    \node[left]  at (v1) {4};
    \node[right] at (v2) {3};
    \node[right] at (v3) {2};
    \node[left]  at (v4) {1};

    % colored edges
    \draw[thick, orange] (v1) -- (v2);
    \draw[thick, cyan]  (v2) -- (v3);
    \draw[thick, green]   (v3) -- (v4);
    \draw[thick, green]    (v4) -- (v1);

    % colored vertices
    \node[wViolet] at (v1) {};
    \node[wViolet] at (v2) {};
    \node[wViolet] at (v3) {};
    \node[wViolet] at (v4) {};
\end{tikzpicture}
\end{minipage}%
\hfill
\begin{minipage}{0.32\textwidth}
\centering
\begin{tikzpicture}[scale=1.2]
    % vertices
    \node (v1) at (0,0) {};
    \node (v2) at (1,0) {};
    \node (v3) at (1,1) {};
    \node (v4) at (0,1) {};

    % labels
    \node[left]  at (v1) {4};
    \node[right] at (v2) {3};
    \node[right] at (v3) {2};
    \node[left]  at (v4) {1};

    % colored edges
    \draw[thick]  (v1) -- (v2);
    \draw[thick] (v2) -- (v3);
    \draw[thick]    (v3) -- (v4);
    \draw[thick]   (v4) -- (v1);

    % colored vertices
    \node[wWhite] at (v1) {};
    \node[wWhite] at (v2) {};
    \node[wWhite] at (v3) {};
    \node[wWhite] at (v4) {};
\end{tikzpicture}
\end{minipage}

\vspace{0.8cm}  % space between top and bottom rows

% ======================================
% Bottom row: your existing three figures
% ======================================
\begin{minipage}{0.32\textwidth}
\centering
 \includegraphics[width=\textwidth]{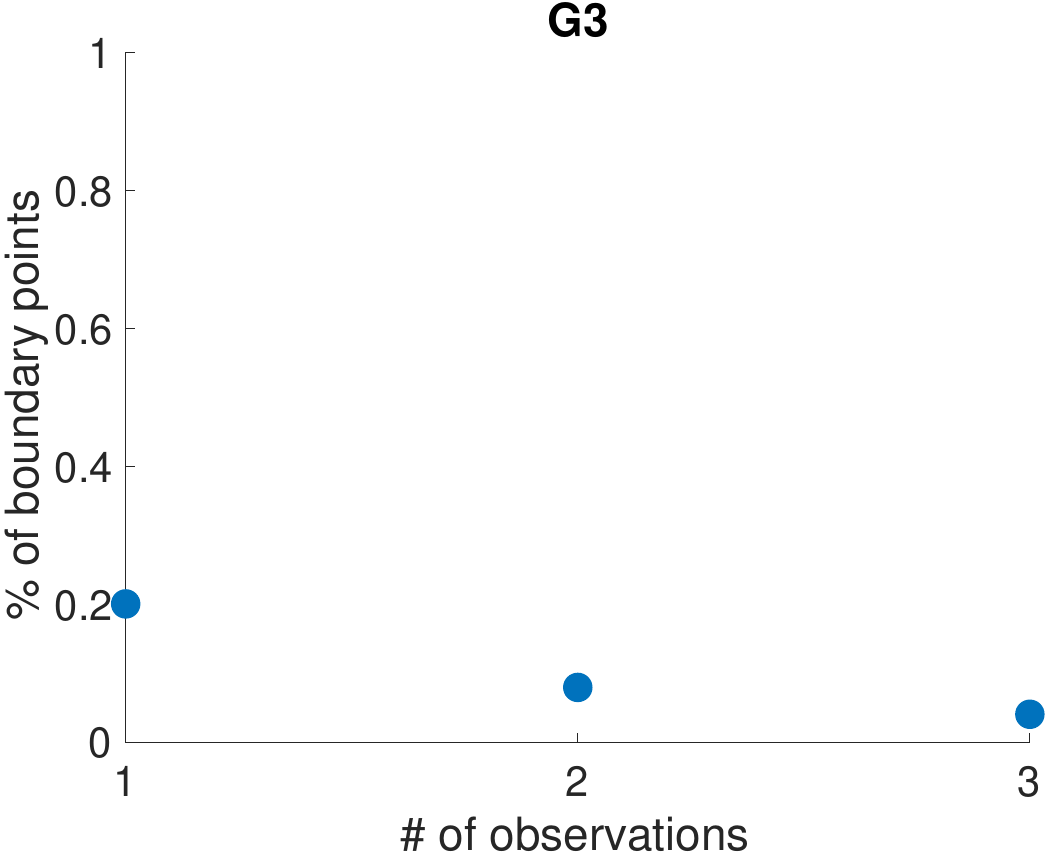}
\end{minipage}%
\hfill
\begin{minipage}{0.32\textwidth}
\centering
 \includegraphics[width=\textwidth]
{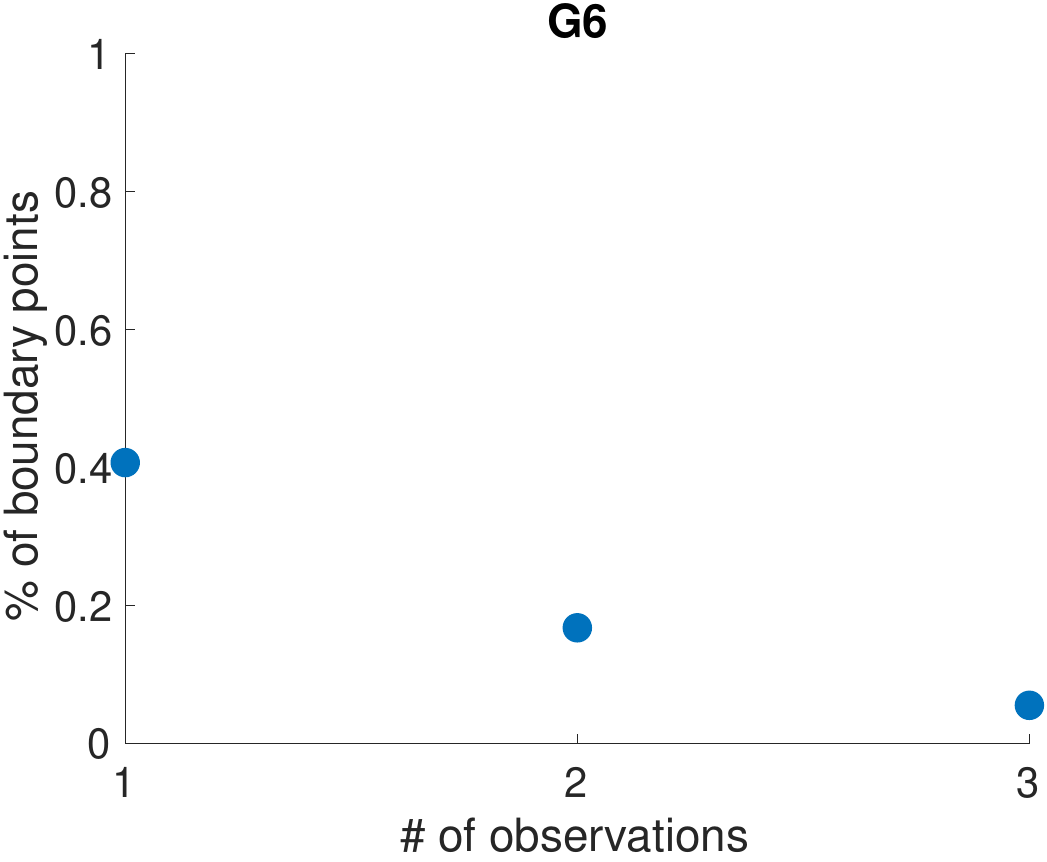}
\end{minipage}%
\hfill
\begin{minipage}{0.32\textwidth}
\centering
 \includegraphics[width=\textwidth]
{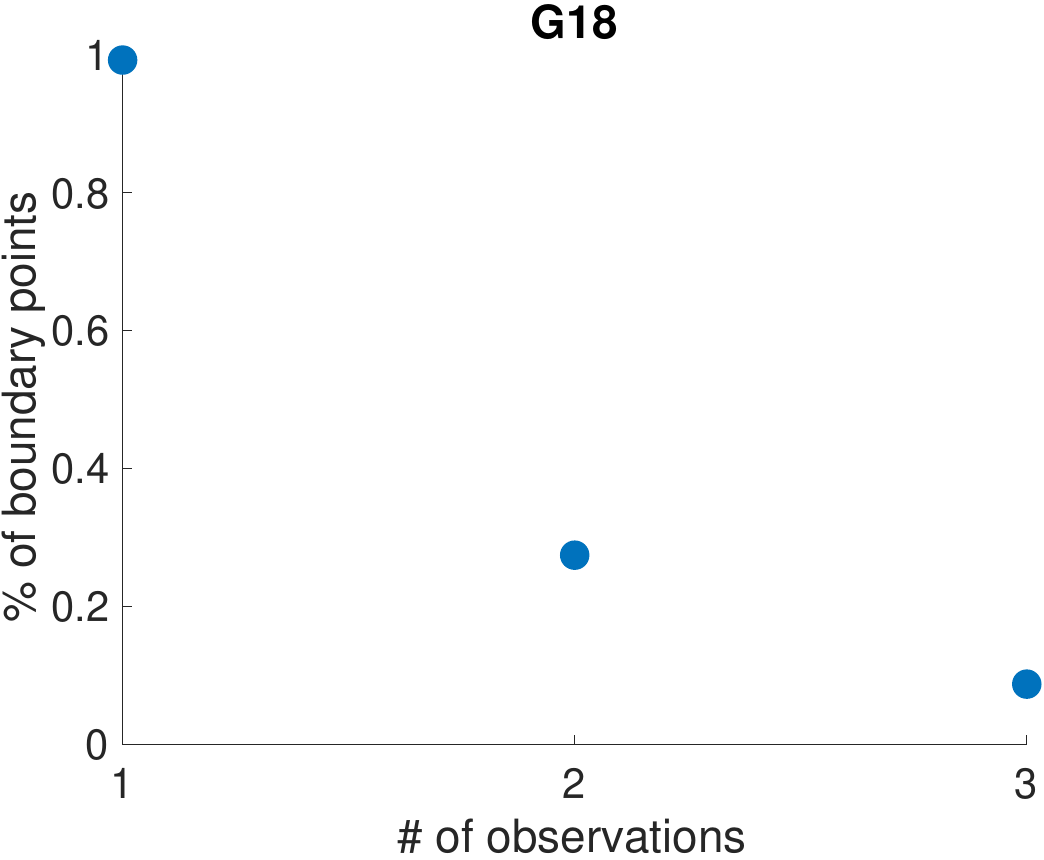}
\end{minipage}

\caption{From left to right, percentage of boundary points in $G_3$, $G_6$ and $G_{18}$ (uncolored 4-cycle). 
We determine the boundary points based on persistent homology computations in dimension 1 for $G_3$, dimensions 1 and 2 for $G_6$, and dim 2 for $G_{18}$.}
\label{fig:combined-cycles}
\end{figure}

\begin{figure}[h]
  \centering
  \includegraphics[width=0.9\textwidth]{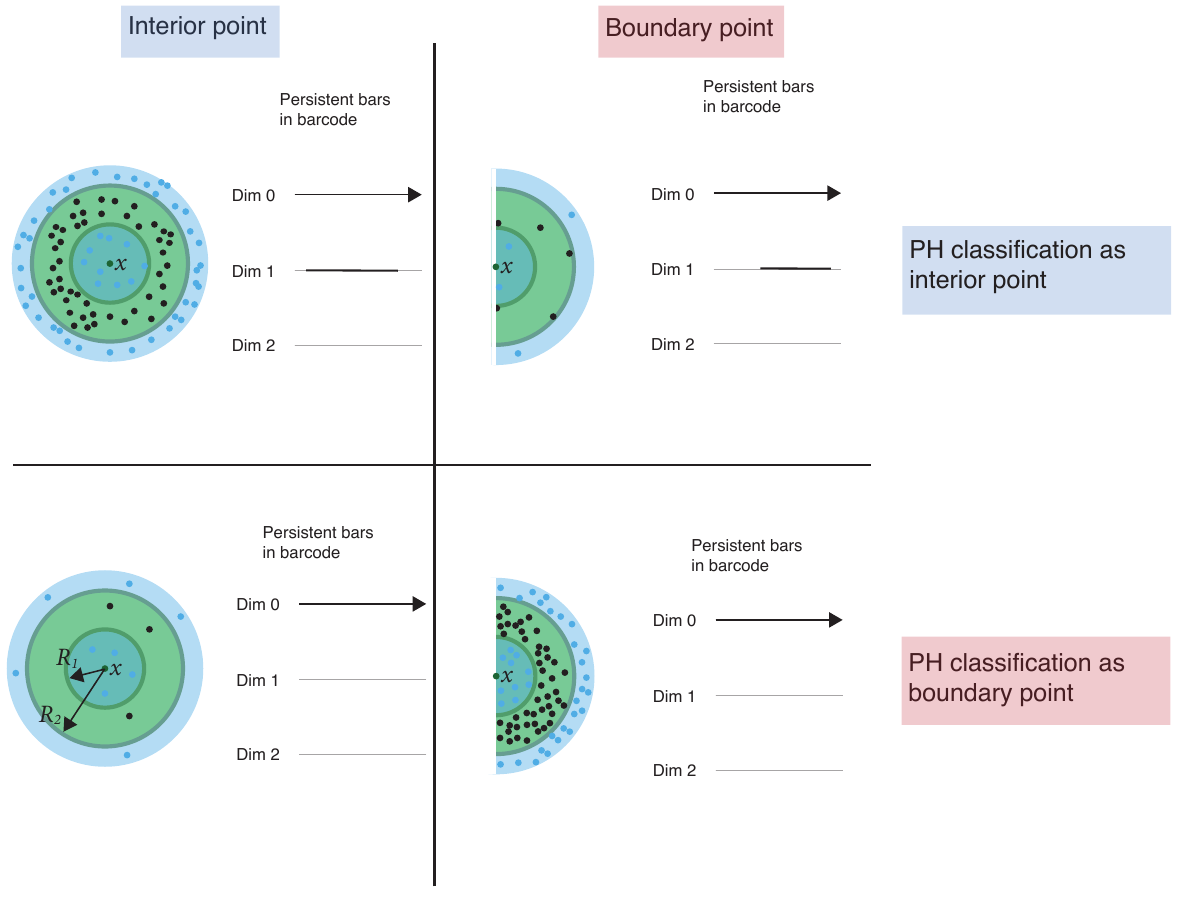}
  \caption{Possible classifications of interior points and boundary points. We show an example of possible classification of points sampled from a surface. Sparsity is a source of misclassifications for both interior and boundary points.}
  \label{fig:InteriorvsBoundary}
\end{figure}

When $A_{XX^T}=\pi_G(\mathcal{S}_{\mathrm{PD}})\cap N(\pi_G(XX^T),R_1,R_2)$ is not sufficiently dense, \Cref{alg:MLT estimation for large graphs} can produce false results. On the one hand, if there are insufficiently many points, \Cref{alg:local homology of sufficient stats} may be unable to build a $k$-sphere around an interior point, which results in misclassification of an interior point as a boundary point. On the other hand, when the sampling is not dense enough to create a contractible disk, it can classify a boundary point as an interior point. We show potential misclassifications due to sparsity in Fig. \ref{fig:InteriorvsBoundary}. 

\subsection{Sampling the cone of sufficient statistics}
\label{subsec:local-uniformization}

Local persistence homology methods are prone to misclassification of boundary and interior points if data is not dense enough in each annular region, as shown in Figure 5. Therefore, it is fundamental to understand how to sample points in the cone of sufficient statistics $\mathcal{C}_G$. 

Given a sample \(X_1,\dots,X_n \in \mathbb{R}^m\) with $n<m$, we call the sufficient statistic of its sample covariance matrix $S$ the \emph{target statistic} $\pi_G(S)$. We investigated global sampling (across $[-1,1]^{n\times m}$, the positive definite cone $\mathrm{PD}(m)$ and Cholesky space) and local sampling around the target statistic via perturbations  of \(X_1,\dots,X_n \in \mathbb{R}^m\), $S$ or Cholesky decompositions of $S$.
Our global experiments reveal a dichotomy depending on the sampling strategy: generating observations uniformly causes sample covariance projections to concentrate along the boundary $\partial\mathcal{C}_G$, while uniform sampling across the positive definite cone $\mathrm{PD}(m)$ does the exact opposite, clustering near the center and leaving the boundary sparse, see Figures~\ref{fig:graph1_suffStats}--\ref{fig:graph6_suffStats}. At the same time, local sampling via perturbations, such as adding $m-n$ observations of small magnitude to the data sample, also suffers from the clustering bias towards the center. 

The main challenge of the naive sampling methods is that even in small graphs, the pre-image of sufficient statistics lies in a high-dimensional space and for a given topologically representative neighborhood of $\pi_G(S)$, its pre-image lies in a small portion of the space. We illustrate this for 4-cycle $G_3$ and parametrization via Cholesky decomposition in Figure~\ref{fig:G3-SuffStats}. Here, the interior of the cone $\mathcal{C}_G$ was obtained by first sampling uniformly from  $\mathrm{PD}(4)$ with trace 1. We choose a $\Sigma_0$ from this sample such that its projection is centrally positioned in $\mathcal{C}_G$. We study the Cholesky decompositions of the sample points whose projections are in the neighborhood of the target statistic and put them on the $\{-\infty,0,\infty\}^{10}$ grid of deviations from the Cholesky decomposition of $\Sigma_0$. The histogram of counts of pre-images of sample points in the grid cells (Figure~\ref{fig:graph3_hist_Cholesky} in \Cref{sec:hist}) shows that the majority of cells have close to 0 observations and there are individual cells in the tail of the distribution with large number of pre-images of sample points. In this case, sampling only from the Cholesky grid cell with the maximum number of pre-images already provides a topologically representative neighborhood, as seen in Figure~\ref{fig:G3_synSuffStats}.

\begin{figure}[H]
  \centering
  \includegraphics[width=0.9\textwidth]{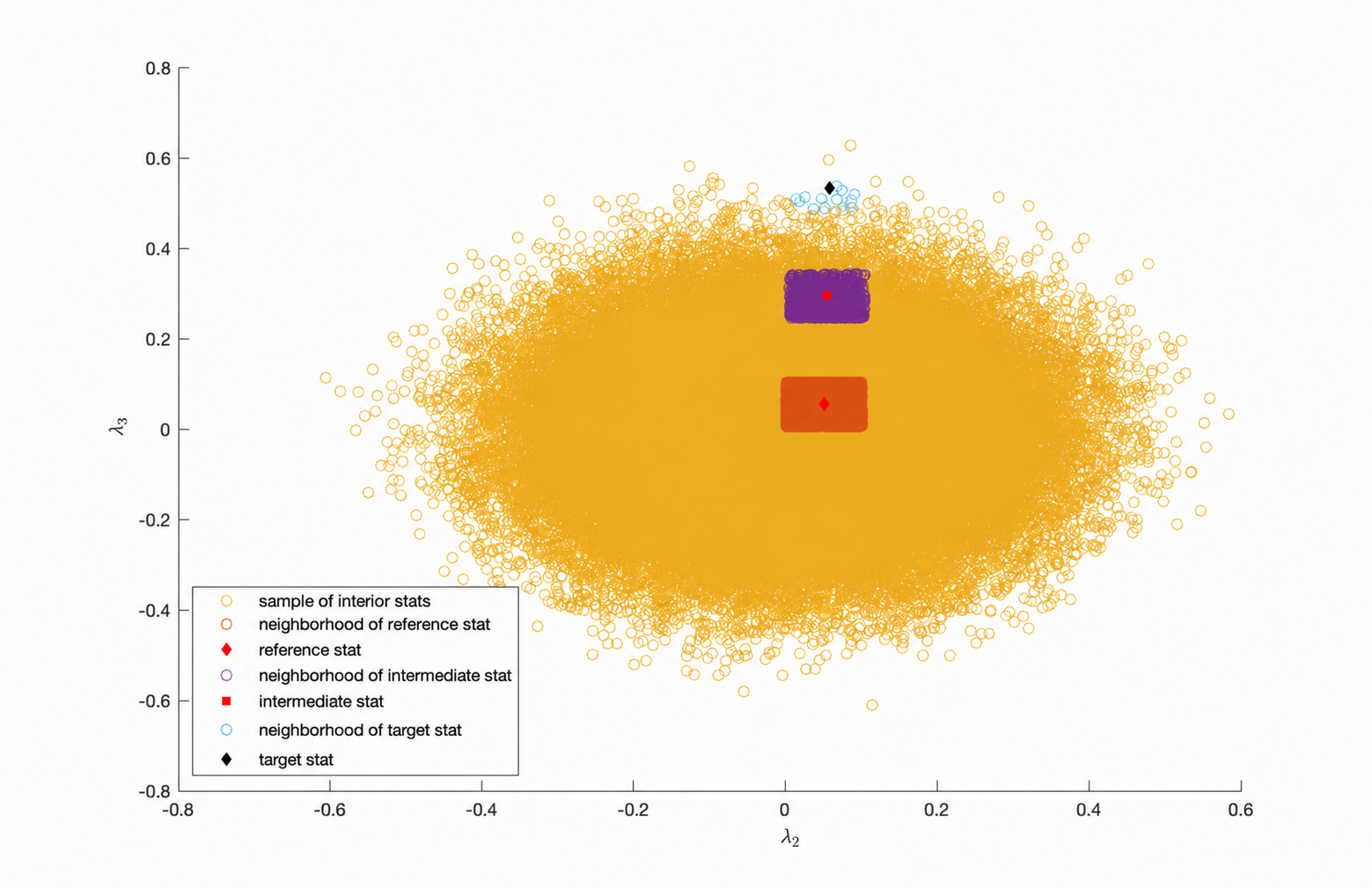}
  \caption{Neighborhoods of the reference, intermediate and target statistics in the cone of sufficient statistics for $G_3$.}
  \label{fig:G3-SuffStats}
\end{figure}

\begin{figure}[H]
  \centering
  \includegraphics[width=0.9\textwidth]{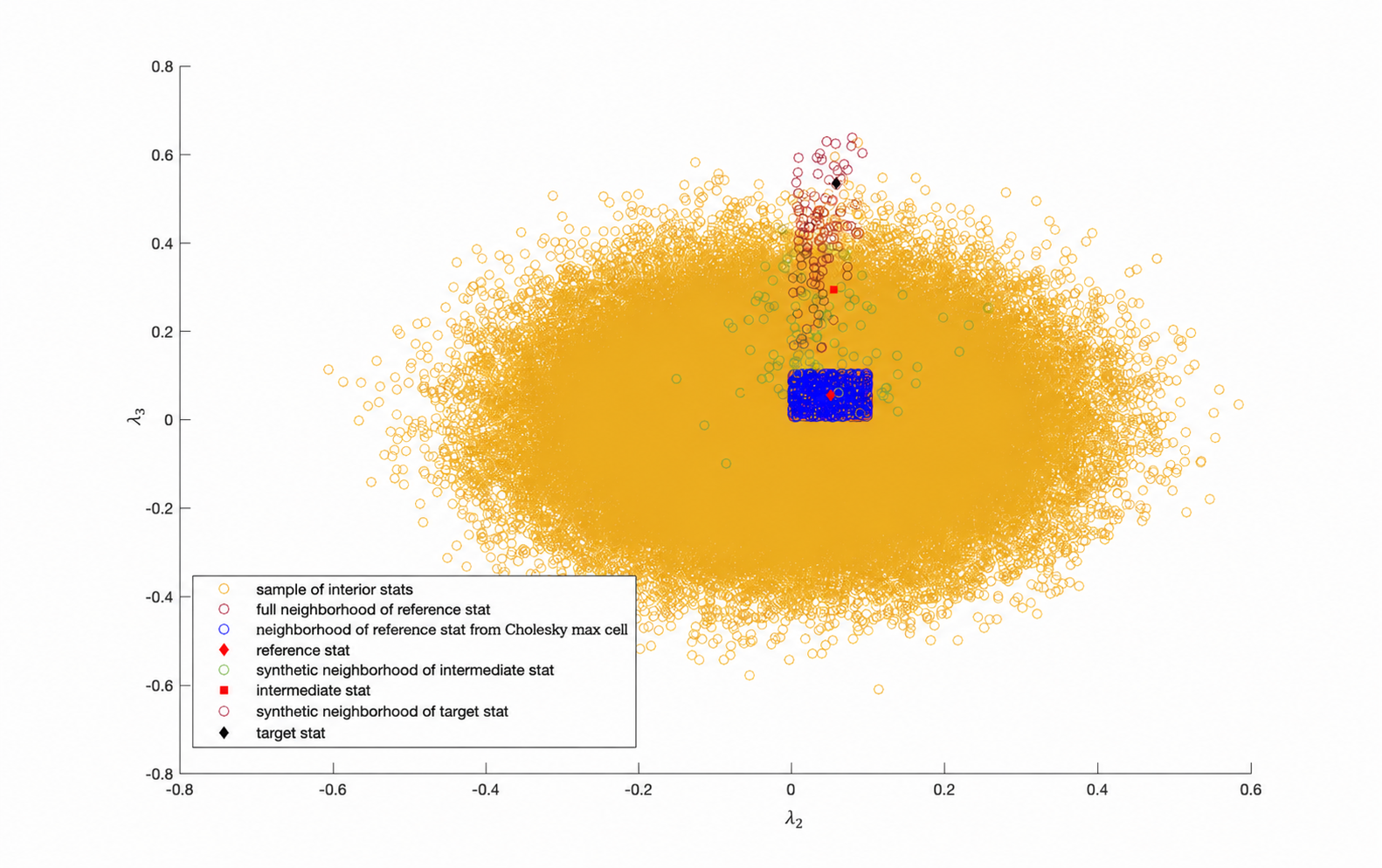}
  \caption{Synthetic neighborhoods of the target and intermediate statistics fixed in Fig.~\ref{fig:G3-SuffStats} for $G_3$. }
  \label{fig:G3_synSuffStats}
\end{figure}

This section introduces a method based on finding a relevant region in the Cholesky space to generate synthetic points that represent the local topology near the boundary of the cone. To ensure the uniqueness of Cholesky decomposition, we approximate $S$ by sufficiently close  $\widetilde \Sigma \in \mathrm{PD}(m)$. Let $\mathcal{S} \subset \mathrm{PD}(m)$ be a uniform sample from $\mathrm{PD}(m)$ with trace $1$ acquired using the algorithm in~\cite{mittelbach2012sampling}. We choose a \emph{reference statistic} $\pi_G(\Sigma_0)$ with $\Sigma_0\in\mathcal{S}$ and $\pi_G(\Sigma_0)$ centrally positioned in $\pi_G(\mathcal{S})$. This ensures the existence of dense sample neighborhood of $\pi_G(\Sigma_0)$ though the local geometry of its Cholesky decomposition
does not reflect the local geometry near the Cholesky decomposition of $\widetilde \Sigma$.

Consider the segment $p(\lambda)=(1-\lambda)\pi_G(\Sigma_0)+\lambda\pi_G(\widetilde \Sigma)$ with $\lambda\in[0,1]$.
Along this path we fix $0<\lambda^\ast<1$ and use the corresponding \emph{intermediate statistic} $\pi_G(\Sigma_{\mathrm{mid}})$ with
$\Sigma_{\mathrm{mid}} \in \mathcal{S}$ such that $d(p(\lambda^\ast), \pi_G(\Sigma_{\mathrm{mid}}))=\min_{\Sigma \in \mathcal{S}} d(p(\lambda^\ast), \pi_G(\Sigma))$. The goal is to identify the intermediate statistic whose projected neighborhood
has enough points in $\pi_G(\mathcal{S})$ to estimate local Cholesky behavior,
and lies sufficiently close to $\pi_G(\widetilde \Sigma)$ that it resembles the local behavior of the Cholesky decomposition of $\widetilde \Sigma$, as illustrated in Figure~\ref{fig:G3-SuffStats}.

We start by identifying the most populated Cholesky grid cells of the intermediate statistic following Algorithm~\ref{alg:intermediate-grid}. Let $\ell_{\mathrm{mid}}$ be the flattened vector of the Cholesky factor of $\Sigma_{\mathrm{mid}}$. 
For each PD matrix $\Sigma_j \in \mathcal{S}$ whose sufficient statistic $\pi_G(\Sigma_j)$ is in the neighborhood of $\pi_G(\Sigma_{\mathrm{mid}})$,
we flatten its Cholesky factor to a vector $\ell_j$ and compute deviations
$d_j=\ell_j-\ell_{\mathrm{mid}}$.
Binning these deviations into the grid with edges $\{-\infty,0,\infty\}^{m(m+1)/2}$
identifies a small set of relevant Cholesky coordinates.

Once the populated Cholesky grid cells have been identified around $\Sigma_{\mathrm{mid}}$, we construct a synthetic neighborhood by sampling Cholesky coordinates uniformly within the corresponding grid cells with Algorithm~\ref{alg:synthetic-neighborhood}. Each synthetic Cholesky factor is converted into a positive-definite matrix $\widehat{\Sigma}$, with positivity ensured by truncating the lower bounds of the diagonal coordinates, and mapped to the corresponding sufficient statistic $\widehat{\sigma}=\pi_G(\widehat{\Sigma})$.

Since the intermediate statistic is selected along the line segment connecting the reference statistic and the positive-definite approximation $\widetilde{\Sigma}$ of the target matrix, the resulting synthetic statistics provide an approximation of the local neighborhood of the target statistic. This synthetic neighborhood captures the local topological structure more faithfully than the neighborhood obtained directly from empirical covariance matrices, see Figure~\ref{fig:G3_synSuffStats}.

To evaluate the algorithms, we sampled 1000 PSD matrices derived from 1, 2, and 3 observations for the graphs in \cite[Table 2]{Uhler}. As shown in \Cref{fig:inhull_orig_synthetic}, synthetic neighborhoods are better at representing the local geometry of sufficient statistics whenever the graph has more than a single vertex color and the performance generally improves in the number of observations. At the same time, \Cref{fig:emprical_prob_all} shows that the performance of Algorithms~\ref{alg:intermediate-grid} and \ref{alg:synthetic-neighborhood} is sensitive to the number of color classes and the size of the data sample. Since the quality of the output of Algorithm~\ref{alg:intermediate-grid} depends largely on the ability to sample a dense representation of the PD cone, an approach to address this limitation in applications is to perform the computationally intense generation of the representation of the PD cone in a centralized environment with high computational power and make it available for distributed environments with limited computational power as input to local computation with Algorithm~\ref{alg:synthetic-neighborhood}.

\begin{algorithm}[H]
    \caption{Construction of Cholesky grid cells around an intermediate statistic}
    \label{alg:intermediate-grid}

    \KwData{
      $G$ -- colored graph on $[m]$\;

      $(X_1,\dots,X_n)$, with $X_i\in\mathbb{R}^m$ and $n<m$ --
      observations whose sufficient statistic $\pi_G(S)$ is the target
      statistic, where $S$ is the sample covariance matrix\;

      $\mathcal{S}\subset\mathrm{PD}(m)$ -- a uniform sample of
      positive-definite matrices with trace $1$\;

      $\pi_G(\Sigma_0)$ -- a reference statistic that is centrally positioned
      in $\pi_G(\mathcal{S})$, with $\Sigma_0\in\mathcal{S}$\;

      $\varepsilon>0$ -- tolerance for the positive-definite approximation of
      the target matrix\;
    }

    \KwResult{
      The intermediate Cholesky vector $\ell_{\mathrm{mid}}$ and a collection
      $\mathcal{C}$ of populated Cholesky grid cells
    }

    \Begin{
      Choose $\widetilde{\Sigma}\in\mathrm{PD}(m)$ such that $\|\widetilde{\Sigma}-S\|<\varepsilon$, where $\widetilde{\Sigma}$ is a PD approximation of the
      target PSD matrix $S$\;

      Construct the line segment
      \[
          p(\lambda)
          =
          (1-\lambda)\pi_G(\Sigma_0)
          +
          \lambda\pi_G(\widetilde{\Sigma}),
          \qquad
          \lambda\in[0,1];
      \]

      Choose $\lambda^\ast\in[0,1]$ such that $p(\lambda^\ast)$ has a
      sufficiently dense neighborhood $\mathcal{N}^\ast$ in
      $\pi_G(\mathcal{S})$\;

      Identify the intermediate matrix
      $\Sigma_{\mathrm{mid}}\in\mathcal{S}$ by
      \[
          \Sigma_{\mathrm{mid}}
          \in
          \arg\min_{\Sigma\in\mathcal{S}}
          d\!\left(
              p(\lambda^\ast),
              \pi_G(\Sigma)
          \right);
      \]

      Extract all matrices $\Sigma_j\in\mathcal{S}$ such that
      \[
          \pi_G(\Sigma_j)\in\mathcal{N}^\ast;
      \]

      Compute the upper-triangular Cholesky factor
      $U_{\mathrm{mid}}$ of $\Sigma_{\mathrm{mid}}$ and flatten its
      upper-triangular entries into
      \[
          \ell_{\mathrm{mid}}\in\mathbb{R}^{q},
          \qquad
          q=\frac{m(m+1)}{2};
      \]

      \ForEach{extracted matrix $\Sigma_j$}{
          Compute its upper-triangular Cholesky factor $U_j$\;

          Flatten the upper-triangular entries of $U_j$ into
          $\ell_j\in\mathbb{R}^{q}$\;

          Compute the deviation
          \[
              d_j=\ell_j-\ell_{\mathrm{mid}};
          \]

          Assign $d_j$ to a grid cell
          $g_j=(g_{j,1},\dots,g_{j,q})\in\{-,+\}^{q}$, where
          \[
              g_{j,k}
              =
              \begin{cases}
                  -, & d_{j,k}<0,\\
                  +, & d_{j,k}\geq 0;
              \end{cases}
          \]
      }

      Identify a collection $\mathcal{C}$ of sufficiently populated
      Cholesky grid cells\;

      \KwRet{$\ell_{\mathrm{mid}},\mathcal{C}$}
    }
\end{algorithm}

\begin{algorithm}[H]
\caption{Synthetic neighborhood generation from Cholesky grid cells}
\label{alg:synthetic-neighborhood}

\KwData{
Flattened Cholesky factor $\ell_{\mathrm{mid}}\in\mathbb{R}^{q}$ of the
intermediate matrix, where $q=m(m+1)/2$\;

Selected Cholesky grid cells $\mathcal C\subseteq\{-,+\}^{q}$\;

Perturbation scale $\delta$\;

Number of synthetic samples per grid cell $N_{\mathrm{cell}}$\;
}

\KwResult{
Synthetic neighborhood $\widehat{\mathcal N}$ approximating the local
topology of the target statistic
}

\Begin{

Initialize $\widehat{\mathcal N}\leftarrow\varnothing$\;

\ForEach{$g\in\mathcal C$}{
    \For{$r=1$ \KwTo $N_{\mathrm{cell}}$}{

        \For{$k=1$ \KwTo $q$}{
            Sample $\widehat\ell_k^{(g,r)}$ uniformly from the
            half-interval determined by $g_k$; if $k$ corresponds to a
            diagonal Cholesky entry, truncate the lower endpoint to
            $10^{-6}$\;
        }

        Form the upper-triangular Cholesky factor
        $\widehat U^{(g,r)}$ from
        $\widehat\ell^{(g,r)}$\;

        Compute
        $\widehat\Sigma^{(g,r)}
        =
        (\widehat U^{(g,r)})^{\mathsf T}
        \widehat U^{(g,r)}$
        and rescale it to have trace $1$\;

        Compute the synthetic statistic
        $\widehat \sigma^{(g,r)}
        =
        \pi_G(\widehat\Sigma^{(g,r)})$\;

        Add $\widehat \sigma^{(g,r)}$ to
        $\widehat{\mathcal N}$\;
    }
}

\KwRet{$\widehat{\mathcal N}$}

}
\end{algorithm}

\begin{figure}[H]
  \centering
  \includegraphics[width=0.7\textwidth]{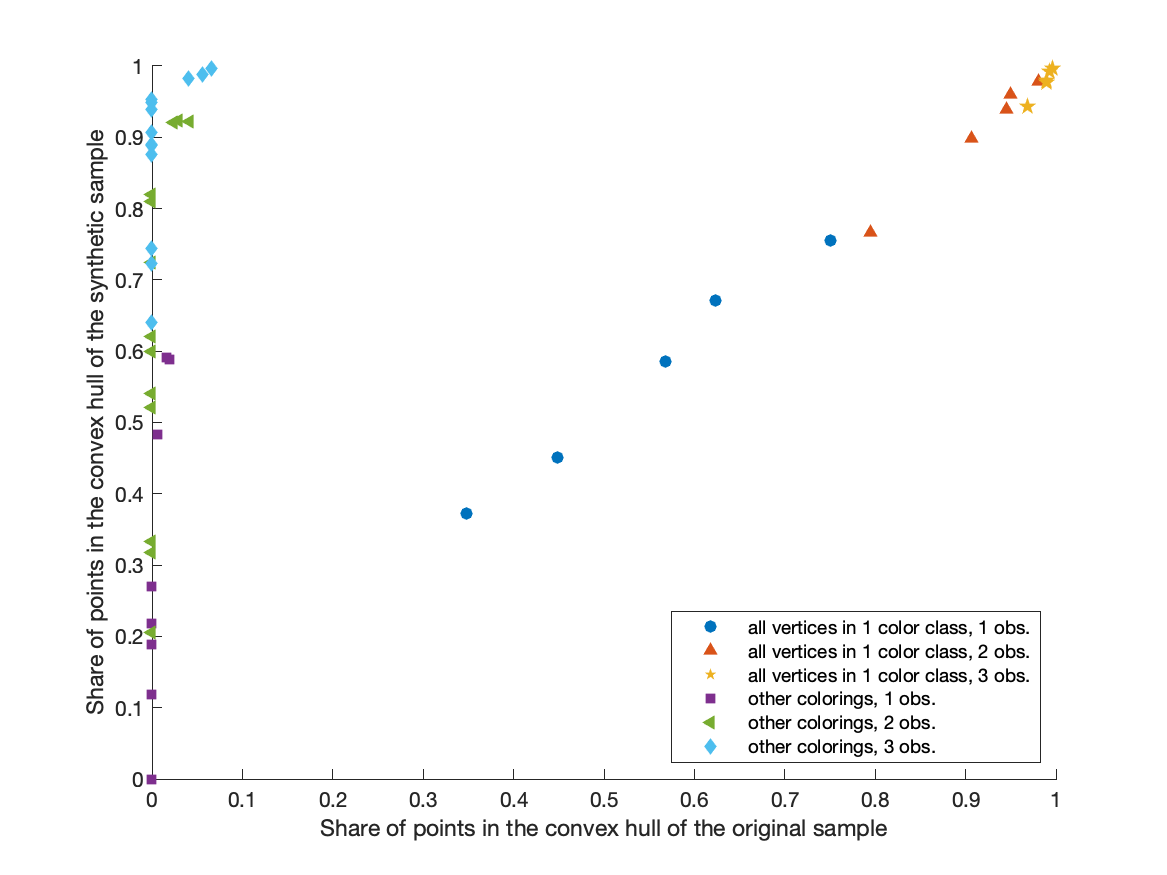}
  \caption{Comparison of original and synthetic neighbourhoods of sufficient statistics of PSD matrices for the graphs in \cite[Table 2]{Uhler}.}
  \label{fig:inhull_orig_synthetic}
\end{figure}

\begin{figure}[H]
  \centering
  \includegraphics[width=0.7\textwidth]{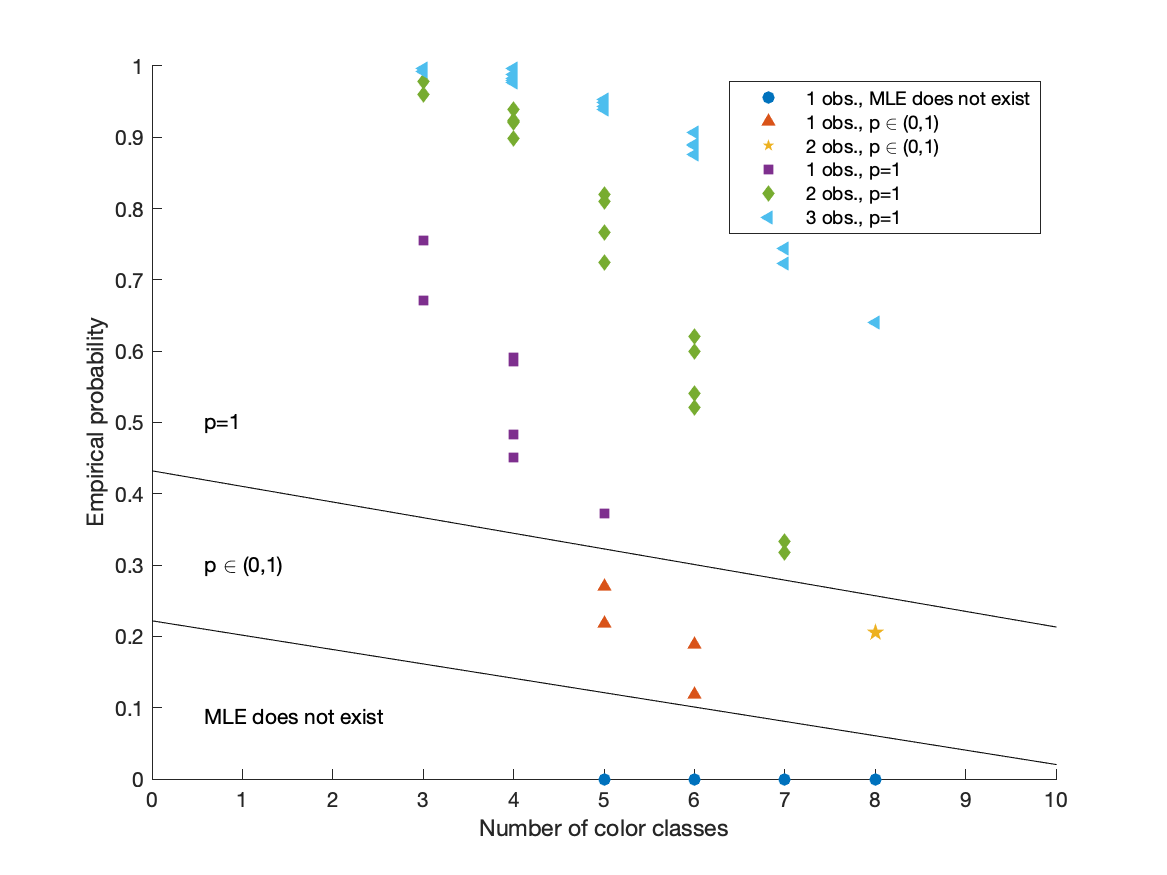}
  \caption{ Empirical probability $p$ of the existence of MLE for the 4-cycles in \cite[Table 2]{Uhler}, ranging from 3 to 8 color classes. The sample used to compute $p$ has been obtained with \Cref{alg:synthetic-neighborhood}. MLE existence is determined using convex hull containment of the target statistic in the synthetic sample.}
  \label{fig:emprical_prob_all}
\end{figure}

\section{Discussion}

Efficient computation of the maximum likelihood threshold ($\mathrm{mlt}(G)$) for a given CGGM currently remains an open challenge. 
However, the generic completion rank ($\mathrm{gcr}(G)$) can be computed in random polynomial time with an adapted version of \cite[Algorithm~3.2]{Bernstein2020GCR}. While Bernstein et al.\ \cite{Bernstein2022MLT} conjecture that $\mathrm{gcr}(G)=\mathrm{mlt}(G)$ for almost all uncolored $G$, we are still far from understanding the broader landscape in the colored setting. 

A first analysis shows that for colored graphs with three vertices, the equality $\mathrm{gcr}(G) = \mathrm{mlt}(G)$ always holds, and $\mathrm{mlt}(G) = \mathrm{wmlt}(G)$ in 19 out of the 25 graphs. This raises fundamental questions:
\begin{itemize}
\item How often do we have $\mathrm{gcr}(G)=\mathrm{mlt}(G)$?
\item Can these relationships be characterized in terms of the coloring of the graph?
\end{itemize}
\noindent
Regardless of these considerations, there exist sufficient instances with $\mathrm{mlt}(G)<\mathrm{gcr}(G)$ to justify pursuing methods for computing the MLT directly.
Here, we explored computational ideas from algebraic geometry and TDA to address these questions.

From the perspective of algebraic geometry, we focused on two approaches:
\begin{enumerate}
    \item \emph{Computing the vanishing ideal of the projection map.} This approach is conceptually powerful: obtaining the vanishing ideal provides complete algebraic information about the model. However, its scalability is limited. The main bottlenecks are  
    (1) Gröbner basis computations required to determine the ideal, and  
    (2) the implementation of positive certificates via sums of squares (SOS).
    \begin{itemize}
    \item \textbf{Direction for future work 1.} Investigate numerical methods for computing the vanishing ideal using witness sets.
    \item \textbf{Direction for future work 2.} Explore alternative positivity certificates beyond SOS and evaluate their computational feasibility.
    \end{itemize}

    \item \emph{Solving matrix completion problems.} This method can be pursued either by directly finding positive definite completions 
    (\Cref{th:Sinn1}), or by using the equivalent characterization based 
    on kernels
    (\Cref{th:Sinn2}). This perspective provides powerful tools for computing $\mathrm{mlt}(G)$ in specific graph families and is often more tractable when computing $\mathrm{wmlt}(G)$. Nonetheless, general-purpose algorithms for this approach have yet to be developed.
    \begin{itemize}
    \item \textbf{Direction for future work 3.} Exploit \Cref{th:Sinn1} and \Cref{th:Sinn2} to design algorithms for computing $\mathrm{mlt}(G)$ and $\mathrm{wmlt}(G)$.
    \end{itemize}
\end{enumerate}

From the TDA perspective, we investigated whether persistent homology of annular neighbourhoods can be used to detect points close to the boundary of the cone of sufficient statistics. We find that this approach is promising, especially as a way to mitigate the computational challenges faced by algebraic-geometry methods, but the results depend strongly on the sampling method, which must be sufficiently uniform. We propose an algorithm for local uniformization of the near-boundary sufficient statistics. 

However, the method only captures relative trends but does 
not provide sufficiently sharp decision boundaries. 

\begin{itemize}
    \item \textbf{Direction for future work 4.} 
   Investigate alternative methods for neighborhood  construction that allow for variable density in samples, e.g. dynamic choice of radius, $k$-NN algorithms, methods from manifold learning theory \cite{manifold}.
    \item \textbf{Direction for future work 5.} Investigate the theoretical properties of the distribution of active Cholesky coordinates to enable more efficient sampling.
     \item \textbf{Direction for future work 6.} 
     Generalize algorithm in \cite{mittelbach2012sampling} to uniformly sample from $\mathrm{PSD}(m)$ with trace constraint
     and understand theoretical properties of projections onto the cone of sufficient statistics $\mathcal{C}_G$ of uniform distributions in trace-constrained $\mathrm{PD}(m)$ and $\mathrm{PSD}(m)$.
    \end{itemize}

\section{Code and data availability}
All TDA analyses were implemented in {\sc Matlab} using ripser \cite{bauer2021ripser} for persistent homology computation. All algebraic geometry computations and algorithms have been performed and implemented in \texttt{Macaulay2} \cite{M2}, and are build on the packages \texttt{GraphicalModels} \cite{garcia2013graphical}, \texttt{GraphicalModelsMLE} \cite{M2GraphicalModelsMLE} and \texttt{SumsOfSquares} \cite{M2SOS}. Source code and experimental data are made available on \url{https://www.dropbox.com/scl/fo/5n4kiff0xv63uw49n3jni/ADR5R9AoOGPUVMazqB9qpb4?rlkey=074sr1m93kdjk6p63iq8uotng&st=an67mvgo&dl=0}. 

\section{Acknowledgments}
The authors express their gratitude to the organizers of Biomat 2023: Multiscale Methods at the Frontier Between Data and Mathematical Models, held at the Centre de Recerca Matemàtica (CRM). In particular, the course on TDA delivered by the third author led to expanding the initial work of the first two authors into this joint collaboration.

R.H. was supported by the postdoctoral fellowship programme Beatriu de Pin\'os (ref. 2021BP00119), funded by the Secretary of
Universities and Research (Government of Catalonia) and partially supported by projects with references PID2019-103849G-I00 and PID2023-146936NB-I00, financed by 
the Spanish State Agency,
MCIN/AEI/10.13039/501100011033/
FEDER,UE, and by the AGAUR project 2021 SGR 00603 Geometry of Manifolds and Applications, GEOMVAP. 

%%
%% Bibliography
%%

%% Please use bibtex, 

\bibliography{JOCG_bibliography}

\newpage

\maketitle

%%
%% Bibliography
%%

%% Please use bibtex, 

%\bibliography{socg-lipics-v2021-paper}

\appendix
\section{Supplementary Material}\label{app:SM}

\subsection{Colored 3 and 4-cycles}

\begin{table}[htbp]
\centering
\caption{Colored 4-cycles from \cite[Table 2]{Uhler} studied in this paper} 
\label{table: all graphs}
\renewcommand{\arraystretch}{1.7}
\resizebox{\textwidth}{!}{%
\begin{tabular}{c c p{3.2cm} c c}
\hline
\textbf{Graph} & $\mathbf{K}$ & \textbf{1 obs.} & \textbf{2 obs.} & $\mathbf{\ge 3\text{ obs.}}$ \\
\hline

% Row 1: G3
\begin{tikzpicture}[-, every node/.style={circle, draw, inner sep=1.5pt, font=\scriptsize}, line width=0.8pt, baseline={([yshift=-0.5ex]current bounding box.center)}]
  \node[draw=none] at (-0.5, 0.8) {$G_3$:};  
  \node[fill=orange] (1) at (0,0) {$4$};
  \node[fill=orange] (2) at (0,0.8) {$1$};
  \node[fill=orange] (3) at (0.8,0.8) {$2$};
  \node[fill=orange] (4) at (0.8,0) {$3$}; 
  \draw[color=cyan, line width=1.5pt] (1) -- (2) -- (3);   
  \draw[color=green, line width=1.5pt] (3) -- (4) -- (1);   
\end{tikzpicture} 
& 
$\left(\begin{smallmatrix}
\lambda_1 & \lambda_2 & 0 & \lambda_2 \\
\lambda_2 & \lambda_1 & \lambda_3 & 0 \\
0 & \lambda_3 & \lambda_1 & \lambda_3 \\
\lambda_2 & 0 & \lambda_3 & \lambda_1
\end{smallmatrix}\right)$ 
& $p=1$ & $p=1$ & $p=1$ \\ \hline

% Row 2: G6
\begin{tikzpicture}[-, every node/.style={circle, draw, inner sep=1.5pt, font=\scriptsize}, line width=0.8pt, baseline={([yshift=-0.5ex]current bounding box.center)}]
  \node[draw=none] at (-0.5, 0.8) {$G_6$:};  
  \node[fill=orange] (1) at (0,0) {$4$};
  \node[fill=orange] (2) at (0,0.8) {$1$};
  \node[fill=orange] (3) at (0.8,0.8) {$2$};
  \node[fill=orange] (4) at (0.8,0) {$3$}; 
  \draw[color=cyan, line width=1.5pt] (1) -- (2) -- (3);   
  \draw (3) -- (4) -- (1);   
\end{tikzpicture} 
& 
$\left(\begin{smallmatrix}
\lambda_1 & \lambda_2 & 0 & \lambda_2 \\
\lambda_2 & \lambda_1 & \lambda_3 & 0 \\
0 & \lambda_3 & \lambda_1 & \lambda_4 \\
\lambda_2 & 0 & \lambda_4 & \lambda_1
\end{smallmatrix}\right)$ 
& $p=1$ & $p=1$ & $p=1$ \\ \hline

% Row 3: G9
\begin{tikzpicture}[-, every node/.style={circle, draw, inner sep=1.5pt, font=\scriptsize}, line width=0.8pt, baseline={([yshift=-0.5ex]current bounding box.center)}]
  \node[draw=none] at (-0.5, 0.8) {$G_9$:};  
  \node[fill=orange] (1) at (0,0) {$4$};
  \node (2) at (0,0.8) {$1$};
  \node[fill=orange] (3) at (0.8,0.8) {$2$};
  \node (4) at (0.8,0) {$3$}; 
  \draw[color=cyan, line width=1.5pt] (1) -- (2) -- (3);   
  \draw (3) -- (4) -- (1);   
\end{tikzpicture} 
& 
$\left(\begin{smallmatrix}
\lambda_1 & \lambda_4 & 0 & \lambda_4 \\
\lambda_4 & \lambda_2 & \lambda_5 & 0 \\
0 & \lambda_5 & \lambda_3 & \lambda_6 \\
\lambda_4 & 0 & \lambda_6 & \lambda_2
\end{smallmatrix}\right)$ 
& No (Prop.~\ref{proposition: resolved conjectures}) & $p=1$ & $p=1$ \\ \hline

% Row 4: G11
\begin{tikzpicture}[-, every node/.style={circle, draw, inner sep=1.5pt, font=\scriptsize}, line width=0.8pt, baseline={([yshift=-0.5ex]current bounding box.center)}]
  \node[draw=none] at (-0.5, 0.8) {$G_{11}$:};  
  \node[fill=orange] (1) at (0,0) {$4$};
  \node[fill=violet] (2) at (0,0.8) {$1$};
  \node[fill=orange] (3) at (0.8,0.8) {$2$};
  \node[fill=violet] (4) at (0.8,0) {$3$}; 
  \draw[color=cyan, line width=1.5pt] (1) -- (2);
  \draw[color=cyan, line width=1.5pt] (3) -- (4);   
  \draw (2) -- (3);
  \draw (4) -- (1);   
\end{tikzpicture} 
& 
$\left(\begin{smallmatrix}
\lambda_1 & \lambda_3 & 0 & \lambda_4 \\
\lambda_3 & \lambda_2 & \lambda_4 & 0 \\
0 & \lambda_4 & \lambda_1 & \lambda_5 \\
\lambda_4 & 0 & \lambda_5 & \lambda_2
\end{smallmatrix}\right)$ 
& No (Prop.~\ref{proposition: resolved conjectures}) & $p=1$ & $p=1$ \\ \hline

% Row 5: G12
\begin{tikzpicture}[-, every node/.style={circle, draw, inner sep=1.5pt, font=\scriptsize}, line width=0.8pt, baseline={([yshift=-0.5ex]current bounding box.center)}]
  \node[draw=none] at (-0.5, 0.8) {$G_{12}$:};  
  \node[fill=orange] (1) at (0,0) {$4$};
  \node[fill=orange] (2) at (0,0.8) {$1$};
  \node[fill=orange] (3) at (0.8,0.8) {$2$};
  \node[fill=orange] (4) at (0.8,0) {$3$}; 
  \draw (1) -- (2) -- (3) -- (4) -- (1);   
\end{tikzpicture} 
& 
$\left(\begin{smallmatrix}
\lambda_1 & \lambda_2 & 0 & \lambda_5 \\
\lambda_2 & \lambda_1 & \lambda_3 & 0 \\
0 & \lambda_3 & \lambda_1 & \lambda_4 \\
\lambda_5 & 0 & \lambda_4 & \lambda_1
\end{smallmatrix}\right)$ 
& $p=1$ (Prop.~\ref{prop:mCycles}) & $p=1$ & $p=1$ \\ \hline

% Row 6: G14
\begin{tikzpicture}[-, every node/.style={circle, draw, inner sep=1.5pt, font=\scriptsize}, line width=0.8pt, baseline={([yshift=-0.5ex]current bounding box.center)}]
  \node[draw=none] at (-0.5, 0.8) {$G_{14}$:};  
  \node[fill=orange] (1) at (0,0) {$4$};
  \node[fill=violet] (2) at (0,0.8) {$1$};
  \node[fill=orange] (3) at (0.8,0.8) {$2$};
  \node[fill=violet] (4) at (0.8,0) {$3$}; 
  \draw (1) -- (2) -- (3) -- (4) -- (1);   
\end{tikzpicture} 
& 
$\left(\begin{smallmatrix}
\lambda_1 & \lambda_3 & 0 & \lambda_6 \\
\lambda_3 & \lambda_2 & \lambda_4 & 0 \\
0 & \lambda_4 & \lambda_1 & \lambda_5 \\
\lambda_6 & 0 & \lambda_5 & \lambda_2
\end{smallmatrix}\right)$ 
& No (Prop.~\ref{proposition: resolved conjectures}) & $p=1$ & $p=1$ \\ \hline

% Row 7: G17
\begin{tikzpicture}[-, every node/.style={circle, draw, inner sep=1.5pt, font=\scriptsize}, line width=0.8pt, baseline={([yshift=-0.5ex]current bounding box.center)}]
  \node[draw=none] at (-0.5, 0.8) {$G_{17}$:};  
  \node[fill=orange] (1) at (0,0) {$4$};
  \node (2) at (0,0.8) {$1$};
  \node[fill=orange] (3) at (0.8,0.8) {$2$};
  \node (4) at (0.8,0) {$3$}; 
  \draw (1) -- (2) -- (3) -- (4) -- (1);   
\end{tikzpicture} 
& 
$\left(\begin{smallmatrix}
\lambda_1 & \lambda_4 & 0 & \lambda_7 \\
\lambda_4 & \lambda_2 & \lambda_5 & 0 \\
0 & \lambda_5 & \lambda_3 & \lambda_6 \\
\lambda_7 & 0 & \lambda_6 & \lambda_2
\end{smallmatrix}\right)$ 
& No (Prop.~\ref{proposition: resolved conjectures}) & $p=1$ & $p=1$ \\ \hline

% Row 8: G18
\begin{tikzpicture}[-, every node/.style={circle, draw, inner sep=1.5pt, font=\scriptsize}, line width=0.8pt, baseline={([yshift=-0.5ex]current bounding box.center)}]
  \node[draw=none] at (-0.5, 0.8) {$G_{18}$:};  
  \node (1) at (0,0) {$4$};
  \node (2) at (0,0.8) {$1$};
  \node (3) at (0.8,0.8) {$2$};
  \node (4) at (0.8,0) {$3$}; 
  \draw (1) -- (2) -- (3) -- (4) -- (1);   
\end{tikzpicture} 
& 
$\left(\begin{smallmatrix}
\lambda_1 & \lambda_5 & 0 & \lambda_8 \\
\lambda_5 & \lambda_2 & \lambda_6 & 0 \\
0 & \lambda_6 & \lambda_3 & \lambda_7 \\
\lambda_8 & 0 & \lambda_7 & \lambda_4
\end{smallmatrix}\right)$ 
& No & $p\in(0,1)$ & $p=1$ \\ \hline

\end{tabular}%
}
\end{table}

\begin{landscape}
\begin{table}[p]
\centering
\caption{ML thresholds for all colored 3-cycles, together with their elimination ideals.}
\label{table:3cycles}
\renewcommand{\arraystretch}{1.2}
\setlength{\tabcolsep}{4pt} 

\resizebox{0.95\linewidth}{!}{%
\begin{tabular}{|p{3.2cm}|c|c|c|c|p{8.5cm}|c|p{4.8cm}|}
\hline
\textbf{Graph} & \textbf{Conc. matrix} & \textbf{dim} & \textbf{WMLT} & \textbf{MLT} & \textbf{$I_{G,1}$} & \textbf{$I_{G,2}$} & \textbf{Sufficient statistics} \\
\hline

uncolored & 
$\left(\begin{smallmatrix} \lambda_1 & \lambda_4 & \lambda_5\\ \lambda_4 & \lambda_2 &\lambda_6\\ \lambda_5 & \lambda_6 &\lambda_3 \end{smallmatrix}\right)$ 
& 6 & 3 & 3 & 
\small $t_4^2-4t_2t_1, t_4^2-4t_2t_1, t_5t_4-2t_6t_1, t_6t_4-2t_5t_2, t_5^2-4t_3t_1, t_6t_5-2t_4t_3, t_6^2-4t_3t_2$ 
& \parbox{2.8cm}{\raggedright \scriptsize $t_6t_5t_4-t_4^2t_3-t_5^2t_2-t_6^2t_1+4t_3t_2t_1$}
& \small $t_1=s_{11}, t_2=s_{22}, t_3=s_{33}$,\newline
$t_4=2s_{12}, t_5=2s_{13}, t_6=2s_{23}$ \\ \hline

2 vertices & 
$\left(\begin{smallmatrix} \lambda_1 & \lambda_3 & \lambda_4\\ \lambda_3 & \lambda_1 &\lambda_5\\ \lambda_4 & \lambda_5 &\lambda_2 \end{smallmatrix}\right)$ 
& 5 & 2 & 2 & 
\small $t_4t_3-2t_5t_2$, $t_4^2+t_3^2-4t_5t_1$ 
& 0 
& \small $t_1=s_{11}+s_{22}, t_2=s_{33}$,\newline
$t_3=2s_{12}, t_4=2s_{13}, t_5=2s_{23}$ \\ \hline

2 edges & 
$\left(\begin{smallmatrix} \lambda_1 & \lambda_4 & \lambda_4\\ \lambda_4 & \lambda_2 &\lambda_5\\ \lambda_4 & \lambda_5 &\lambda_3 \end{smallmatrix}\right)$ 
& 5 & 2 & 2 & 
\small $t_4^2-4t_5t_1-4t_3t_1-4t_2t_1$, $t_5^2-4t_3t_2$ 
& 0 
& \small $t_1=s_{11}, t_2=s_{22}, t_3=s_{33}$,\newline
$t_4=2s_{12}+2s_{13}, t_5=2s_{23}$ \\ \hline

2 vert. 2 edges\newline(excl. edge) & 
$\left(\begin{smallmatrix} \lambda_1 & \lambda_3 & \lambda_4\\ \lambda_3 & \lambda_1 &\lambda_4\\ \lambda_4 & \lambda_4 &\lambda_2 \end{smallmatrix}\right)$ 
& 4 & 2 & 2 & 
\small $t_4^2-4t_3t_2-4t_2t_1$ 
& 0 
& \small $t_1=s_{11}+s_{22}, t_2=s_{33}$,\newline
$t_3=2s_{12}, t_4=2s_{13}+2s_{23}$ \\ \hline

2 vert. 2 edges\newline(incl. edge) & 
$\left(\begin{smallmatrix} \lambda_1 & \lambda_3 & \lambda_3\\ \lambda_3 & \lambda_1 &\lambda_4\\ \lambda_3 & \lambda_4 &\lambda_2 \end{smallmatrix}\right)$ 
& 4 & 1 & 2 & 
\small $t_4^4+4t_4^3t_2+4t_4^2t_2^2+4t_3^2t_2^2-4t_4^2t_2t_1-16t_4t_2^2t_1-16t_2^3t_1$ 
& 0 
& \small $t_1=s_{11}+s_{22}, t_2=s_{33}$,\newline
$t_3=2s_{12}+2s_{13}, t_4=2s_{23}$ \\ \hline

3 vertices & 
$\left(\begin{smallmatrix} \lambda_1 & \lambda_2 & \lambda_3\\ \lambda_2 & \lambda_1 &\lambda_4\\ \lambda_3 & \lambda_4 &\lambda_1 \end{smallmatrix}\right)$ 
& 4 & 1 & 2 & 
\small $t_4^2t_3^2+t_4^2t_2^2+t_3^2t_2^2-2t_4t_3t_2t_1$ 
& 0 
& \small $t_1=s_{11}+s_{22}+s_{33}$,\newline
$t_4=2s_{12}, t_5=2s_{13}, t_6=2s_{23}$ \\ \hline

3 edges & 
$\left(\begin{smallmatrix} \lambda_1 & \lambda_4 & \lambda_4\\ \lambda_4 & \lambda_2 &\lambda_4\\ \lambda_4 & \lambda_4 &\lambda_3 \end{smallmatrix}\right)$ 
& 4 & 1 & 2 & 
\small $t_4^4-8t_4^2t_3t_2+16t_3^2t_2^2-8t_4^2t_3t_1-8t_4^2t_2t_1-64t_4t_3t_2t_1-32t_3^2t_2t_1-32t_3t_2^2t_1+16t_3^2t_1^2-32t_3t_2t_1^2+16t_2^2t_1^2$ 
& 0 
& \small $t_1=s_{11}, t_2=s_{22}, t_3=s_{33}$,\newline
$t_4=2s_{12}+2s_{13}+2s_{23}$ \\ \hline

3 vert. 2 edges & 
$\left(\begin{smallmatrix} \lambda_1 & \lambda_2 & \lambda_2\\ \lambda_2 & \lambda_1 &\lambda_3\\ \lambda_2 & \lambda_3 &\lambda_1 \end{smallmatrix}\right)$ 
& 2 & 1 & 1 & 
\small 0 
& 0 
& \small $t_1=s_{11}+s_{22}+s_{33}$,\newline
$t_2=2s_{12}+2s_{13}, t_3=2s_{23}$ \\ \hline

2 vert. 3 edges & 
$\left(\begin{smallmatrix} \lambda_1 & \lambda_3 & \lambda_3\\ \lambda_3 & \lambda_1 &\lambda_3\\ \lambda_3 & \lambda_3 &\lambda_2 \end{smallmatrix}\right)$ 
& 2 & 1 & 1 & 
\small 0 
& 0 
& \small $t_1=s_{11}+s_{22}, t_2=s_{33}$,\newline
$t_3=2s_{12}+2s_{13}+2s_{23}$ \\ \hline

3 vert. 3 edges & 
$\left(\begin{smallmatrix} \lambda_1 & \lambda_2 & \lambda_2\\ \lambda_2 & \lambda_1 &\lambda_2\\ \lambda_2 & \lambda_2 &\lambda_1 \end{smallmatrix}\right)$ 
& 2 & 1 & 1 & 
\small 0 
& 0 
& \small $t_1=s_{11}+s_{22}+s_{33}$,\newline
$t_2=2s_{12}+2s_{13}+2s_{23}$ \\ \hline

\end{tabular}%
}
\end{table}
\end{landscape}

\subsection{Sampling the cone of sufficient statistics}\label{sec:graphs}

\begin{figure}[h]
  \centering
  \includegraphics[width=\textwidth]{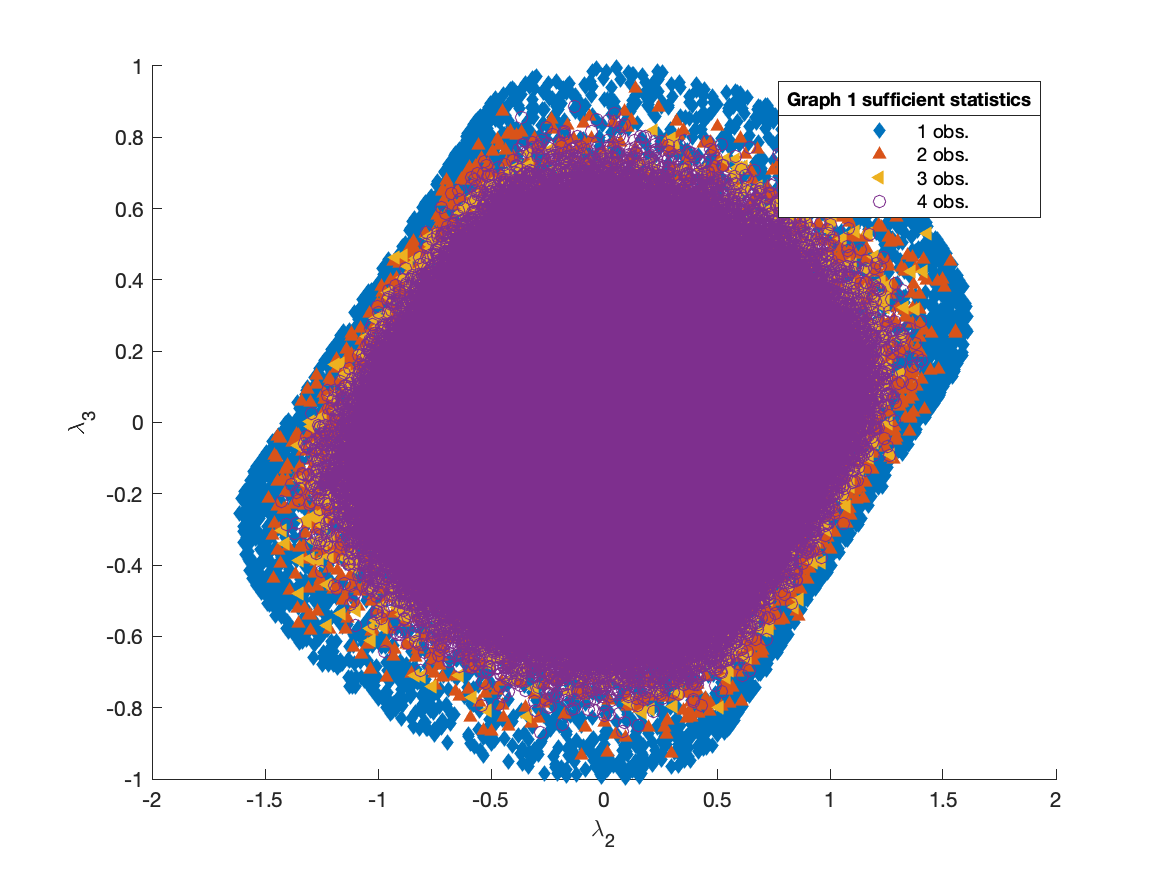}
  \caption{Cross-section of the cone of sufficient statistics for Graph~1 (all vertices in one color class), based on \cite[Table 2]{Uhler}.}
  \label{fig:graph1_suffStats}
\end{figure}

\begin{figure}[h]
  \centering
  \includegraphics[width=\textwidth]{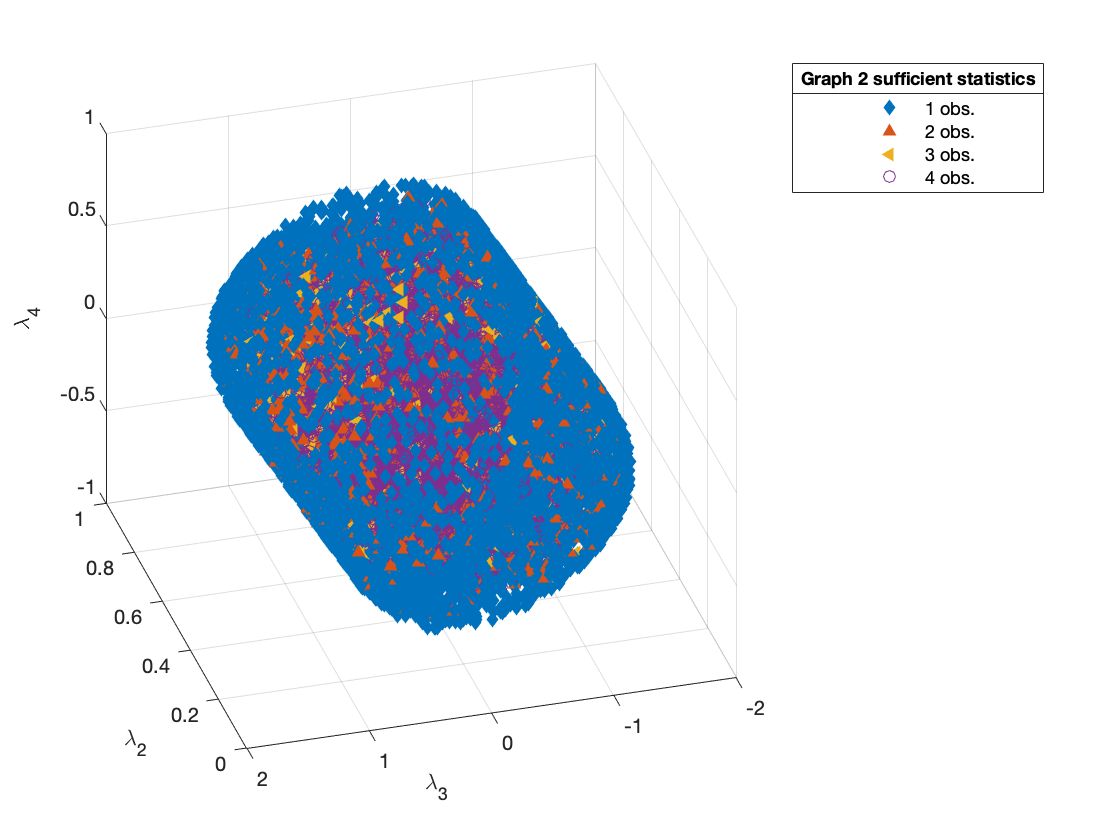}
  \caption{Cross-section of the cone of sufficient statistics for Graph~2, based on \cite[Table 2]{Uhler}.}
  \label{fig:graph2_suffStats}
\end{figure}

\begin{figure}[h]
  \centering
  \includegraphics[width=\textwidth]{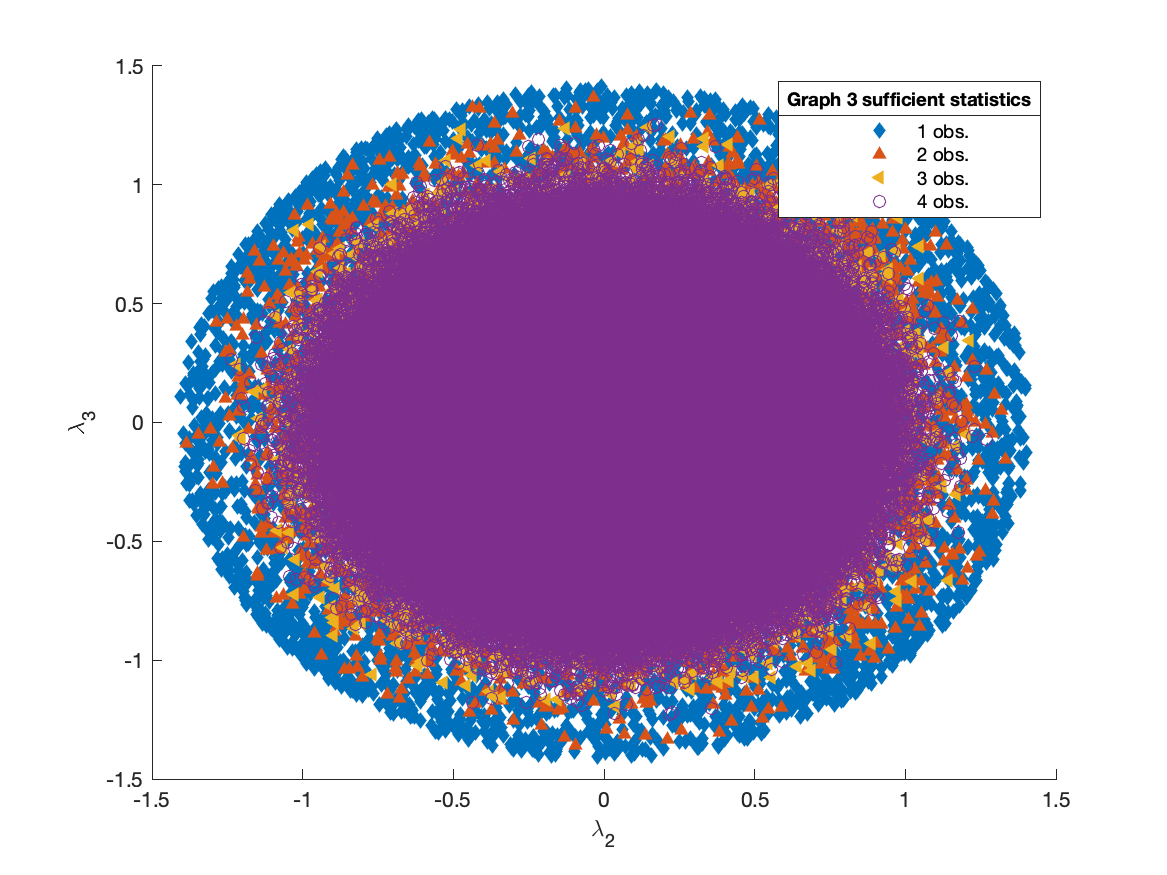}
  \caption{Cross-section of the cone of sufficient statistics for Graph~3 (all vertices in one color class), based on \cite[Table 2]{Uhler}.}
  \label{fig:graph3_suffStats}
\end{figure}

\begin{figure}[h]
  \centering
  \includegraphics[width=\textwidth]{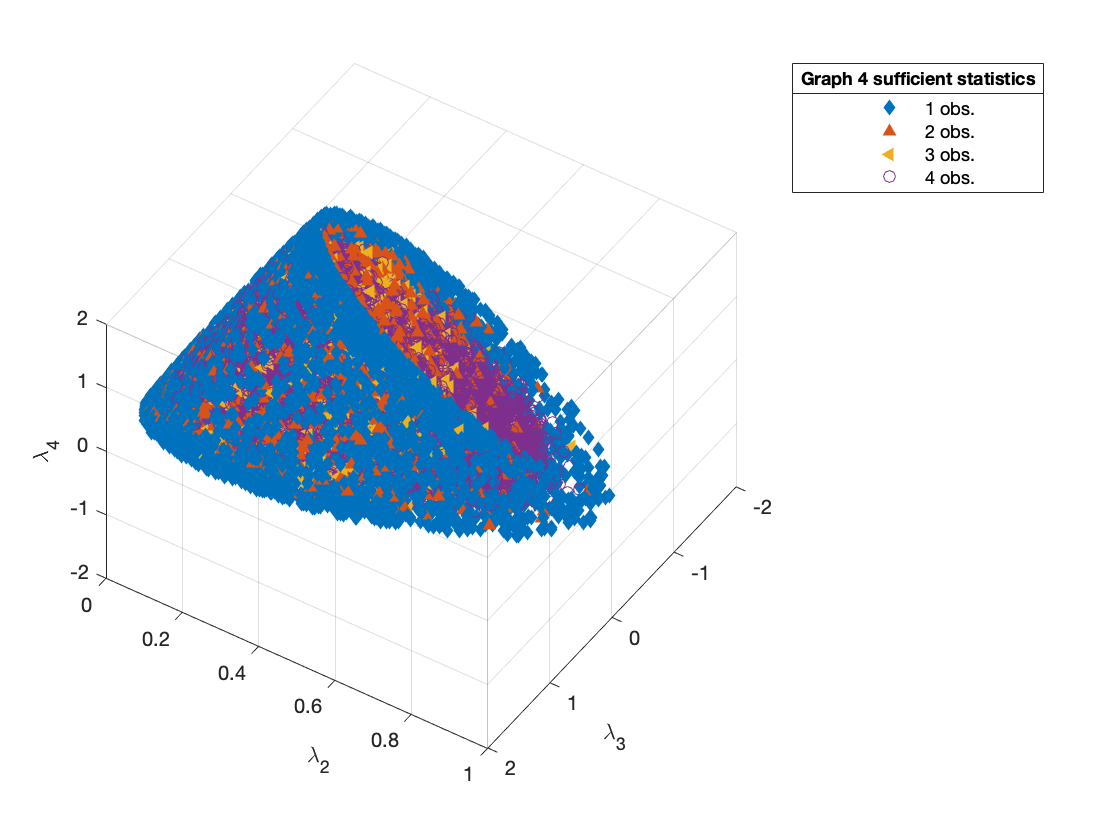}
  \caption{Cross-section of the cone of sufficient statistics for Graph~4, based on \cite[Table 2]{Uhler}.}
  \label{fig:graph4_suffStats}
\end{figure}

\begin{figure}[h]
  \centering
  \includegraphics[width=\textwidth]{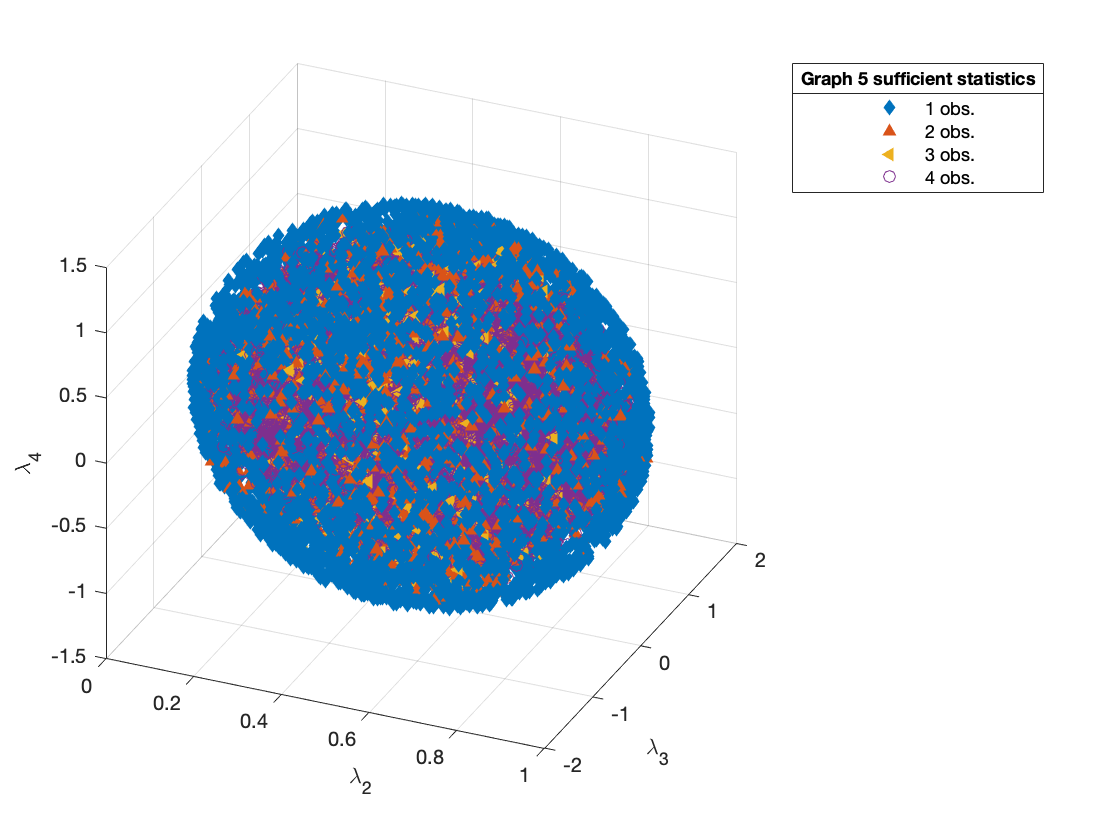}
  \caption{Cross-section of the cone of sufficient statistics for Graph~5, based on \cite[Table 2]{Uhler}.}
  \label{fig:graph5_suffStats}
\end{figure}

\begin{figure}[h]
  \centering
  \includegraphics[width=\textwidth]{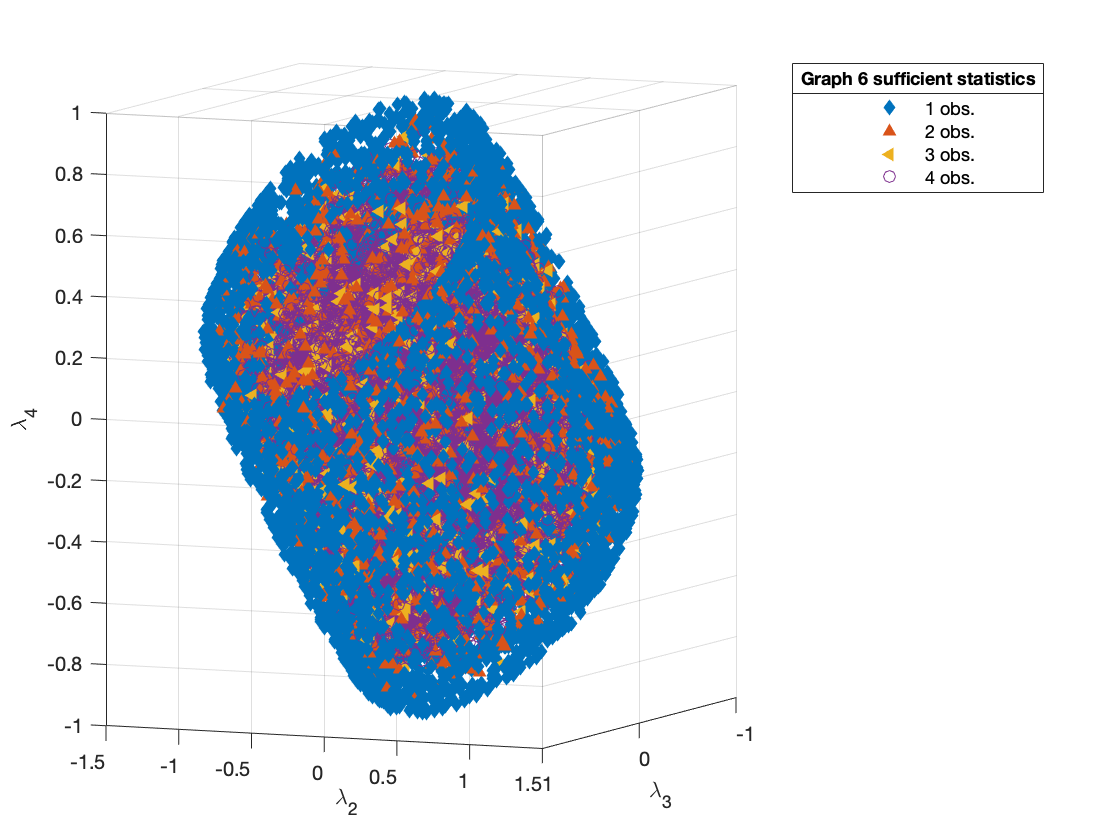}
  \caption{Cross-section of the cone of sufficient statistics for Graph~6 (all vertices in one color class), based on \cite[Table 2]{Uhler}.}
  \label{fig:graph6_suffStats}
\end{figure}

\subsection{Histogram of Cholesky parameters}\label{sec:hist}

\begin{figure}[h]
  \centering
  \includegraphics[width=\textwidth]{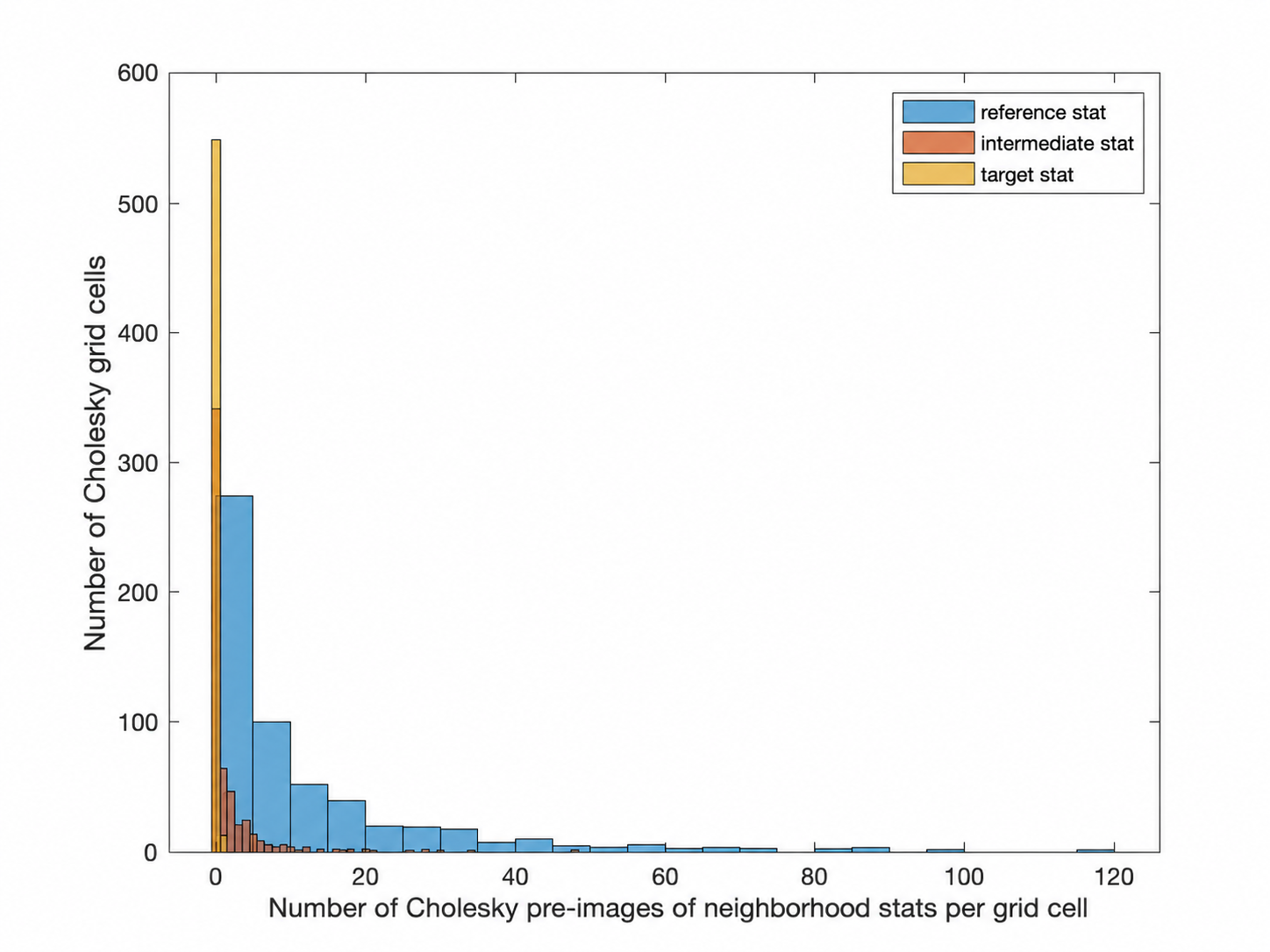}
  \caption{Distribution of deviations of the Cholesky pre-images of neighborhood points versus target, intermediate and reference stats for Graph~3 from Figure~\ref{fig:G3-SuffStats}.}
  \label{fig:graph3_hist_Cholesky}
\end{figure}

%\bibliography{socg-lipics-v2021-paper}

\end{document}